\newtheorem{theorem}{Theorem}[section]
\newtheorem{corollary}[theorem]{Corollary}
\newtheorem{lemma}[theorem]{Lemma}
\newtheorem{assumption}[theorem]{Assumption}
\theoremstyle{definition}
\newtheorem{remark}[theorem]{Remark}
\newtheorem{condition}{Condition}
\title{Convergence Analysis of Natural Gradient Descent for Over-parameterized Physics-Informed Neural Networks}
\author{Xianliang Xu$^1$, Ting Du$^1$, Wang Kong$^2$, Bin Shan$^2$, Ye Li$^{2,*}$  \and Zhongyi Huang$^1$}
\address{1. Tsinghua University, Beijing, China. \\
2. Nanjing University of Aeronautics and Astronautics, Nanjing, China.}
\thanks{$*$ Corresponding author.}
\begin{document}
	\maketitle

\begin{abstract}
	In the context of over-parameterization, there is a line of work demonstrating that randomly initialized (stochastic) gradient descent (GD) converges to a globally optimal solution at a linear convergence rate for the quadratic loss function. However, the learning rate of GD for training two-layer neural networks exhibits poor dependence on the sample size and the Gram matrix, leading to a slow training process. In this paper, we show that for training two-layer $\text{ReLU}^3$ Physics-Informed Neural Networks (PINNs), the learning rate can be improved from $\mathcal{O}(\lambda_0)$ to $\mathcal{O}(1/\|\bm{H}^{\infty}\|_2)$, implying that GD actually enjoys a faster convergence rate. Despite such improvements, the convergence rate is still tied to the least eigenvalue of the Gram matrix, leading to slow convergence. We then develop the
	positive definiteness of Gram matrices with general smooth activation functions and provide the convergence analysis of natural gradient descent (NGD) in training two-layer PINNs, demonstrating that the learning rate can be $\mathcal{O}(1)$ and at this rate, the convergence rate is independent of the Gram matrix. In particular, for smooth activation functions, the convergence rate of NGD is quadratic. Numerical experiments are conducted to verify our theoretical results.
\end{abstract}

\section{Introduction}

In recent years, neural networks have achieved remarkable breakthroughs in the fields of image recognition \cite{9}, natural language processing \cite{10}, reinforcement learning \cite{11}, and so on. Moreover, due to the flexibility and scalability of neural networks, researchers are paying much attention in exploring new methods involving neural networks for handling problems in scientific computing. One long-standing and essential problem in this area is solving partial differential equations (PDEs) numerically. Classical numerical methods, such as finite difference, finite volume and finite elements methods, suffer from the curse of dimensionality when solving high-dimensional PDEs. Due to this drawback, various methods involving neural networks have been proposed for solving different type PDEs \cite{13,14,15,16,20}. Among them, the most representative approach is Physics-Informed Neural Networks (PINNs) \cite{14}. In the framework of PINNs, one incorporate PDE constraints into the loss function and train the neural network with it. With the use of automatic differentiation, the neural network can be efficiently trained by first-order or second-order methods.

In the applications of neural networks, one inevitable issue is the selection of the optimization methods. First-order methods, such as gradient descent (GD) and stochastic gradient descent (SGD), are widely used in optimizing neural networks as they only calculate the gradient, making them computationally efficient. In addition to first-order methods, there has been significant interest in utilizing second-order optimization methods to accelerate training. These methods have proven to be applicable not only to regression problems, as demonstrated in \cite{12}, but also to problems related to PDEs, as shown in \cite{13,14}.  

As for the convergence aspect of the optimization methods, it has been shown that gradient descent algorithm can even achieve zero training loss under the setting of over-parameterization, which refers to a situation where a model has more parameters than necessary to fit the data \cite{3,17,21,22,23,24,32,33}. These works are based on the idea of neural tangent kernel (NTK)\cite{27}, which shows that training multi-layer fully-connected neural networks via gradient descent is equivalent to performing a certain kernel method as the width of every layer goes to infinity. As for the finite width neural networks, with more refined analysis, it can be shown that the parameters are closed to the initializations throughout the entire training process when the width is large enough. This directly leads to the linear convergence for GD. Despite these attractive convergence results, the learning rate depends on the sample size and the Gram matrix, so it needs to be sufficiently small to guarantee convergence in practice. However, doing so results in a slow training process. In contrast to first-order methods, the second-order method natural gradient descent (NGD) has been shown to enjoy fast convergence for the $L^2$ regression problems, as demonstrated in \cite{6,7}. However, the convergence of NGD in the context of training PINNs is still an open problem. In this paper, we demonstrate that when training PINNs, NGD indeed enjoys a faster convergence rate. 

\subsection{Contributions}
The main contributions of our work can are summarized as follows:
\begin{itemize}	
	\item For the PINNs, we simultaneously improve both the learning rate $\eta$ of gradient descent and the requirement for the width $m$. The improvements rely on a new recursion formula for gradient descent, which is similar to that for regression problems. Specifically, we can improve the learning rate $\eta=\mathcal{O}(\lambda_0)$ required in \cite{5} to $\eta=\mathcal{O}(1/\|\bm{H}^{\infty}\|_2)$ and the requirement for the width $m$, i.e. $m=\widetilde{\Omega}\left(\frac{(n_1+n_2)^2}{\lambda_0^4 \delta^3}\right)$, can be improved to $m=\widetilde{\Omega}\left(\frac{1}{\lambda_0^4}(\log(\frac{n_1+n_2}{\delta}))\right)$, where $\widetilde{\Omega}$ indicates that some terms involving $\log(m)$ are omitted. 
	
	\item We present a framework for demonstrating the positive definiteness of Gram matrices for a variety of commonly used smooth activation functions, including the logistic function, softplus function, hyperbolic tangent function, and others. This conclusion is not only applicable to the PDE we have considered but can also be naturally extended to other forms of PDEs.
	
	\item We provide the convergence results for natural gradient descent (NGD) in training over-parameterized two-layer PINNs with $\text{ReLU}^3$ activation functions and smooth activation functions. Due to the distinct optimization dynamics of NGD compared to GD, the learning rate can be $\mathcal{O}(1)$. Consequently, the convergence rate is independent of $n$ and $\lambda_0$, leading to faster convergence. Moreover, when the activation function is smooth, NGD can achieve a quadratic convergence rate.
\end{itemize}

\subsection{Related Works}

\textbf{First-order optimizers.} There are mainly two approaches to studying the optimization of neural networks and understanding why first-order methods can find a global minimum. One approach is to analyze the optimization landscape, as demonstrated in \cite{25,26}. It has been shown that gradient descent can find a global minimum in polynomial time if the optimization landscape possesses certain favorable geometric properties. However, some unrealistic assumptions in these works make it challenging to generalize the findings to practical neural networks. Another approach to understand the optimization of neural networks is by analyzing the optimization dynamics of first-order methods. For the two-layer ReLU neural networks, as shown in \cite{3}, randomly initialized gradient descent converges to a globally optimal solution at a linear rate, provided that the width 
$m$ is sufficiently large and no two inputs are parallel. Later, these results were extended to deep fully-connected
feedforward neural networks and ResNet with smooth activation functions \cite{17}. Results for both shallow and deep neural networks depend on the stability of the Gram matrices throughout the training process, which is crucial for convergence to the global minimum. In addition to regression and classification problems, \cite{5} demonstrated the convergence of the gradient descent for two-layer PINNs through a similar analysis of optimization dynamics. However, both \cite{3} and \cite{5} require a sufficiently small learning rate and a large enough network width to achieve convergence. In this work, we conduct a refined analysis of gradient descent for PINNs, resulting in milder requirements for the learning rate and network width.

\textbf{Second-order optimizers.} Although second-order methods possess better convergence rate, they are rarely used in training deep neural networks due to the prohibitive computational cost. As a variant of the Gauss-Newton method, natural gradient descent (NGD) is more efficient in practice. Meanwhile, as shown in \cite{6} and \cite{7}, NGD also enjoys faster convergence rate for the $L^2$ regression problems compared to gradient descent. \cite{13} proposed energy natural gradient descent for PINNs and deep Ritz method, demonstrating experimentally that this method yields solutions that are more accurate than those obtained through GD, Adam or BFGS. After observing the ill-conditioned loss landscape of PINNs, \cite{28} introduced a novel second-order optimizer, NysNewtonCG (NNCG), showing that NNCG can significantly improve the solution returned by Adam+L-BFGS. Moreover, under the assumption that the $\text{PŁ}^\star$-condition holds, \cite{28} demonstrated that the convergence rate of their algorithm is independent of the condition number, which is similar with our result. However, although the $\text{PŁ}^\star$-condition holds for over-parameterized neural networks in the context of regression problems \cite{29}, it remains unclear whether this condition holds for PINNs. In this paper, we provide the convergence analysis for NGD in training two-layer PINNs with $\text{ReLU}^3$ activation functions or smooth activation functions, showing that it indeed converges at a faster rate.

\subsection{Notations} We denote $[n]=\{1,2,\cdots, n\}$ for $n\in \mathbb{N}$. Given a set $S$, we denote the uniform distribution on $S$ by $Unif\{S\}$. We use $I\{E\}$ to denote the indicator function of the event $E$. For two positive functions $f_1(n)$ and $f_2(n)$, we use  $f_1(n)=\mathcal{O}(f_2(n))$, $f_2(n)=\Omega(f_1(n))$ or $f_1(n)\lesssim f_2(n)$ to represent $f_1(n)\leq Cf_2(n)$, where $C$ is a universal constant $C$. A universal constant means a constant independent of any variables. Throughout the paper, we use boldface to denote vectors. Given $x_1,\cdots, x_d\in \mathbb{R}$, we use $(x_1,\cdots, x_d)$ or $[x_1,\cdots, x_d]$ to denote a row vector with $i$-th component $x_i$ for $i\in [d]$ and then $(x_1,\cdots, x_d)^T\in \mathbb{R}^d$ is a column vector.

\subsection{Organization of this Paper}
In Section 2, we provide the problem setup for training two-layer PINNs. We then present the improved convergence results of gradient descent for PINNs in Section 3. In Section 4, we analyze the convergence of natural gradient descent in training two-layer PINNs with $\text{ReLU}^3$ activation functions and smooth activation functions. In Section 5, we conduct experiments to verify the theoretical results. The limitations are briefly discussed in Section 6 and we conclude in Section 7. 
All the detailed proofs and experiments are provided in the Appendix for readability and brevity. 

\section{Problem Setup}

In this section, we consider the same setup as \cite{5}, focusing on the PDE with the following form.
\begin{equation}
	\left\{
	\begin{aligned}
		&\frac{\partial u}{\partial x_0}(\bm{x})-\sum\limits_{i=1}^d \frac{\partial^2 u}{\partial x_i^2}(\bm{x})=f(\bm{x}), \, \bm{x}\in (0,T)\times \Omega, \\
		&u(\bm{x})=g(\bm{x}), \ \bm{x} \in \{0\}\times \Omega \cup [0,T]\times \partial \Omega,
	\end{aligned}
	\right.
\end{equation}
where $\Omega \subset \mathbb{R}^d$ is an open and bounded domain, $\bm{x}=(x_0,x_1,\cdots,x_d)^T\in \mathbb{R}^{d+1}$ and $x_0 \in [0,T]$ is the time variable. In the following, we assume that $\|\bm{x}\|_2\leq 1$ for $\bm{x}\in [0,T]\times \bar{\Omega}$ and $f,g$ are bounded continuous functions. 

Moreover, we consider a two-layer neural network of the following form.
\begin{equation}
	\phi(\bm{x};\bm{w},\bm{a})=\frac{1}{\sqrt{m}}\sum\limits_{r=1}^m a_r\sigma(\bm{w}_r^T \tilde{\bm{x}}),
\end{equation}
where $\bm{w}=(\bm{w}_1^T,\cdots, \bm{w}_m^T)^T\in \mathbb{R}^{m(d+2)}$, $\bm{a}=(a_1,\cdots, a_m)^T \in \mathbb{R}^m$ and for $r\in [m]$, $\bm{w}_r\in \mathbb{R}^{d+2}$ is the weight vector of the first layer, $a_r$ is the output weight and $\sigma(\cdot)$ is the $\text{ReLU}^3$ activation function. Here, $\tilde{\bm{x}}=(\bm{x}^T,1)^T\in \mathbb{R}^{d+2}$ is the augmented vector from $\bm{x}$ and in the following, we write $\bm{x}$ for $\tilde{\bm{x}}$ for brevity. 

Similar to that for the $L^2$ regression problems, we initialize the first layer vector $\bm{w}_r(0)\sim \mathcal{N}(\bm{0}, \bm{I})$, output weight $a_r \sim Unif(\{-1,1\})$ for $r\in [m]$ and fix the output weights.
In the framework of PINNs, given training samples $\{\bm{x}_p\}_{p=1}^{n_1}$ and $\{\bm{y}_j\}_{j=1}^{n_2}$ that are from interior and boundary respectively, 
we denote $s_p(\bm{w})$ and $h_j(\bm{w})$ by
\begin{equation}
	s_p(\bm{w})=\frac{1}{\sqrt{n_1}} \left( \frac{\partial \phi}{\partial x_0}(\bm{x}_p;\bm{w})-\sum\limits_{i=1}^d \frac{\partial^2 \phi}{\partial x_i^2}(\bm{x}_p;\bm{w})-f(\bm{x}_p) \right)
\end{equation}
and 
\begin{equation}
	h_j(\bm{w})=\frac{1}{\sqrt{n_2}}(\phi(\bm{y}_j;\bm{w})-g(\bm{y}_j))	.
\end{equation}
Then the empirical loss function can be written as
\begin{equation}
	L(\bm{w})=\frac{1}{2}\left(\|\bm{s}(\bm{w})\|_2^2+ \|\bm{h}(\bm{w})\|_2^2\right),
\end{equation}
where $\bm{s}(\bm{w})=(s_1(\bm{w}), \cdots, s_{n_1}(\bm{w}))^T\in \mathbb{R}^{n_1}$	
and $\bm{h}(\bm{w})=(h_1(\bm{w}), \cdots, h_{n_2}(\bm{w}))^T \in \mathbb{R}^{n_2}$.

The gradient descent updates the hidden weights by the following formulations:
\begin{equation}
	\begin{aligned}
		\bm{w}_r(k+1)&=\bm{w}_r(k)-\eta \frac{\partial L(\bm{w}(k))  }{\partial \bm{w}_r}
	\end{aligned}
\end{equation}
for all $r\in [m]$ and $k\in \mathbb{N}$, where $\eta>0$ is the learning rate.
The Gram matrix $\bm{H}(\bm{w})$ is defined as $\bm{H}(\bm{w})=\bm{D}^T \bm{D}$, where
\begin{equation}
	\begin{aligned}
		\bm{D}:=\left( \frac{\partial s_1(\bm{w})}{\partial \bm{w} },\cdots, \frac{\partial s_{n_1}(\bm{w})}{\partial \bm{w} }, \frac{\partial h_1(\bm{w})}{\partial \bm{w} }, \cdots, \frac{\partial h_{n_2}(\bm{w})}{\partial \bm{w} }\right).
	\end{aligned}	
\end{equation}

\section{Improved Results of GD for Two-Layer PINNs}

To simplify the analysis, we make the following assumptions on the training data.

\begin{assumption}
	For $p\in [n_1]$ and $j\in [n_2]$, $\|\bm{x}_p\|_2 \leq \sqrt{2}, \|\bm{y}_j\|_2 \leq \sqrt{2}$, where all inputs have been augmented.
\end{assumption}

\begin{assumption}
	No two samples in $\{\bm{x}_p\}_{p=1}^{n_1}\cup \{\bm{y}_j\}_{j=1}^{n_2}$ are parallel.
\end{assumption}

Under Assumption 3.2, Lemma 3.3 in \cite{5} implies that the Gram matrix $\bm{H}^{\infty}:=\mathbb{E}_{\bm{w}\sim \mathcal{N}(\bm{0},\bm{I}) }[\bm{H}(\bm{w})]$ is strictly positive definite and we let $\lambda_0=\lambda_{min}( \bm{H}^{\infty})$. Similar to the case of the regression problem in \cite{3}, $\bm{H}^{\infty}$ plays an important role in the optimization process. Specifically, under over-parameterization and random initialization, we have two facts that (1) at initialization $\|\bm{H}(0)-\bm{H}^{\infty}\|_2=\mathcal{O}(1/\sqrt{m})$ and (2) for any iteration $k\in \mathbb{N}$, $\|\bm{H}(k)-\bm{H}(0)\|_2=\mathcal{O}(1/\sqrt{m})$. The following two lemmas can be used to verify these two facts, which are crucial in the convergence analysis.

\begin{lemma}
	If $m=\Omega\left(\frac{d^4}{\lambda_0^2}\log\left(\frac{n_1+n_2}{\delta}\right)\right)$, we have that with probability at least $1-\delta$, $\|\bm{H}(0)-\bm{H}^{\infty}\|_2\leq \frac{\lambda_0}{4}$ and $\lambda_{min}(\bm{H}(0))\geq \frac{3}{4}\lambda_0$.
\end{lemma}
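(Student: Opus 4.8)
The statement asserts that with large enough width $m$, the initial Gram matrix $\bm{H}(0)$ for the PINN problem concentrates around its expectation $\bm{H}^{\infty}$ in spectral norm. The plan is to follow the same concentration strategy used for the $L^2$ regression problem (Lemma 1, i.e. Lemma 3.1 in \cite{3}), but now adapted to the PINN Gram matrix $\bm{H}(\bm{w}) = \bm{D}^T\bm{D}$ built from the derivative features $\partial s_p/\partial \bm{w}$ and $\partial h_j/\partial \bm{w}$. First I would write each entry $H_{\alpha\beta}(0)$ (for $\alpha,\beta$ indexing the $n_1+n_2$ residual quantities) as an average over the $m$ neurons: $H_{\alpha\beta}(0) = \frac{1}{m}\sum_{r=1}^m Z_r^{\alpha\beta}$, where each $Z_r^{\alpha\beta}$ depends only on the single random vector $\bm{w}_r(0)\sim\mathcal{N}(\bm{0},\bm{I})$. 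Crucially, because $\sigma$ is $\text{ReLU}^3$, its first and second derivatives $\sigma'(t)=3t^2 I\{t\ge 0\}$, $\sigma''(t)=6t\,I\{t\ge 0\}$ are bounded on the relevant domain (inputs have $\|\bm{x}\|_2\le\sqrt 2$, so $|\bm{w}_r^T\bm{x}|$ is sub-Gaussian), so each $Z_r^{\alpha\beta}$ is a bounded-or-sub-exponential random variable with controlled moments; this is where the $d^4$ factor enters, coming from the $d$-fold sum $\sum_{i=1}^d \partial^2\phi/\partial x_i^2$ inside $s_p$ and the resulting polynomial-in-$d$ growth of the feature norms.

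The key steps, in order: (i) Fix a pair $(\alpha,\beta)$ and show $\mathbb{E}[Z_r^{\alpha\beta}] = H_{\alpha\beta}^{\infty}$ by definition of $\bm{H}^{\infty}$, then apply a concentration inequality (Hoeffding for the bounded pieces, or Bernstein for the sub-exponential pieces after a truncation argument) to get $|H_{\alpha\beta}(0) - H_{\alpha\beta}^{\infty}| \le t$ with probability at least $1 - 2\exp(-c\, m t^2/d^{4})$ for an appropriate universal constant $c$. (ii) Union-bound over all $(n_1+n_2)^2$ entries, choosing $t = \lambda_0/(4(n_1+n_2))$, so that the Frobenius norm $\|\bm{H}(0)-\bm{H}^{\infty}\|_F \le (n_1+n_2)\cdot t = \lambda_0/4$, hence $\|\bm{H}(0)-\bm{H}^{\infty}\|_2 \le \lambda_0/4$; matching the failure probability to $\delta$ forces $m = \Omega\!\left(\frac{d^4}{\lambda_0^2}\log\frac{n_1+n_2}{\delta}\right)$ as claimed. (iii) Conclude the eigenvalue bound by Weyl's inequality: $\lambda_{\min}(\bm{H}(0)) \ge \lambda_{\min}(\bm{H}^{\infty}) - \|\bm{H}(0)-\bm{H}^{\infty}\|_2 \ge \lambda_0 - \lambda_0/4 = \tfrac34\lambda_0$.

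I expect the main obstacle to be step (i): unlike the plain ReLU regression case where the per-neuron summand is an indicator times a bounded inner product (hence trivially bounded), here the summands involve products of $\sigma'$ and $\sigma''$ evaluated at $\bm{w}_r^T\bm{x}$, which are \emph{polynomials} in a Gaussian and therefore only sub-exponential, not bounded. Handling this cleanly requires either a truncation/clipping of $\bm{w}_r$ on the event $\{\|\bm{w}_r(0)\|_2 \le C\sqrt{d+2}\log(\cdot)\}$ (which holds with high probability by Gaussian norm concentration) followed by Hoeffding on the clipped variables, or a direct Bernstein bound with carefully tracked Orlicz-$\psi_1$ norms; the bookkeeping of the $d$-dependence through the Laplacian term $\sum_{i=1}^d \partial^2\phi/\partial x_i^2$ is the delicate part and is what produces the $d^4$ rather than $d$. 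A secondary, more technical subtlety — analogous to Remark 1 for Lemma 2 — is ensuring measurability of the relevant suprema/variables, though for the \emph{initial} matrix $\bm{H}(0)$ (a fixed finite sum of functions of finitely many Gaussians) this is not an issue and can be dispatched immediately. Once the per-entry sub-exponential concentration is established with the right $d$-power, the union bound and Weyl's inequality are routine.
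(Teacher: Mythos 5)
Your high-level strategy (entry-wise concentration, then union bound, then Weyl's inequality) matches the paper's approach, but your quantitative bookkeeping in step (ii) does not actually produce the claimed bound on $m$, and this is not a detail but a gap in the argument. You claim per-entry concentration $\lvert H_{\alpha\beta}(0)-H_{\alpha\beta}^\infty\rvert\le t$ with failure probability $2\exp(-cmt^2/d^4)$, then choose $t=\lambda_0/(4(n_1+n_2))$ and union-bound over $(n_1+n_2)^2$ entries. Plugging your $t$ into your exponent forces $m=\Omega\bigl((n_1+n_2)^2 d^4\lambda_0^{-2}\log((n_1+n_2)/\delta)\bigr)$, which is $(n_1+n_2)^2$ times larger than the lemma's $m=\Omega(d^4\lambda_0^{-2}\log((n_1+n_2)/\delta))$. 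The missing ingredient is that the PINN residuals $s_p$ and $h_j$ are themselves normalized by $1/\sqrt{n_1}$ and $1/\sqrt{n_2}$, so each entry of $\bm{H}$ carries an extra prefactor of order $1/n_1$, $1/\sqrt{n_1n_2}$, or $1/n_2$. In the paper's proof this cancels exactly: the per-entry deviation scales like $\frac{d^2}{n_1\sqrt{m}}\sqrt{\log(1/\delta)}$, so summing squares over $n_1^2$ entries gives $\|\bm{H}(0)-\bm{H}^\infty\|_F^2\lesssim\frac{d^4}{m}\log((n_1+n_2)/\delta)$ with no $n_1,n_2$ blow-up. Your exponent $\exp(-cmt^2/d^4)$ silently drops this normalization and so gives the wrong dependence.

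A secondary issue is your classification of the summands as "bounded-or-sub-exponential.'' The per-neuron summand $X_r$ (an inner product of two derivative features, each involving $\sigma'''(\bm{w}_r^T\bm{x})\|\bm{w}_{r1}\|_2^2$) grows like $\|\bm{w}_r(0)\|_2^4$, and $\|\bm{w}_r(0)\|_2^4$ has infinite $\psi_1$ (sub-exponential) Orlicz norm; it is only sub-Weibull of order $1/2$, with $\|X_r\|_{\psi_{1/2}}\lesssim d^2$. So Bernstein's inequality does not directly apply, and "carefully tracked Orlicz-$\psi_1$ norms'' will not close the argument. The paper instead invokes a sub-Weibull concentration inequality (its Lemma 8, a $\psi_\alpha$ tail bound for $\alpha<1$), yielding the two-term rate $\frac{d^2}{\sqrt m}\sqrt{\log(1/\delta)}+\frac{d^2}{m}\log^2(1/\delta)$ per centered sum. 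Your alternative of truncating on $\{\|\bm{w}_r(0)\|_2^2\le C(d+2)\log(\cdot)\}$ and then applying Hoeffding is a viable workaround and is what the paper uses in the companion Lemma 5 (stability of $\bm{H}$), but for this initialization lemma you would still need to track the normalization carefully, and you do not carry either calculation through.
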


\begin{remark}
	Under the premise of deriving the same conclusion as Lemma 3.3, Lemma 3.5 in \cite{5} requires that $m=\tilde{\Omega}\left(\frac{(n_1+n_2)^4}{(n_1n_2)^2\lambda_0^2} \left( \log(\frac{1}{\delta})\right)^7\right)$, where some terms involving $\log(m)$ are omitted. In contrast, on one hand, our conclusion is independent of $n_1$ and $n_2$, and on the other hand, our conclusion exhibits a clear dependence on $d$. Moreover, the method in \cite{5} involves truncating the Gaussian distribution and then applying Hoeffding's inequality, which is quite complicated. In contrast, we utilize the concentration inequality for sub-Weibull random variables, which serves as a simple framework for this class of problems.
\end{remark}

\begin{lemma}
	Let $R\in (0,1]$, if $\bm{w}_1(0),\cdots,\bm{w}_m(0)$ are i.i.d. generated from $\mathcal{N}(\bm{0},\bm{I})$, then with probability at least $1-\delta-n_1e^{-mR}$, the following holds. For any set of weight vectors $\bm{w}_1,\cdots, \bm{w}_m\in \mathbb{R}^{d+1}$ that satisfy for any $r\in [m]$, $\|\bm{w}_r-\bm{w}_r(0)\|_2<R$, then 
	\begin{equation}
		\|\bm{H}(\bm{w})-\bm{H}(0)\|_F < CM^2R,	
	\end{equation}
	where $M=2(d+2)\log(2m(d+2)/\delta)$ and $C$ is a universal constant.
\end{lemma}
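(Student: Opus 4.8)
The plan is to follow the strategy of Lemma~2 (the $L^2$-regression analog), but since the PINN Gram matrix $\bm H(\bm w)=\bm D^T\bm D$ is built from the derivatives of the residuals $s_p,h_j$ — which for $\mathrm{ReLU}^3$ involve the third derivative $\sigma'''$ — I would carry out the estimate entry by entry \emph{and} neuron by neuron. Index the $n_1+n_2$ residuals so that $a\in\{1,\dots,n_1\}$ refers to $s_a$ and $a\in\{n_1+1,\dots,n_1+n_2\}$ to $h_{a-n_1}$; write $\bm d_a(\bm w)$ for the $a$-th column of $\bm D$, $\bm d_a^r(\bm w)=\partial(\cdot)/\partial\bm w_r$ for its $r$-th block, and $n_a\in\{n_1,n_2\}$ for the associated normalization. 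Then $H_{ab}(\bm w)=\sum_{r=1}^m\langle\bm d_a^r(\bm w),\bm d_b^r(\bm w)\rangle$ and, with $\bm\Delta_a^r:=\bm d_a^r(\bm w)-\bm d_a^r(0)$,
\[ H_{ab}(\bm w)-H_{ab}(0)=\sum_{r=1}^m\Big(\langle\bm\Delta_a^r,\bm d_b^r(\bm w)\rangle+\langle\bm d_a^r(0),\bm\Delta_b^r\rangle\Big), \]
so it suffices to bound $\|\bm d_a^r(\bm w)\|_2$ uniformly over the ball and to bound $\|\bm\Delta_a^r\|_2$ in a way that separates an $O(R)$ Lipschitz part from a ``jump'' part supported on few neurons.

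I would first set up two high-probability events over the initialization. (i) Applying the Gaussian tail bound to each of the $m(d+2)$ coordinates $w_{r,k}(0)\sim\mathcal N(0,1)$ and a union bound gives $\max_r\|\bm w_r(0)\|_2^2\le M$ with probability at least $1-\delta$ (since then $\max_{r,k}|w_{r,k}(0)|^2\le 2\log(2m(d+2)/\delta)$), hence $\|\bm w_r\|_2\le 2\sqrt M$ for all $r$ everywhere on the ball. (ii) For each interior point $\bm x_p$, the inputs being augmented gives $1\le\|\bm x_p\|_2\le\sqrt2$ and $\bm w_r(0)^T\bm x_p/\|\bm x_p\|_2\sim\mathcal N(0,1)$, so $S_p:=\#\{r:|\bm w_r(0)^T\bm x_p|<R\|\bm x_p\|_2\}$ is a sum of i.i.d.\ Bernoulli variables of mean at most $R$; a Chernoff bound gives $S_p\lesssim mR$ with probability at least $1-e^{-mR}$, and a union bound over $p\in[n_1]$ makes this hold for all interior $p$ with probability at least $1-n_1e^{-mR}$. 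The use of (ii) is that if $|\bm w_r(0)^T\bm x_p|\ge R\|\bm x_p\|_2$ then $\bm w_r^T\bm x_p$ keeps the sign of $\bm w_r(0)^T\bm x_p$ throughout the ball, so $\sigma'''(\bm w_r^T\bm x_p)=\sigma'''(\bm w_r(0)^T\bm x_p)$ there; thus $S_p$ bounds the number of neurons whose $\sigma'''$-factor can change.

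Next is the deterministic estimate on the intersection of the two events. For $\sigma=\mathrm{ReLU}^3$ one has $\sigma'(z)=3(\max\{z,0\})^2$, $\sigma''(z)=6\max\{z,0\}$, $\sigma'''(z)=6\,I\{z\ge0\}$; only $\sigma'''$ is discontinuous, and it appears in $\partial s_p/\partial\bm w_r$ but not in $\partial h_j/\partial\bm w_r=\tfrac{a_r}{\sqrt{n_2m}}\sigma'(\bm w_r^T\bm y_j)\bm y_j$. Expanding
\[ \frac{\partial s_p}{\partial\bm w_r}=\frac{a_r}{\sqrt{n_1m}}\Big(\sigma''(\bm w_r^T\bm x_p)w_{r,0}\bm x_p+\sigma'(\bm w_r^T\bm x_p)\bm e_0-\sigma'''(\bm w_r^T\bm x_p)\Big(\sum_{i=1}^{d}w_{r,i}^2\Big)\bm x_p-2\sigma''(\bm w_r^T\bm x_p)\sum_{i=1}^{d}w_{r,i}\bm e_i\Big) \]
($\bm e_k$ the $k$-th coordinate vector) and using $|\sigma'(z)|\le3z^2$, $|\sigma''(z)|\le6|z|$, $|\sigma'''|\le6$, $\|\bm w_r\|_2\le2\sqrt M$, $\|\bm x_p\|_2,\|\bm y_j\|_2\le\sqrt2$, yields the uniform bound $\|\bm d_a^r(\bm w)\|_2\lesssim M/\sqrt{n_am}$. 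For $\bm\Delta_a^r$, the $\sigma'$- and $\sigma''$-terms are controlled by local Lipschitz continuity ($\sigma''$ is $6$-Lipschitz; $\sigma'$ is Lipschitz with constant $\lesssim\sqrt M$ on the relevant range) together with $\|\bm w_r-\bm w_r(0)\|_2<R$, each contributing $\lesssim\sqrt M R/\sqrt{n_am}$, while the $\sigma'''$-term contributes $0$ unless $r\in S_a$ (with $S_a=\emptyset$ for a boundary residual), in which case it is $\lesssim M/\sqrt{n_am}$; so $\|\bm\Delta_a^r\|_2\lesssim\sqrt M R/\sqrt{n_am}+(M/\sqrt{n_am})\,I\{r\in S_a\}$. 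Substituting these bounds into the identity for $H_{ab}(\bm w)-H_{ab}(0)$, summing over $r$ and using $\sum_r I\{r\in S_a\}=S_a\lesssim mR$, gives $|H_{ab}(\bm w)-H_{ab}(0)|\lesssim (M^{3/2}R+M^2R)/\sqrt{n_an_b}\lesssim M^2R/\sqrt{n_an_b}$. Finally, since $\sum_a 1/n_a=2$ (the $n_1$ interior indices contribute $1$, the $n_2$ boundary indices contribute $1$), squaring and summing over all pairs $(a,b)$ gives $\|\bm H(\bm w)-\bm H(0)\|_F^2=\sum_{a,b}|H_{ab}(\bm w)-H_{ab}(0)|^2\lesssim M^4R^2$, i.e.\ $\|\bm H(\bm w)-\bm H(0)\|_F<CM^2R$ for a universal $C$.

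I expect the main obstacle to be forcing the $\sigma'''$ contribution to be \emph{linear} in $R$. A column-level Cauchy--Schwarz (bound $\|\bm d_a(\bm w)-\bm d_a(0)\|_2$, then multiply by $\|\bm d_b(\bm w)\|_2$) only gives $O(M^2\sqrt R)$, since the $\lesssim mR$ sign-flipping neurons enter $\|\bm d_a(\bm w)-\bm d_a(0)\|_2^2$ through a sum of squared blocks; the neuron-by-neuron, entry-by-entry accounting above is precisely what recovers the linear rate, because there each of the $\lesssim mR$ bad neurons contributes a term of size $M^2/(m\sqrt{n_an_b})$. A second subtlety, already flagged in the Remark after Lemma~2, is measurability: ``$\|\bm H(\bm w)-\bm H(0)\|_F<CM^2R$ for every $\bm w$ in the ball'' is an uncountable conjunction and $\bm w\mapsto\bm H(\bm w)$ is discontinuous through $\sigma'''$. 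This is handled by noting that the derived bound depends only on $\max_r\|\bm w_r(0)\|_2$ and the counts $\{S_p\}_{p\in[n_1]}$, which are measurable functions of the initialization and do not depend on which $\bm w$ in the ball is chosen; equivalently, as in that Remark one invokes pointwise measurability of the relevant indicator class (Chapter~8 of \cite{4}) to replace the supremum over the ball by one over a countable dense subset before applying the Chernoff/Bernstein bounds.
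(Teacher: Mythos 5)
Your proposal is correct, and at the level of the underlying mechanism it is the same as the paper's: work entry by entry and neuron by neuron, invoke the high-probability event $\max_r\|\bm w_r(0)\|_2^2\le M$ to truncate the Gaussian tails, use the sign-flip sets $S_p$ controlled by Bernstein/Chernoff with the key union-bound probability $1-\delta-n_1e^{-mR}$, and then split each per-neuron contribution into an $O(\sqrt M R)$ Lipschitz part and an $O(M)$ jump part supported on $|S_p|\lesssim mR$ neurons, which yields the linear-in-$R$ bound. The difference is purely organizational: the paper expands $\langle\bm d_a^r(\bm w),\bm d_b^r(\bm w)\rangle-\langle\bm d_a^r(0),\bm d_b^r(0)\rangle$ explicitly into nine terms grouped into six "derivative-order classes" ($\sigma''\sigma''$, $\sigma''\sigma'$, $\sigma'\sigma'$, $\sigma'''\sigma''$, $\sigma'''\sigma'$, $\sigma'''\sigma'''$) and bounds each class separately, whereas you use the bilinear identity $\langle\bm d_a^r(\bm w),\bm d_b^r(\bm w)\rangle-\langle\bm d_a^r(0),\bm d_b^r(0)\rangle=\langle\bm\Delta_a^r,\bm d_b^r(\bm w)\rangle+\langle\bm d_a^r(0),\bm\Delta_b^r\rangle$ and bound $\|\bm\Delta_a^r\|_2$ and $\|\bm d_a^r\|_2$ directly. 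Your decomposition is a bit cleaner (one product rule, one block-norm bound) and makes explicit why the jump contribution is linear rather than $\sqrt R$ — a point you correctly identify as the crux and that a column-level Cauchy–Schwarz would miss. One minor imprecision: the $\sigma'''$ block of $\bm\Delta_a^r$ is not literally zero when $r\notin S_a$, since $\|\bm w_{r1}\|_2^2$ still moves; but as you implicitly account for, that residual is $O(\sqrt M R/\sqrt{n_a m})$ and is absorbed into the Lipschitz term, so the final bound $\|\bm\Delta_a^r\|_2\lesssim \sqrt M R/\sqrt{n_a m}+(M/\sqrt{n_a m})\,I\{r\in S_a\}$ and the ensuing $\|\bm H(\bm w)-\bm H(0)\|_F\lesssim M^2R$ stand. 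Your treatment of measurability mirrors the paper's Remark and is adequate.
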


\begin{remark}
	Lemma 3.6 in \cite{5} shows that when $\|\bm{w}_r-\bm{w}_r(0)\|_2\leq R=\tilde{\mathcal{O}}\left( \frac{\lambda_0 \delta }{(n_1+n_2)(\log m)^3}\right)$ holds for all $r\ in [m]$, then $\|\bm{H}(\bm{w})-\bm{H}(0)\|_2\leq \frac{\lambda_0}{4}$. In contrast, Lemma 3.5 only requires $R=\mathcal{O}\left(\frac{\lambda_0}{d^2(\log(m/\delta)^2} \right)$ to reach same result. 
\end{remark}

For the $L^2$ regression problem, as shown in \cite{3}, the convergence of gradient descent requires that the learning rate $\eta=\mathcal{O}(\lambda_0/n^2)$, where $n$ is the sample size of the regression problem. It is evident that this requirement on the learning rate is difficult to satisfy in practical scenarios, since $\lambda_0$ is unknown and $n^2$ is too large . For PINNs, \cite{5} follows the methodology of \cite{3}, thus inheriting similarly stringent requirements on the learning rate. 
By investigating a new decomposition method for the residual, we arrive at our main result.

\begin{theorem}
	Under Assumption 3.1 and Assumption 3.2, if we set the number of hidden nodes 
	\[m=\Omega\left(\frac{d^{8}}{\lambda_0^4}\log^6 \left(\frac{md}{\delta}\right)  \log \left(\frac{n_1+n_2}{\delta}\right) \right) \]
	and the learning rate $\eta =\mathcal{O}\left(\frac{1}{\|\bm{H}^{\infty}\|_2}\right)$, then with probability at least $1-\delta$ over the random initialization, the gradient descent algorithm satisfies 
	\begin{equation}
		\left\| \begin{pmatrix} \bm{s}(k)\\ \bm{h}(k) \end{pmatrix} \right \|_2^2 \leq \left(1-\frac{\eta\lambda_0}{2} \right)^k \left\| \begin{pmatrix} \bm{s}(0)\\ \bm{h}(0) \end{pmatrix} \right \|_2^2	
	\end{equation}
	for all $k\in \mathbb{N}$.
\end{theorem}

\begin{remark}
	It may be confusing that \cite{5} has used the same method in \cite{3}, yet it only requires $\eta=\mathcal{O}(\lambda_0)$. Actually, it is because that the loss function of PINN has been normalized. If we let $n_1=n_2=n$ and $\widetilde{\bm{H}}^{\infty}$ be the Gram matrix induced by unnormalized loss function of PINN, then $\lambda_{min}(\bm{H}^{\infty})=\lambda_{min}(\widetilde{\bm{H}}^{\infty})/n$, leading to the convergence rate similar to that of regression problem. At this point, due to the normalization of loss function, $\|\bm{H}^{\infty}\|_2$ can be bounded by the trace of $\bm{H}^{\infty}$, which is an explicit constant that is independent of the sample size $n_1,n_2$. Therefore, in practice, we can set the learning rate to satisfy the theoretical convergence requirement, bridging the gap between theory and practice.
\end{remark}

\begin{remark}
	Regarding the impact of dimensionality on the convergence of GD for PINNs, Theorem 3.7 consists of two parts: one is explicit, namely $d^8$, and the other is implicit, specifically $\lambda_0^4$, whose relationship with dimensionality remains unclear. The explicit impact arises from the form of the PDE. For instance, the PDE (1) we consider contains $\mathcal{O}(d)$ terms. In concurrent work, \cite{30} investigated the impact of dimensionality and the order of PDEs on convergence under the setting of continuous gradient flow. Specifically, for PDEs of order 2, the form they considered includes $\mathcal{O}(d^2)$ terms, and the explicit dependence on dimensionality is $d^{28}$. On the one hand, \cite{30} only addressed the continuous gradient flow case, while the discrete case requires more refined analysis as stated in \cite{3}. On the other hand, our results can naturally extend to the PDEs they considered, where the explicit dependence on dimensionality is $d^{16}$, which is better than the result in \cite{30}. Investigating the lower bounds of the smallest eigenvalue of the NTK for PINNs, similar to what has been done for deep ReLU neural networks in \cite{31}, represents a promising direction for future research.
\end{remark}

\section{Convergence of NGD for Two-Layer PINNs}

Although we have improved the learning rate of gradient descent for PINNs, it may still be necessary to set the learning rates to be sufficiently small for some complex PDEs. Because, although for all PDEs, $\bm{tr}(\bm{H}^{\infty})$ is an explicit constant, it depends on the form of the PDE, and for complex PDEs, it may be quite large. Moreover, the convergence rate $1-\frac{\eta\lambda_0}{2}$ also depends on $\lambda_0$, which depends on the sample size and may be extremely small. \cite{6} and \cite{7} have provided the convergence results for natural gradient descent (NGD) in training over-parameterized two-layer neural networks for $L^2$ regression problems. They showed that the maximal learning rate can be $\mathcal{O}(1)$ and the convergence rate is independent of $\lambda_0$, which result in a faster convergence rate. However, their methods cannot generalize directly to PINNs. In the section, we conduct the convergence analysis of NGD for PINNs and demonstrate that it results in a faster convergence rate for PINNs compared to gradient descent.  

In this section, we consider the same setup as described in Section 2. Specifically, we focus on the PDE of the form given in (1) and follow the same initialization as described in Section 2. During the training process, we fix the output weight $\bm{a}$ and update the hidden weights via NGD. The optimization objective is the empirical loss function presented in (5), which is defined as follows:
\begin{equation}
	L(\bm{w})=\frac{1}{2}\left(\|\bm{s}(\bm{w})\|_2^2+ \|\bm{h}(\bm{w})\|_2^2\right),
\end{equation}

The NGD gives the following update rule:
\begin{equation}
	\bm{w}(k+1)=\bm{w}(k)-\eta \bm{J}(k)^T\left(\bm{J}(k)\bm{J}(k)^{T}\right)^{-1}\begin{pmatrix}
		\bm{s}(k)\\ \bm{h}(k)
	\end{pmatrix},	
\end{equation}
where 
\[\bm{J}(k)=\left( \bm{J}_1(k)^T,\cdots, \bm{J}_{n_1+n_2}(k)^T \right)^T\in\mathbb{R}^{(n_1+n_2)\times m(d+2)}\]
is the Jacobian matrix for the whole dataset and $\eta>0$ is the learning rate. 
Specifically, for $p\in [n_1]$,
\begin{equation}
	\bm{J}_p(k)=\left[\left(\frac{\partial s_p(k)}{\partial \bm{w}_1}\right)^T, \cdots, \left(\frac{\partial s_p(k)}{\partial \bm{w}_m}\right)^T\right]\in \mathbb{R}^{1\times m(d+2)}	
\end{equation}
and for $j\in [n_2]$,
\begin{equation}
	\bm{J}_{n_1+j}(k)=\left[\left(\frac{\partial h_j(k)}{\partial \bm{w}_1}\right)^T, \cdots, \left(\frac{\partial h_j(k)}{\partial \bm{w}_m}\right)^T\right]\in \mathbb{R}^{1\times m(d+2)}.	
\end{equation}

\begin{remark}
	We note that \cite{6} and \cite{7} have independently and concurrently established the convergence of NGD in the context of regression problems. The difference lies in the fact that \cite{6} focused on ReLU activation functions, whereas \cite{7} considered smooth activation functions and consistently set the learning rate to $1$. Here, following \cite{6}, we refer to this approach as NGD. In \cite{7}, the authors derived this method based on NTK kernel regression and termed it the Gram-Gauss-Newton (GGN) method. 
\end{remark}

\begin{remark}
	The classical Gauss-Newton method is given by 
\begin{equation*}
\bm{w}(k+1)=\bm{w}(k)- \left(\bm{J}(k)^{T}\bm{J}(k)\right)^{-1}\bm{J}(k)^T\begin{pmatrix}
	\bm{s}(k)\\ \bm{h}(k),
\end{pmatrix}	
\end{equation*}
which is different from our NGD method (11). The Gram matrix $\bm{J}(k)^{T}\bm{J}(k)$ of classical Gauss-Newton method will be extremely high-dimensional and singular for large network parameters, while our NGD  $\bm{J}(k)\bm{J}(k)^{T}$ won't.
\end{remark}

For the activation function of the two-layer neural network
\begin{equation}
	\phi(\bm{x};\bm{w},\bm{a})=\frac{1}{\sqrt{m}}\sum\limits_{r=1}^m a_r\sigma(\bm{w}_r^T \bm{x}),	
\end{equation}
we consider settings where $\sigma(\cdot)$ is either the $\text{ReLU}^3$ activation function or a smooth activation function satisfying the following assumption.
\begin{assumption}
	There exists a constant $c>0$ such that $\sup_{z\in \mathbb{R}}|\sigma^{(3)}(z)|\leq c$ and for any $z,z^{'}\in \mathbb{R}$,
	\begin{equation}
		|\sigma^{(k)}(z)-\sigma^{(k)}(z^{'})|\leq c|z-z^{'}|,
	\end{equation}
	where $k\in \{0,1,2,3\}$. Moreover, $\sigma(\cdot)$ is analytic and is not a polynomial function. We also assume that for any positive integer $n\geq 2$, $\lim\limits_{x\to +\infty} \sigma^{(n)}(x)/\phi(x)=c_n\neq 0$, where the function $\phi(\cdot)$ satisfies that 
	\begin{equation}
		\lim\limits_{x\to +\infty}\phi(x) = 0, \lim\limits_{x\to +\infty}x\cdot\frac{\phi(bx)}{\phi(x)} = 0
	\end{equation} 
	holds for any constant $b>1$.
\end{assumption}

\begin{lemma}
	If no two samples in $\{\bm{x}_p\}_{p=1}^{n_1}\cup \{\bm{y}_j\}_{j=1}^{n_2}$ are parallel, then the Gram matrix $\bm{H}^{\infty}$ is strictly positive definite for activation functions that satisfy Assumption 4.3, i.e., $\lambda_0:=\lambda_{min}(\bm{H}^{\infty})>0$.
\end{lemma}

\begin{remark}
Assumption 4.3 holds for various commonly used activation functions, including logistic function $\sigma(z)=1/(1+e^{-z})$ (with $\phi(z)=e^{-z}$), softplus function $\sigma(z)=\log(1+e^{z})$ (with $\phi(z)=e^{-z}$), hyperbolic tangent function $\sigma(z)=(e^{z}-e^{-z})/(e^{z}+e^{-z})$ (with $\phi(z)=e^{-2z}$), swish function $\sigma(z)=z/(1+e^{-z})$ (with $\phi(z)=ze^{-z}$) and others. Although $\sin(z)$ and $\cos(z)$
do not satisfy the condition (16), it can be seen from Remark 10.2 in the appendix that the Gram matrices induced by them are also strictly positive definite. Compared to the $\text{ReLU}^3$ activation function, these smooth activation functions are more popular for PINNs because solving PDEs typically requires high-order derivatives.
\end{remark}

Unlike the approach for gradient descent, \cite{6} focus on the change of the Jacobian matrix for NGD rather than the Gram matrix.  More precisely, they demonstrate that $\bm{J}(\bm{w})$ is stable with respect to $\bm{w}$, where $\bm{J}(\bm{w})$ is the Jacobian matrix with weight vector $\bm{w}=(\bm{w}_1^T,\cdots, \bm{w}_m^T)^T$. Roughly speaking, they show that when $\|\bm{w}-\bm{w}(0)\|_2$ is small, then $\|\bm{J}(\bm{w})-\bm{J}(0)\|_2$ is also proportionately small.
However, this approach is not applicable to PINNs, because the loss function involves derivatives. Roughly speaking, the stability considered in \cite{6} is more global in nature, whereas ours is local. Since the subsequent conclusions require the boundedness of local weights, we do not use this stability. Moreover, from Theorem 1 in \cite{6}, we can see that this stability imposes additional constraints on the learning rate. Therefore, we instead focus on the stability of $\bm{J}(\bm{w})$ with respect to each individual weight vector $\bm{w}_r$, which provides a more targeted approach. 

\begin{lemma}
	Let $R\in (0,1]$, if $\bm{w}_1(0),\cdots, \bm{w}_m(0)$ are i.i.d. generated $\mathcal{N}(\bm{0},\bm{I})$, then with probability at least $1-P(\delta,m,R)$ the following holds. For any set of weight vectors $\bm{w}_1,\cdots,\bm{w}_m \in \mathbb{R}^{d+2}$ that satisfy for any $r\in [m]$, $\|\bm{w}_r-\bm{w}_r(0)\|_2< R$, then 
	
	(1) when $\sigma(\cdot)$ is the $\text{ReLU}^3$ activation function, we have that
	\begin{equation}
		\|\bm{J}(\bm{w})-\bm{J}(0)\|_2\leq CM\sqrt{R},	
	\end{equation} 
	where $C$ is a universal constant, $M=2(d+2)\log(2m(d+2)/\delta)$ and 
	\begin{equation}
		P(\delta,m,R)=\delta+n_1e^{-mR};	
	\end{equation}
	
	(2) when $\sigma(\cdot)$ satisfies Assumption 4.3, we have that 
	\begin{equation}
		\|\bm{J}(\bm{w})-\bm{J}(0)\|_2\leq CdR	
	\end{equation}
	for $m\geq \log^2(1/\delta)$, where $C$ is a universal constant and $P(\delta,m,R)=\delta$.
\end{lemma}

With the stability of Jacobian matrix, we can derive the following convergence results.

\begin{theorem}
	Let $L(k)=L(\bm{w}(k))$, then the following conclusions hold.
	
	(1) When $\sigma(\cdot)$ is the $\text{ReLU}^3$ activation function, under Assumption 3.2, we set 
	\[m=\Omega \left(\frac{1}{(1-\eta)^2}\frac{d^{8}}{\lambda_0^4}  \log^6 \left(\frac{md}{\delta} \right) \log \left(\frac{n_1+n_2}{\delta} \right)\right) \]
	and $\eta\in (0,1)$, then with probability at least $1-\delta$ over the random initialization for all $k\in \mathbb{N}$
	\begin{equation}
		L(k)\leq (1-\eta)^k L(0).	
	\end{equation}
	
	(2) When $\sigma(\cdot)$ satisfies Assumption 4.3, under Assumption 3.2, we set
	\[m=\Omega\left(\frac{1}{1-\eta} \frac{d^6}{\lambda_0^3}\log^2 \left(\frac{md}{\delta} \right) \log \left(\frac{n_1+n_2}{\delta} \right) \right)\]
	and $\eta\in (0,1)$, then with probability at least $1-\delta$ over the random initialization for all $k\in \mathbb{N}$
	\begin{equation}
		L(k)\leq (1-\eta)^k L(0).
	\end{equation}
\end{theorem}

In Theorem 4.7, the requirements of $m$ with $\text{ReLU}^3$ and smooth activation functions exhibit different dependencies on $\lambda_0$ and $d$. The discrepancy is primarily due to the distinct formulations presented in (16) and (18) of Lemma 4.5.

\begin{remark}
	We first compare our results with those of NGD for $L^2$ regression problems. Given that the convergence results are the same, our focus shifts to examining the necessary conditions for the width $m$. As demonstrated in \cite{6} and \cite{7}, it is required that $m=\Omega\left(\frac{n^4}{\lambda_0^4 \delta^3} \right) $ for ReLU activation function and $m=\Omega\left( \max\left\{ \frac{n^4}{\lambda_0^4},\frac{n^2d\log(n/\delta)}{\lambda_0^2}  \right\} \right)$ for smooth activation function. Clearly, our result has a worse dependence on $d$, which is inevitable due to the involvement of derivatives in the loss function. Moreover, our requirement for $m$ appears to be almost independent of $n$, primarily because our loss function has been normalized. With smooth activation functions, in addition to the dependence on $d$, Theorem 4.7 (2) only requires that $m=\Omega(\lambda_0^{-3})$. However, \cite{7} demands a more stringent condition, requiring that $m=\Omega(\lambda_0^{-4})$. 
	
	Continuing our analysis, we contrast our results with those of GD for PINNs. Roughly speaking, \cite{5} has shown that when $\sigma(\cdot)$ is the $\text{ReLU}^3$ activation function, $m=\widetilde{\Omega}\left( \frac{(n_1+n_2)^2}{\lambda_0^4 \delta^3}\right)$ and $\eta=\mathcal{O}(\lambda_0)$, then the convergence result (9) holds. It is evident that our result, i.e, Theorem 4.7 (1), has a milder dependence on $n_1,n_2$ and $\delta$. Furthermore, the learning rate and convergence rate are independent of $\lambda_0$, resulting in faster convergence. 
	
	Comparing with our results in Section 3, the requirement for $m$ in Theorem 4.7 (1) is the same as in Theorem 3.8, when we make $\eta$ less close to $1$. On the other hand, since $\eta=\mathcal{O}(1)$ and the convergence rate only depends on $\eta$, NGD can lead to faster convergence than GD.  
\end{remark}

Note that as $\eta$ approaches $1$, the width $m$ tends to infinity, thus, the convergence results in Theorem 4.7 become vacuous. In fact, when $\eta=1$, NGD can enjoy a second-order convergence rate even though $m$ is finite, provided that $\sigma(\cdot)$ satisfies Assumption 4.3.
\begin{corollary}
	Under Assumption 3.2 and Assumption 4.3, set $\eta=1$ and
	\[m=\Omega\left( \frac{d^6}{\lambda_0^3}\log^2 \left(\frac{md}{\delta} \right) \log \left(\frac{n_1+n_2}{\delta} \right) \right),\]
	then with probability at least $1-\delta$, we have
	\[ \left \|\begin{pmatrix}  \bm{s}(t+1) \\ \bm{h}(t+1)\end{pmatrix} \right\|_2  \leq \frac{CB^4}{\sqrt{m\lambda_0^3}} \left \|\begin{pmatrix}  \bm{s}(t) \\ \bm{h}(t)\end{pmatrix} \right\|_2^2\]
	for all $t\in \mathbb{N}$, where $C$ is a universal constant and $B=\sqrt{2(d+2)\log(2m(d+2)/\delta)}+1$.
\end{corollary}

\begin{remark}
	\cite{7} has demonstrated the second-order convergence for regression problems with smooth activation functions. Specifically, it is shown in \cite{7} that
	\[\|\bm{y}-\bm{u}(t+1)\|_2\lesssim \frac{n^{3/2}}{\sqrt{m}\lambda_0^2 } \|\bm{y}-\bm{u}(t)\|_2^2. \]
	Actually, when applying our method used in Corollary 4.9, we can get a more satisfactory result as follows.
	\[\|\bm{y}-\bm{u}(t+1)\|_2\lesssim \frac{n^{3/2}}{ \sqrt{m\lambda_0^3}} \|\bm{y}-\bm{u}(t)\|_2^2. \]
\end{remark}

Instead of inducing on the convergence rate of the empirical loss function, as shown in Condition 1, we perform induction on the movements of the hidden weights as follows.

\begin{condition}
	At the $t$-th iteration, we have $\|\bm{w}_r(t)\|_2 \leq B$ and
	\[\|\bm{w}_r(t)-\bm{w}_r(0)\|_2 \leq \frac{CB^2\sqrt{L(0)}}{\sqrt{m}\lambda_0}:=R^{'}\] 
	for all $r\in [m]$, where $C$ is a universal constant and $B=\sqrt{2(d+2)\log \left(\frac{2m(d+2)}{\delta} \right) }+1.$
\end{condition}

With Condition 1, we can directly derive the following convergence rate of the empirical loss function.

\begin{corollary}
	If Condition 1 holds for $t=0,\cdots, k$ and $R^{'}\leq R$ and $R^{''}\lesssim \sqrt{1-\eta}\sqrt{\lambda_0}$, then  
	\[L(t)\leq (1-\eta)^t L(0),\]
	holds for $t=0,\cdots, k$, where $R$ is the constant in Lemma 4.5 and 
	$R^{''}=CM\sqrt{R}$ is in (16) when $\sigma$ is the $\text{ReLU}^3$ activation function, $R^{''}=CdR$ is in (18) when $\sigma$ satisfies Assumption 4.3.
\end{corollary}

\section{Experimental Results}
\paragraph{Comparison to Existing Optimizers.} We report the relative $L^2$-error of our NGD optimizer, the SGD optimizer, the Adam optimizer, and the L-BFGS optimizer in Table 1. The experiments are conducted on four problems, the 1D Poisson equation with reference silution $u_{ref}=\sin(x)$, the 2D Poisson equation with reference solution $u _{ref}=\sin(\pi x)\sin(\pi y)$, the 1D Heat equation with reference solution $u _{ref}=e^{-\frac{\pi^2t}{4}}\sin(\pi x)$ and the 2D Helmholtz equation with wave number $k=4$, and the reference solution $u _{ref}=\sin(\pi x)\sin(k\pi y)$. We implement all numerical experiments on a single NVIDIA RTX 4090 GPU and all codes are conducted by the Pytorch framework. The four problems, as well as the configurations used in these examples are listed in Appendix A. 

\begin{table}[htbp]
	\caption{Relative $L^{2}$-error of Different Optimizers.}
	\label{table_l2}
	\centering
	\begin{tabular}{c|cccc}
		\hline
		& SGD      & Adam     & L-BFGS   & NGD      \\ \hline
		1D Poisson   & 1.28e-01 \tiny{± 4.31e-02} & 6.46e-02 \tiny{± 1.43e-02} & 2.63e-04 \tiny{± 8.95e-05} & \textbf{1.67e-05} \tiny{± 9.07e-06} \\ \hline
		2D Poisson   & 1.45e-01 \tiny{± 7.34e-02}  & 5.32e-03 \tiny{± 9.79e-04} & 3.17e-03 \tiny{± 8.66e-04} & \textbf{1.12e-04} \tiny{± 6.99e-05} \\ \hline
		1D Heat      & 5.43e-01 \tiny{± 9.98e-02} & 6.91e-03 \tiny{± 1.31e-03} & 4.98e-03 \tiny{± 1.83e-03} & \textbf{3.42e-04} \tiny{± 7.52e-05} \\ \hline
		2D Helmholtz & 8.48e+00 \tiny{± 6.37e+00} & 1.06e+00 \tiny{± 8.11e-01} & 3.35e+00 \tiny{± 1.94e+00} & \textbf{6.67e-03} \tiny{± 1.89e-03} \\ \hline
	\end{tabular}
\end{table}

\paragraph{Learning Rate Study.} We first examine the behavior of convergence at different learning rates to demonstrate that the training process remains stable and achieves consistent convergence. 
Table~\ref{table_lr} presents the convergence performance of the NGD method with different learning rates on the 2D Poisson equation.
The experimental results demonstrate that NGD maintains stable convergence across a wide range of learning rates without significant degradation in final accuracy.
This characteristic, which markedly outperforms conventional optimization approaches, clearly illustrates the strong robustness of the NGD method to hyperparameter selection.

\begin{table}[htbp]
	\caption{Relative $L^{2}$-error Comparison Across Different Learning Rates $\eta$.}
	\label{table_lr}
	\centering
	\begin{tabular}{c|ccccccc}
		\hline
		learning rate $\eta$& 1.0 & 0.5      & 0.1     & 0.05   & 0.01   & 0.005   & 0.001      \\ \hline
		error & 1.97e-03 & 1.18e-03 & 3.24e-04 & 1.87e-04 & 1.12e-04 & 1.22e-04 & 1.68e-04 \\ \hline
	\end{tabular}
\end{table}

\paragraph{Network Width Study.} In addition, a comparative analysis of the model performance is performed with progressively increasing network widths.
The experiments are conducted on the 1D Poisson equation with $f(x)=16sin(4x)$, and the exact solution is given by $u(x)=sin(4x)$.
The NGD optimizer is trained for 200 epochs.
The results empirically validate the hypothesis that wider architectures improve representational capacity, leading to improved accuracy while maintaining training efficiency.
Table~\ref{table_width} presents the variation of $L^{2}$-error with respect to network width.
The results demonstrate that increasing network width leads to significant accuracy improvements.
This trend validates that wider architectures exhibit enhanced function approximation capabilities.

\begin{table}[htbp]
	\caption{Relative $L^{2}$-error Comparison Across Different Network Width $m$.}
	\label{table_width}
	\centering
	\begin{tabular}{c|cccccccc}
		\hline
		$m$  & 20 & 40 & 80 & 160 & 320 & 640 & 1280 & 2560 \\ \hline
		error & 1.59-03 &  7.21e-04 & 5.18e-04 & 3.8e-04 & 3.08e-04 & 2.76e-04 & 1.78e-04 & 7.05e-05 \\ \hline
	\end{tabular}
\end{table}

\section{Limitations}
The computational cost of NGD is mainly on the $(J\cdot J^\top)^{-1}$ with the Jacobian matrix $J$ is of size $n \times p$, where $n = n_1 + n_2$ is the training data size and $p = m(d+2)$ is the number of trainable parameters. 
So NGD will be quite expensive for large amount of training data, which may arise in high-dimensional PDEs. 
As a result, several cost-effective variants have been proposed, such as K-FAC \cite{12} and mini-batch NGD. 
We only proved the convergence results for the full-batch NGD in this paper, and it would be interesting to investigate the convergence of these methods for PINNs in future works.

\section{Conclusion and Discussion}
In this paper, we have improved the conditions required for the convergence of gradient descent for PINNs, showing that gradient descent actually achieves a better convergence rate. Furthermore, we demonstrate that natural gradient descent can find the global optima of two-layer PINNs with $\text{ReLU}^3$ or smooth activation functions for a class of second-order linear PDEs. Compared to gradient descent, natural gradient descent exhibits a faster convergence rate and its maximal learning rate is $\mathcal{O}(1)$. Additionally, extending the convergence analysis to deep neural networks and studying the generalization bounds of trained PINNs are important directions for future research.  


\bibliographystyle{IEEEtran}
\bibliography{NGD.bib}	

\section*{Appendix}	

\section{Experimental implementation}
In this section, we provide several examples to demonstrate the superiority of the natural gradient descent(NGD) approach.
The configurations used in these examples are listed in Table~\ref{table_config}.
We report the relative $L^{2}$-error of the NGD optimizer, the SGD optimizer, the Adam optimizer, and the L-BFGS optimizer in Table~\ref{table_l2}.
The relative $L^{2}$-error is defined as follows:
\begin{equation}
	L^{2} = \frac{\vert \vert \hat{u}-u_{\mathrm{ref}}\vert \vert_{2}}{\vert \vert u_{\mathrm{ref}} \vert \vert_{2}},
\end{equation}
where $\hat{u}$ denotes the predicted solution and $u_{\mathrm{ref}}$ represents the reference solution.

\begin{table}[htbp]
	\caption{Configurations of Different Equations.}
	\label{table_config}
	\centering
	\begin{tabular}{c|cccccc}
		\hline
		& $N_{f}$ & $N_{b}$ & batch size & hidden layers & hidden neurons & activation function \\ \hline
		1D Poisson   & 500     & 2       & 100        & 1             & 128            & $\tanh(\cdot)$      \\ \hline
		2D Poisson   & 1000    & 200     & 100        & 1             & 128            & $\tanh(\cdot)$      \\ \hline
		1D Heat      & 1000    & 200     & 100        & 1             & 128            & $\tanh(\cdot)$      \\ \hline
		2D Helmholtz & 1000    & 200     & 100        & 1             & 128            & $\tanh(\cdot)$      \\ \hline
	\end{tabular}
\end{table}

\subsection{1D Poisson Equation}
First, we begin with a toy example of the 1D Poisson equation to display the performance of the NGD method.
The equation is defined in the domain $\Omega=[0,\pi]$,
\begin{equation}
	\left.\left\{
	\begin{array}
		{c}-u^{\prime \prime}(x)=f(x),\quad x\in \Omega, \\
		u(x)=0,\quad x\in \partial \Omega.
	\end{array}\right.\right.
\end{equation}
The true solution is set as $u(x)=\sin(x)$, which allows us to derive the corresponding force term $f(x)=\sin(x)$.
We randomly sample $N_{f}=500$ points in the domain $\Omega$.
For the neural network architecture, we employ a single hidden layer model with $128$ units and $\tanh(\cdot)$ activation functions across all computations.
The NGD optimizer is trained for $100$ epochs, while the LBFGS optimizer is run for $1$ epoch with a maximum of $500$ iterations per epoch. All other optimizers are run for $10,000$ epochs for comprehensive comparison.
The relative $L^{2}$-error is $1.67e-05$ for the NGD optimizer.
Figure~\ref{comparison_poisson_1d} shows the predicted solution for the 1D Poisson equation alongside the reference solution.
The prediction is in excellent agreement with the reference solution, highlighting the superior performance of the NGD method.
Figure~\ref{loss_poisson_1d} depicts the loss decay during the training process, we can see that the NGD method achieves a quite small loss at the very beginning.

\begin{figure}[htbp]
	\includegraphics[width=\textwidth]{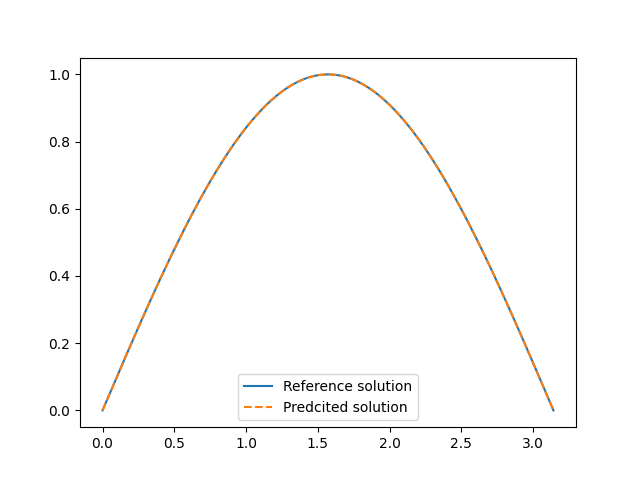}
	\caption{Reference Solution and Predicted Solution for the 1D Poisson Equation.}
	\label{comparison_poisson_1d}
\end{figure}

\begin{figure}[htbp]
	\includegraphics[width=\textwidth]{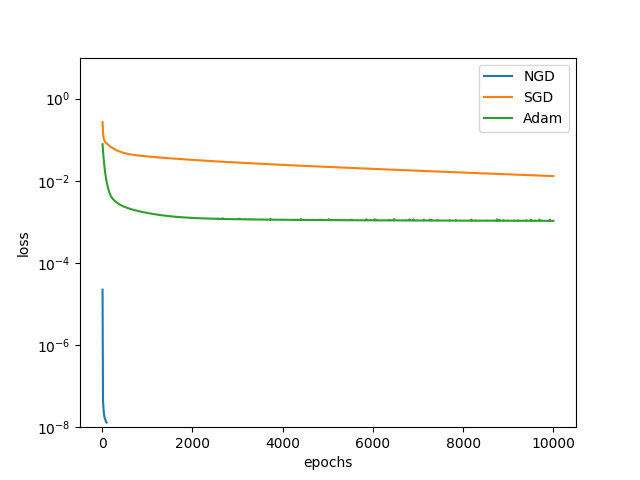}
	\caption{Loss Decay for the 1D Poisson Equation.}
	\label{loss_poisson_1d}
\end{figure}

\subsection{2D Poisson Equation}
We consider a 2D Poisson equation in the domain $\Omega=[0,1]\times[0,1]$,
\begin{equation}
	\left.\left\{
	\begin{array}
		{c}-\frac{\partial^{2} u}{\partial x^{2}}-\frac{\partial^{2} u}{\partial y^{2}}=f(x,y),\quad(x,y)\in \Omega, \\
		u(x,y)=0,\quad(x,y)\in \partial \Omega.
	\end{array}\right.\right.
\end{equation}
The true solution is given by $u(x, y) = \sin(\pi x) \sin(\pi y)$, and the force term $f(x,y)=2\pi^{2}\sin(\pi x)\sin(\pi y)$ is consequently derived.

We sample $N_{b}=200$ random points on the boundary $\partial \Omega$ and $N_{f}=1,000$ random points within the domain $\Omega$.
The neural network used for computation consists of two hidden layers, each containing $128$ units with $\tanh(\cdot)$ activation functions.
We run the NGD method for $200$ epochs, while the L-BFGS method is trained for $1$ epoch with a maximum of $5,000$ iterations per epoch. All other optimization methods are trained for $20,000$ epochs.
The resulting relative $L^{2}$-error is $1.12e-04$.
Figure~\ref{comparison_poisson_2d} illustrates the prediction of the 2D Poisson equation, along with the exact solution and the absolute error between them.
It is clear that the predicted solution closely matches the reference solution, further demonstrating the superior performance of the NGD method.
Figure~\ref{loss_poisson_2d} shows the loss decay during training, demonstrating that the NGD method converges significantly faster than other optimization methods.
\begin{figure}[htbp]
	\includegraphics[width=\textwidth]{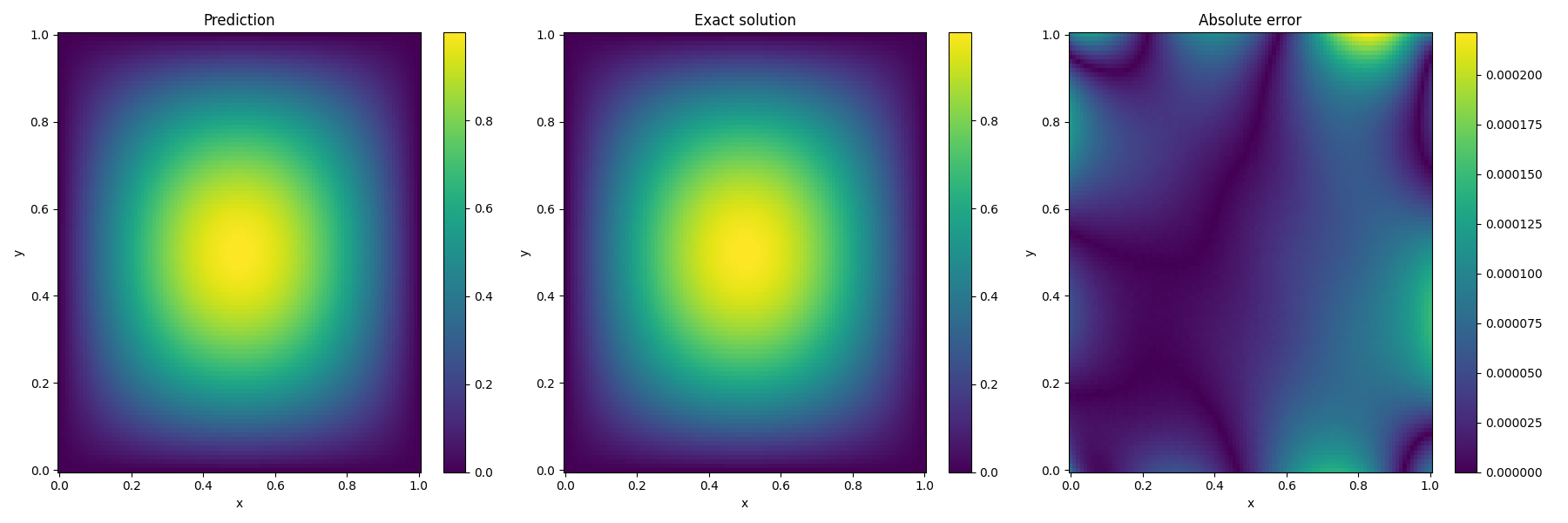}
	\caption{NGD Prediction and Analysis for the 2D Poisson Equation.}
	\label{comparison_poisson_2d}
\end{figure}

\begin{figure}[htbp]
	\includegraphics[width=\textwidth]{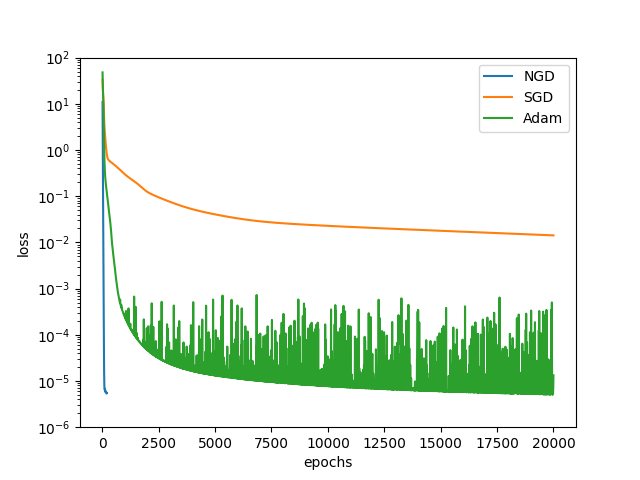}
	\caption{Loss Decay for the 2D Poisson Equation.}
	\label{loss_poisson_2d}
\end{figure}

\subsection{1D Heat Equation}
We consider the 1D heat equation
\begin{equation}
	\left.\left\{
	\begin{array}
		{l}\frac{\partial u(t,x)}{\partial t}=\frac{1}{4} \frac{\partial^{2}u(t,x)}{\partial x^{2}},\quad(t,x)\in \left[0, 1\right]^{2}, \\
		u(0,x)=\sin(\pi x),\quad x \in \left[0,1\right], \\
		u(t,x)=0,\quad (t,x) \in \left[0, 1\right] \times \{0, 1\}.
	\end{array}\right.\right.
\end{equation}
The reference solution is analytically defined by $u(t,x)=\exp(-\frac{\pi^{2}t}{4})\sin(\pi x)$.
We generate $N_{b}=200$ random sampling points for the boundary and initial conditions and $N_{f}=1,000$ random points in the domain $\Omega$ to evaluate the PDE residual.
The neural network used for all computations consists of $1$ hidden layer, each containing $128$ neurons with $\tanh(\cdot)$ activation functions.
To train the model, we run the NGD method for $200$ epochs and the L-BFGS method for $1$ epoch with a maximum of $17$ iterations per epoch, and other optimizers are trained for $10,000$ epochs.
The resulting relative $L^{2}$-error is $3.42e-04$.
Figure~\ref{comparison_heat_1d} provides a visual comparison between the predicted and exact solutions for the 1D heat equation, along with the corresponding absolute error distribution.
The high degree of accuracy in the predicted solution demonstrates the effectiveness of the NGD method, showing its ability to capture the solution with remarkable precision.
Figure~\ref{loss_heat_1d} shows the loss curve over the course of training for the 1D heat equation.
Notably, the NGD method rapidly reduces the loss, reaching a low value in the training process, demonstrating its efficiency in optimization.
\begin{figure}[htbp]
	\includegraphics[width=\textwidth]{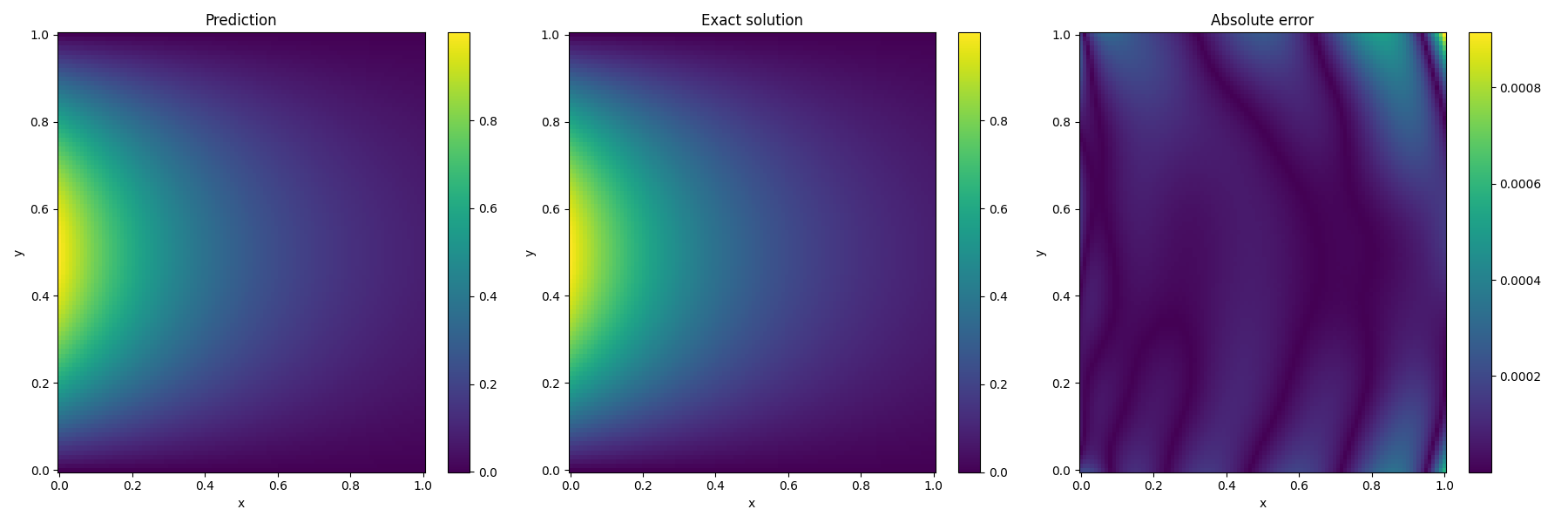}
	\caption{NGD Prediction and Analysis for the 1D Heat Equation.}
	\label{comparison_heat_1d}
\end{figure}

\begin{figure}[htbp]
	\includegraphics[width=\textwidth]{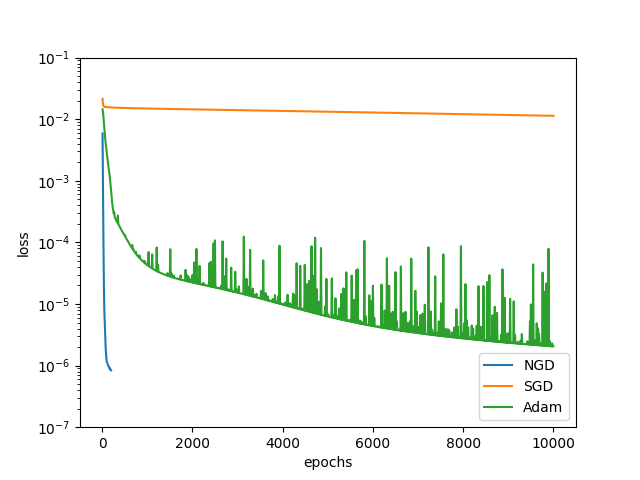}
	\caption{Loss Decay for the 1D Heat Equation.}
	\label{loss_heat_1d}
\end{figure}

\subsection{2D Helmholtz Equation}
We deal with the 2D helmholtz equation on the domain $\Omega=[0,1]\times[0,1]$ given by
\begin{equation}
	\left\{
	\begin{array}
		{l}\frac{\partial^{2} u}{\partial x^{2}}+\frac{\partial^{2} u}{\partial y^{2}}+k^{2} u(x,y)=f(x,y),\quad(x,y)\in \Omega, \\
		u(x,y)=0,\quad(x,y)\in \partial \Omega.
	\end{array}
	\right.
\end{equation}
The reference solution for $k=4$ is $u(x, y) = \sin(\pi x) \sin(4 \pi y)$, and the force term $f(x,y)$ can be easily computed.
To evaluate the performance of the NGD approach on the 2D Helmholtz equation, we generate $N_{b}=200$ random boundary points on $\partial \Omega$ and $N_{f}=1,000$ random points inside the domain $\Omega$.
The neural network employed consists of $1$ hidden layers with $128$ neurons per layer, utilizing $\tanh(\cdot)$ activation functions.
Training is carried out for $200$ epochs using the NGD method and $1$ epoch with a maximum of $15$ iterations per epoch. All other optimizers are run for $20,000$ epochs for comparison.
The computed relative $L^{2}$-error is $6.67e-03$, which is $3$ orders of magnitude lower than those of the remaining optimizers.
Figure~\ref{comparison_helmholtz_2d} illustrates the predicted solution along with the exact reference solution and the absolute error distribution.
The results indicate that the NGD method effectively captures the oscillatory nature of the Helmholtz equation, achieving a high level of accuracy.
Figure~\ref{loss_helmholtz_2d} shows the evolution of the loss function during training for the 2D Helmholtz equation.
In particular, the NGD method demonstrates rapid convergence, achieving a low loss value at the end of the training process.
\begin{figure}[htbp]
	\includegraphics[width=\textwidth]{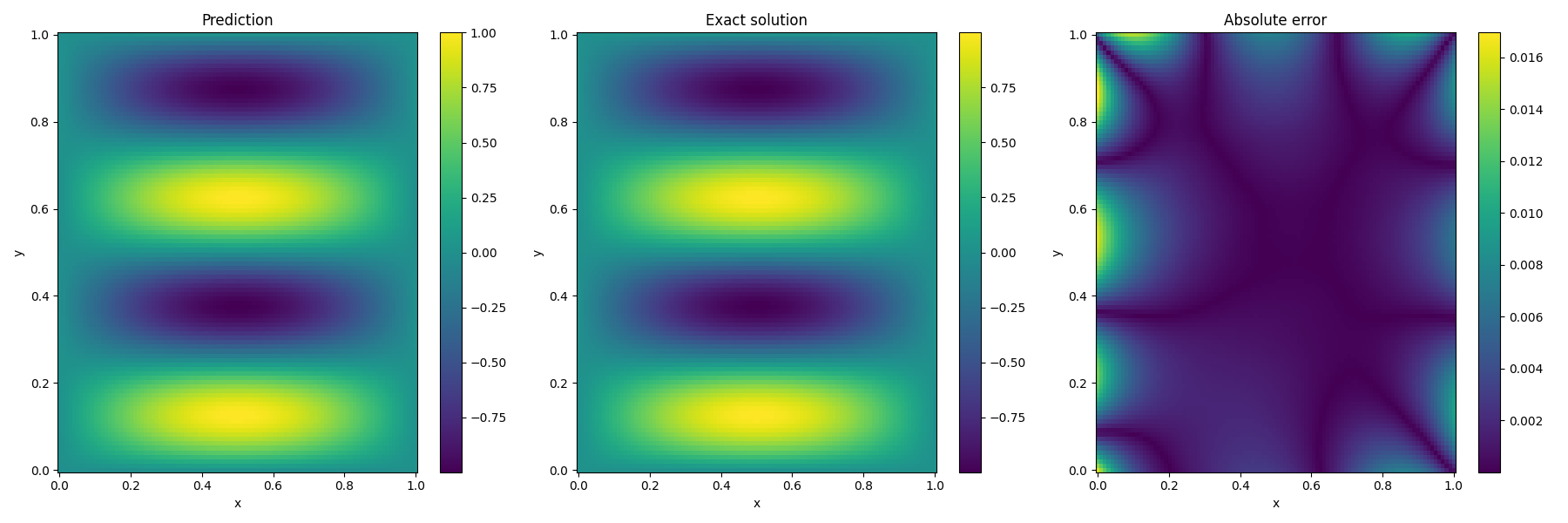}
	\caption{NGD Prediction and Analysis for the 2D Helmholtz Equation.}
	\label{comparison_helmholtz_2d}
\end{figure}

\begin{figure}[htbp]
	\includegraphics[width=\textwidth]{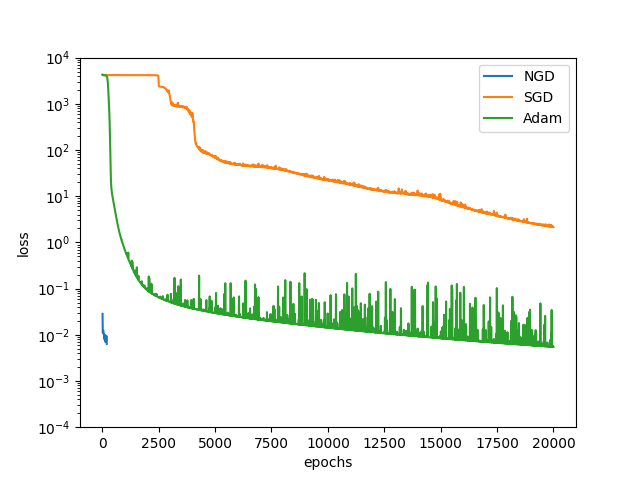}
	\caption{Loss Decay for the 2D Helmholtz Equation.}
	\label{loss_helmholtz_2d}
\end{figure}

\section{Proof of Section 3}
Before the proofs, we first define the event 
\begin{equation}
	A_{ir}:=\{\exists \bm{w}: \|\bm{w}-\bm{w}_r(0)\|_2\leq R , I\{\bm{w}^T \bm{x}_i\geq 0\}\neq I\{\bm{w}_r(0)^T \bm{x}_i\geq 0\} \}
\end{equation}
for all $i\in [n]$. 

Note that the event happens if and only if $|\bm{w}_r(0)^T \bm{x}_i|<\|\bm{x}_i\|_2R$, thus by the anti-concentration inequality of Gaussian distribution,  we have
\begin{equation}
	P(A_{ir})=P_{z\sim \mathcal{N}(0,\|\bm{x}_i\|_2^2) }( |z|<R)=P_{z\sim \mathcal{N}(0,1)}\left( |z|<R\right)\leq \frac{2R}{\sqrt{2\pi} } .	
\end{equation}

Let $S_i=\{r\in [m]: I\{A_{ir}\}=0\}$ and $S_i^{\perp}=[m] \backslash S_i $.

Then, we need to recall that
\begin{equation}
	\frac{\partial s_p(\bm{w})}{\partial \bm{w}_r}=\frac{a_r}{\sqrt{mn_1}}\left[\sigma^{''}(\bm{w}_r^T\bm{x}_p)w_{r0} \bm{x}_p+\sigma^{'}(\bm{w}_r^T \bm{x}_p) \begin{pmatrix}1\\\bm{0}_{d+1} \end{pmatrix}-\sigma^{'''}(\bm{w}_r^T \bm{x}_p)\|\bm{w}_{r1}\|_2^2 \bm{x}_p-2\sigma^{''}(\bm{w}_r^T \bm{x}_p) \begin{pmatrix}0\\\bm{w}_{r1} \end{pmatrix}\right]	
\end{equation}
and
\begin{equation}
	\frac{\partial h_j(\bm{w})}{\partial \bm{w}_r}=\frac{a_r }{\sqrt{mn_2}}\sigma^{'}(\bm{w}_r^T \bm{y}_j)\bm{y}_j.	
\end{equation}

\subsection{Proof of Lemma 3.3}
\begin{proof}
	In the following, we aim to bound $\|\bm{H}(0)-\bm{H}^{\infty}\|_F$, as $\|\bm{H}(0)-\bm{H}^{\infty}\|_2\leq \|\bm{H}(0)-\bm{H}^{\infty}\|_F$. Note that the entries of $\bm{H}(0)-\bm{H}^{\infty}$ have three forms as follows. 
	\begin{equation}
		\sum\limits_{r=1}^m \left\langle \frac{\partial s_i(\bm{w}(0)}{\partial \bm{w}_r}, \frac{\partial s_j(\bm{w}(0))}{\partial \bm{w}_r}\right\rangle -\mathbb{E}_{\bm{w}(0)} \left[\sum\limits_{r=1}^m \left\langle \frac{\partial s_i(\bm{w}(0))}{\partial \bm{w}_r}, \frac{\partial s_j(\bm{w}(0))}{\partial \bm{w}_r}\right\rangle\right],
	\end{equation}
	\begin{equation}
		\sum\limits_{r=1}^m \left\langle \frac{\partial s_i(\bm{w}(0))}{\partial \bm{w}_r}, \frac{\partial h_j(\bm{w}(0))}{\partial \bm{w}_r}\right\rangle -\mathbb{E}_{\bm{w}(0)}\left[\sum\limits_{r=1}^m \left\langle \frac{\partial s_i(\bm{w}(0))}{\partial \bm{w}_r}, \frac{\partial h_j(\bm{w}(0))}{\partial \bm{w}_r}\right\rangle\right]	
	\end{equation}
	and
	\begin{equation}
		\sum\limits_{r=1}^m \left\langle \frac{\partial h_i(\bm{w}(0))}{\partial \bm{w}_r}, \frac{\partial h_j(\bm{w}(0))}{\partial \bm{w}_r}\right\rangle -\mathbb{E}_{\bm{w}}\left[\sum\limits_{r=1}^m \left\langle \frac{\partial h_i(\bm{w}(0))}{\partial \bm{w}_r}, \frac{\partial h_j(\bm{w}(0))}{\partial \bm{w}_r}\right\rangle\right].	
	\end{equation}

	For the first form (30), to simplify the analysis, we let 
	\begin{align*}
		\bm{Z}_r(i)&=\sigma^{''}(\bm{w}_r(0)^T\bm{x}_i)w_{r0}(0) \bm{x}_i+\sigma^{'}(\bm{w}_r(0)^T \bm{x}_i) \begin{pmatrix}1\\\bm{0}_{d+1} \end{pmatrix} \\
		&\quad 	-\sigma^{'''}(\bm{w}_r(0)^T \bm{x}_p)\|\bm{w}_{r1}(0)\|_2^2 \bm{x}_p-2\sigma^{''}(\bm{w}_{r}(0)^T \bm{x}_i) \begin{pmatrix}0\\\bm{w}_{r1}(0) \end{pmatrix}
	\end{align*}
	and
	\[X_r(ij)=\langle \bm{Z}_r(i),\bm{Z}_r(j) \rangle ,\]
	then 
	\[\sum\limits_{r=1}^m \left\langle \frac{\partial s_p(\bm{w}(0))}{\partial \bm{w}_r}, \frac{\partial s_j(\bm{w}(0))}{\partial \bm{w}_r}\right\rangle -\mathbb{E}_{\bm{w}} \left[\sum\limits_{r=1}^m \left\langle \frac{\partial s_p(\bm{w}(0))}{\partial \bm{w}_r}, \frac{\partial s_j(\bm{w}(0))}{\partial \bm{w}_r}\right\rangle\right]=\frac{1}{n_1 m} \sum\limits_{r=1}^m \left[X_r(ij)-\mathbb{E}X_r(ij)\right].\]
	Note that $|X_r(ij)|\lesssim 1+\|\bm{w}_r(0)\|_2^4$, thus
	\[ \left\|X_r(ij)\right\|_{\psi_{\frac{1}{2}}} \lesssim 1+ \left\|\|\bm{w}_r(0)\|_2^4 \right\|_{\psi_{\frac{1}{2}}} \lesssim 1+ \left\|\|\bm{w}_r(0)\|_2^2\right\|_{\psi_1}^2 \lesssim d^2. \]
	Here, for more details on the Orlicz norm, see the remarks after Lemma D.1.
	
	For the centered random variable, the property of $\psi_{\frac{1}{2}}$ quasi-norm implies that
	\[\|X_r(ij)-\mathbb{E}[X_r(ij)]\|_{\psi_{\frac{1}{2}}} \lesssim \|X_r(ij)\|_{\psi_{\frac{1}{2}}}+\|\mathbb{E}[X_r(ij)]\|_{\psi_{\frac{1}{2}}}\lesssim d^2.\] 
	Therefore, applying Lemma D.1 yields that with probability at least $1-\delta$,
	\[\left|\sum\limits_{r=1}^m \frac{1}{m}\left[X_r(ij)-\mathbb{E}X_r(ij) \right] \right| \lesssim \frac{d^2}{\sqrt{m}}\sqrt{\log\left(\frac{1}{\delta}\right)} +\frac{d^2}{m}\left(\log\left(\frac{1}{\delta}\right)\right)^2,\]
	which directly yields that
	\begin{equation}
		\left|\sum\limits_{r=1}^m \left\langle \frac{\partial s_p(\bm{w}(0))}{\partial \bm{w}_r}, \frac{\partial s_j(\bm{w}(0))}{\partial \bm{w}_r}\right\rangle -\mathbb{E}_{\bm{w}(0)} \left[\sum\limits_{r=1}^m \left\langle \frac{\partial s_p(\bm{w}(0))}{\partial \bm{w}_r}, \frac{\partial s_j(\bm{w}(0))}{\partial \bm{w}_r}\right\rangle\right]\right|\lesssim \frac{d^2}{n_1\sqrt{m}}\sqrt{\log\left(\frac{1}{\delta}\right)} +\frac{d^2}{n_1m}\left(\log\left(\frac{1}{\delta}\right)\right)^2.
	\end{equation}

	Similarly, for the second form (31) and third form (32), we can deduce that
	\begin{equation*}
		\left\| \left\langle \frac{\partial s_i(\bm{w}(0)}{\partial \bm{w}_r}, \frac{\partial h_j(\bm{w}(0))}{\partial \bm{w}_r}\right\rangle -\mathbb{E}_{\bm{w}(0)} \left[ \left\langle \frac{\partial s_i(\bm{w}(0))}{\partial \bm{w}_r}, \frac{\partial h_j(\bm{w}(0))}{\partial \bm{w}_r}\right\rangle\right]\right\|_{\psi_{\frac{1}{2}} }\lesssim \frac{d^2}{\sqrt{n_1n_2}m}	
	\end{equation*}
	and 
	\begin{equation*}
		\left\| \left\langle \frac{\partial h_i(\bm{w}(0))}{\partial \bm{w}_r}, \frac{\partial h_j(\bm{w}(0))}{\partial \bm{w}_r}\right\rangle -\mathbb{E}_{\bm{w}(0)} \left[ \left\langle \frac{\partial h_i(\bm{w}(0))}{\partial \bm{w}_r}, \frac{\partial h_j(\bm{w}(0))}{\partial \bm{w}_r}\right\rangle\right]\right\|_{\psi_{\frac{1}{2}} }\lesssim \frac{d^2}{n_2m}.	
	\end{equation*}

	Thus applying Lemma D.1 yields that with probability at least $1-\delta$,
	\begin{equation}
		\left|\sum\limits_{r=1}^m \left\langle \frac{\partial s_i(\bm{w}(0))}{\partial \bm{w}_r}, \frac{\partial h_j(\bm{w}(0))}{\partial \bm{w}_r}\right\rangle -\mathbb{E}_{\bm{w}(0)}\left[ \sum\limits_{r=1}^m \left\langle \frac{\partial s_i(\bm{w}(0))}{\partial \bm{w}_r}, \frac{\partial h_j(\bm{w}(0))}{\partial \bm{w}_r}\right\rangle\right]\right|\lesssim \frac{d^2}{\sqrt{n_1n_2}\sqrt{m}}\sqrt{\log\left(\frac{1}{\delta}\right)} +\frac{d^2}{\sqrt{n_1n_2}m}\log\left(\frac{1}{\delta}\right)	
	\end{equation}
	and with probability at least $1-\delta$,
	\begin{equation}
		\left|\sum\limits_{r=1}^m \left\langle \frac{\partial h_i(\bm{w}(0))}{\partial \bm{w}_r}, \frac{\partial h_j(\bm{w}(0))}{\partial \bm{w}_r}\right\rangle -\mathbb{E}_{\bm{w}(0)} \left[\sum\limits_{r=1}^m \left\langle \frac{\partial h_i(\bm{w}(0))}{\partial \bm{w}_r}, \frac{\partial h_j(\bm{w}(0))}{\partial \bm{w}_r}\right\rangle\right]\right|\lesssim \frac{d^2}{n_2\sqrt{m}}\sqrt{\log\left(\frac{1}{\delta}\right)} +\frac{d^2}{n_2m}\log\left(\frac{1}{\delta}\right) .
	\end{equation}

	Combining (33), (34) and (35), we can deduce that with probability at least $1-\delta$,
	\begin{align*}
		&\|\bm{H}(0)-\bm{H}^{\infty}\|_2^2 \\
		&\leq \|\bm{H}(0)-\bm{H}^{\infty}\|_F^2 \\
		&\lesssim \frac{d^4}{m}\log\left(\frac{n_1+n_2}{\delta}\right)+\frac{d^4}{m^2}\left(\log\left(\frac{n_1+n_2}{\delta}\right)\right)^4 \\
		&\lesssim \frac{d^4}{m}\log\left(\frac{n_1+n_2}{\delta}\right).
	\end{align*}
	Thus when $ \sqrt{\frac{d^4}{m}\log\left(\frac{n_1+n_2}{\delta}\right)}\lesssim \frac{\lambda_0}{4}$, i.e.,
	\[ m=\Omega\left(\frac{d^4}{\lambda_0^2}\log\left(\frac{n_1+n_2}{\delta}\right)\right),\]
	we have $\lambda_{min}(\bm{H}(0))\geq \frac{3}{4}\lambda_0 $.
	
\end{proof}

\subsection{Proof of Lemma 3.5}
\begin{proof}
	We first reformulate the term $\frac{\partial s_p(k)}{\partial \bm{w}_r}$ in (28) as follows.
	\begin{align*}
		\frac{\partial s_p(\bm{w})}{\partial \bm{w}_r}=\frac{a_r}{\sqrt{mn_1}}\left[\sigma^{''}(\bm{w}_r^T\bm{x}_p)\begin{pmatrix} w_{r0}x_{p0} \\w_{r0}\bm{x}_{p1}-2\bm{w}_{r1} \end{pmatrix}+\sigma^{'}(\bm{w}_r^T \bm{x}_p) \begin{pmatrix}1\\\bm{0}_{d+1} \end{pmatrix}-\sigma^{'''}(\bm{w}_r^T \bm{x}_p)\|\bm{w}_{r1}\|_2^2 \bm{x}_p\right].
	\end{align*}
	It suffices to bound $\|\bm{H}(\bm{w})-\bm{H}(0)\|_F$, which can in turn allows us to bound each entry of $\bm{H}(\bm{w})-\bm{H}(0)$. 
	
	For $i\in [n_1]$ and $j\in [n_1]$, we have that
	\begin{align*}
		H_{ij}(\bm{w})&=\sum\limits_{r=1}^m \left\langle \frac{\partial s_i(\bm{w})}{\partial \bm{w}_r}, \frac{\partial s_j(\bm{w})}{\partial \bm{w}_r} \right\rangle \\
		&= \frac{1}{n_1m}\sum\limits_{r=1}^m \left\langle \sigma^{''}(\bm{w}_r^T\bm{x}_i) \begin{pmatrix} w_{r0}x_{i0} \\w_{r0}\bm{x}_{i1}-2\bm{w}_{r1} \end{pmatrix}+\sigma^{'}(\bm{w}_r^T\bm{x}_i) \begin{pmatrix} 1\\ \bm{0}_{d+1} \end{pmatrix}-\sigma^{'''}(\bm{w}_r^T\bm{x}_i)\|\bm{w}_{r1}\|_2^2 \bm{x}_i, \right.\\
		& \qquad  \left. \sigma^{''}(\bm{w}_r^T\bm{x}_j) \begin{pmatrix} w_{r0}x_{j0} \\w_{r0}\bm{x}_{j1}-2\bm{w}_{r1} \end{pmatrix}+\sigma^{'}(\bm{w}_r^T\bm{x}_j) \begin{pmatrix} 1\\ \bm{0}_{d+1} \end{pmatrix}-\sigma^{'''}(\bm{w}_r^T\bm{x}_j)\|\bm{w}_{r1}\|_2^2 \bm{x}_j\right\rangle
	\end{align*}
	After expanding the inner product term, we can find that although it has nine terms, it only consists of six classes. For simplicity, we use the following six symbols to represent the corresponding classes.
	\[\sigma^{''}\sigma^{''},\sigma^{''}\sigma^{'}, \sigma^{'}\sigma^{'}, \sigma^{'''}\sigma^{''}, \sigma^{'''}\sigma^{'}, \sigma^{'''}\sigma^{'''}.\]
	For instance, $\sigma^{''}\sigma^{'}$ represents
	\[\left\langle \sigma^{''}(\bm{w}_r^T\bm{x}_i) \begin{pmatrix} w_{r0}x_{i0} \\w_{r0}\bm{x}_{i1}-2\bm{w}_{r1} \end{pmatrix} ,  \sigma^{'}(\bm{w}_r^T\bm{x}_j) \begin{pmatrix} 1\\ \bm{0}_{d+1} \end{pmatrix}\right\rangle , \left\langle \sigma^{'}(\bm{w}_r^T\bm{x}_i) \begin{pmatrix} 1\\ \bm{0}_{d+1} \end{pmatrix}, \sigma^{''}(\bm{w}_r^T\bm{x}_j) \begin{pmatrix} w_{r0}x_{j0} \\w_{r0}\bm{x}_{j1}-2\bm{w}_{r1} \end{pmatrix}\right\rangle .\]
	In fact, when bounding the corresponding terms for $H_{ij}(\bm{w})-H_{ij}(0)$, the first four classes can be grouped into one category. They are of the form $f_1(\bm{w})f_2(\bm{w})f_3(\bm{w})f_4(\bm{w})$, where for each $i$ ($1\leq i\leq 4$), $f_i(\bm{w})$ is Lipschitz continuous with respect to $\|\cdot \|_2$ and $|f_i(\bm{w})|\lesssim \|\bm{w}\|_2$ (Note that $\sigma^{'}(\cdot)=(\sigma^{''}(\cdot))^2$). On the other hand, when $\|\bm{w}_1-\bm{w}_2\|_2\leq R\leq 1$, we can deduce that 
	\[|f_1(\bm{w}_1)f_2(\bm{w}_1)f_3(\bm{w}_1)f_4(\bm{w}_1)-f_1(\bm{w}_2)f_2(\bm{w}_2)f_3(\bm{w}_2)f_4(\bm{w}_2)|\lesssim R (\|\bm{w}_1\|_2^3+1).\]
	Thus, for the terms in $H_{ij}(\bm{w})-H_{ij}(0)$ that belong to the first four classes, we can deduce that they are less than $CR(\|\bm{w}_r(0)\|_2^3+1)$, where $C$ is a universal constant.
	
	For the classes $\sigma^{'''}\sigma^{''}$ and $ \sigma^{'''}\sigma^{'}$, they are both involving $\sigma^{'''}$ that is not Lipschitz continuous. To make it precise, we write the class $\sigma^{'''}\sigma^{''}$ explicitly as follows.
	\[ \sigma^{''}(\bm{w}_r^T\bm{x}_i)\sigma^{'''}(\bm{w}_r^T\bm{x}_j) \|\bm{w}_{r1}\|_2^2 \begin{pmatrix} w_{r0}x_{i0}\\ w_{r0}\bm{x}_{i1}-2\bm{w}_{r1} \end{pmatrix} ^T\bm{x}_j.\]
	Note that when $\|\bm{w}_r-\bm{w}_r(0)\|_2<R$, we have that
	\[|\sigma^{'''}(\bm{w}_r^T\bm{x}_j)-\sigma^{'''}(\bm{w}_r(0)^T\bm{x}_j)|=|I\{\bm{w}_r^T\bm{x}_j\geq 0\}-I\{\bm{w}_r(0)^T\bm{x}_j\geq 0\}|\leq I\{A_{jr}\},\]
	where the event $A_{jr}$ has been defined in (36).
	
	Thus, we can deduce that for the terms in $H_{ij}(\bm{w})-H_{ij}(0)$ that belong to the classes $\sigma^{'''}\sigma^{''}$ and $ \sigma^{'''}\sigma^{'}$, they are less than
	\[C\left[ (I\{A_{ir}\}+I\{A_{jr}\})(\|\bm{w}_r(0)\|_2^3+1) + R (\|\bm{w}_r(0)\|_2^3+1)\right],\]
	where $C$ is a universal constant.
	
	Similarly, for the last class $\sigma^{'''}\sigma^{'''}$ that are of the form \[\sigma^{'''}(\bm{w}_r^T\bm{x}_i)\sigma^{'''}(\bm{w}_r^T\bm{x}_j)\|\bm{w}_{r1}\|_2^4\bm{x}_i^T\bm{x}_j,\]
	we can deduce that
	\begin{align*}
		&|\sigma^{'''}(\bm{w}_r^T\bm{x}_i)\sigma^{'''}(\bm{w}_r^T\bm{x}_j)\|\bm{w}_{r1}\|_2^4\bm{x}_i^T\bm{x}_j-\sigma^{'''}(\bm{w}_r(0)^T\bm{x}_i)\sigma^{'''}(\bm{w}_r(0)^T\bm{x}_j)\|\bm{w}_{r1}(0)\|_2^4\bm{x}_i^T\bm{x}_j| \\
		&\lesssim I\{A_{ir} \lor A_{jr} \} \|\bm{w}_{r}(0)\|_2^4+R(\|\bm{w}_{r}(0)\|_2^3+1).
	\end{align*}
	
	Combining the upper bounds for the terms in the six classes, we have that
	\begin{equation}
		\begin{aligned}
			|H_{ij}(\bm{w})-H_{ij}(0)|&\lesssim \frac{1}{n_1} \left[\frac{1}{m} \left(R\sum\limits_{r=1}^m \|\bm{w}_{r}(0)\|_2^3\right)+\frac{1}{m}\sum\limits_{r=1}^m(I\{A_{ir}\}+I\{A_{jr}\})( \|\bm{w}_{r}(0)\|_2^4+\|\bm{w}_{r}(0)\|_2^3+1) +R\right]\\
			&\lesssim  \frac{1}{n_1} \left[\frac{1}{m}\left (R\sum\limits_{r=1}^m \|\bm{w}_{r}(0)\|_2^4\right)+\frac{1}{m}\sum\limits_{r=1}^m(I\{A_{ir}\}+I\{A_{jr}\}) (\|\bm{w}_{r}(0)\|_2^4+1) +R\right],
		\end{aligned}	
	\end{equation}
	where the last inequality follows from that $\|\bm{w}_{r}(0)\|_2^3\lesssim \|\bm{w}_{r}(0)\|_2^4+1$ due to Young's inequality for products.
	
	Now, we focus on the term $\frac{1}{m}\sum\limits_{r=1}^m I\{A_{ir}\}\|\bm{w}_{r}(0)\|_2^4$. 
	
	Since
	\[P\left(|w_{ri}(0)|^2\geq 2\log \left(\frac{2}{\delta}\right)\right) \leq \delta\]
	and then 
	\[P\left(\|\bm{w}_{r}(0)\|_2^2\geq 2(d+2)\log \left(\frac{2(d+2)}{\delta}\right)\right) \leq \delta.\]
	This implies that 
	\begin{equation}
		P\left(\exists r\in [m],\|\bm{w}_{r}(0)\|_2^2\geq 2(d+2)\log \left(\frac{2m(d+2)}{\delta}\right)\right) \leq \delta.
	\end{equation} 
	Let $M=2(d+2)\log \left(\frac{2m(d+2)}{\delta}\right)$, then 
	\begin{align*}
		&\frac{1}{m}\sum\limits_{r=1}^m I\{A_{ir}\}\|\bm{w}_{r}(0)\|_2^4 \\
		&=\frac{1}{m}\sum\limits_{r=1}^m I\{A_{ir}\}\|\bm{w}_{r}(0)\|_2^4 I\{\|\bm{w}_r(0)\|_2^2 \leq M\} + \frac{1}{m}\sum\limits_{r=1}^m I\{A_{ir}\}\|\bm{w}_{r}(0)\|_2^4 I\{\|\bm{w}_r(0)\|_2^2 > M\} \\
		&\leq \frac{M^2}{m}\sum\limits_{r=1}^m I\{A_{ir}\}+\frac{1}{m}\sum\limits_{r=1}^m \|\bm{w}_{r}(0)\|_2^4 I\{\|\bm{w}_r(0)\|_2^2 > M\}.
	\end{align*}
	Applying Bernstein's inequality for the first term yields that with probability at least $1-e^{-mR}$,
	\[\frac{1}{m}\sum\limits_{r=1}^m I\{A_{ir}\}\leq 4R.\]
	Moreover, from (37), we have that with probability at least $1-\delta$, $I\{\|\bm{w}_r(0)\|_2^2 > M\}=0$ holds for all $r\in [m]$.
	
	Thus from (36), with probability at least $1-\delta-n_1e^{-mR}$, we have that for any $i\in [n_1]$ and $j\in[n_1]$,
	\begin{align*}
		|H_{ij}(\bm{w})-H_{ij}(0)|&\lesssim \frac{1}{n_1} \left[ RM^2+RM^2+R \right] \\
		&\lesssim 	\frac{1}{n_1} M^2R.	
	\end{align*}
	
	For $i\in [n_1], j\in [n_1+2,n_2]$ and $i\in [n_1+1,n_2], j\in [n_2]$, from the form of $\frac{\partial h_j(\bm{w})}{\partial \bm{w}_r}$, i.e.,
	\[\frac{\partial h_j(\bm{w})}{\partial \bm{w}_r}=\frac{a_r}{\sqrt{n_2m}} \sigma^{'}(\bm{w}_r^T\bm{y}_j) \bm{y}_j,\]
	we can obtain similar results for the terms $\left\langle \frac{\partial s_i}{\partial \bm{w}},\frac{\partial h_j}{\partial \bm{w}}\right\rangle$ and $\left\langle\frac{\partial h_i}{\partial \bm{w}},\frac{\partial h_j}{\partial \bm{w}} \right\rangle$.
	
	With all results above, we have that with probability at least $1-\delta-n_1e^{-mR}$,
	\[\|\bm{H}(\bm{w})-\bm{H}(0)\|_F\lesssim M^2R.\]
\end{proof}

\subsection{Proof of Lemma B.1}
Indeed, the stringent requirement of the learning rate in \cite{3} stems from an inadequate decomposition method for the residual. Specifically, in \cite{5}, the decomposition for the residual in the $(k+1)$-th iteration is same as the one in \cite{3}, i.e., 
\begin{equation}
	\begin{pmatrix} \bm{s}(k+1)\\ \bm{h}(k+1) \end{pmatrix}=\begin{pmatrix} \bm{s}(k)\\ \bm{h}(k) \end{pmatrix} +\left[\begin{pmatrix} \bm{s}(k+1)\\ \bm{h}(k+1) \end{pmatrix}-\begin{pmatrix} \bm{s}(k)\\ \bm{h}(k) \end{pmatrix} \right],
\end{equation}
which leads to the requirements that $\eta=\mathcal{O}(\lambda_0)$ and $m=Poly(n_1,n_2,1/\delta)$. Thus, it requires a new approach to achieve the improvements for $\eta$ and $m$. In fact, we can derive the following recursion formula.

\begin{lemma}
	For all $k\in \mathbb{N}$, we have
	\begin{equation}
		\begin{pmatrix} \bm{s}(k+1)\\ \bm{h}(k+1) \end{pmatrix}=(\bm{I}-\eta \bm{H}(k))\begin{pmatrix} \bm{s}(k)\\ \bm{h}(k) \end{pmatrix} +\bm{I}_1(k), 
	\end{equation}
	where 
	\[\bm{I}_1(k)=(I_1^1(k), \cdots, I_1^{n_1+n_2}(k))^T \in \mathbb{R}^{n_1+n_2}\] 
	and for $p\in [n_1]$,
	\begin{equation}
		I_1^p(k)=s_p(k+1)-s_p(k)-\left\langle \frac{\partial s_p(k) }{\partial \bm{w}}, \bm{w}(k+1)-\bm{w}(k)\right\rangle,
	\end{equation}
	for $j\in [n_2]$,
	\begin{equation}
		I_1^{n_1+j}(k)=h_j(k+1)-h_j(k)-\left\langle \frac{\partial h_j(k) }{\partial \bm{w}}, \bm{w}(k+1)-\bm{w}(k)\right\rangle.	
	\end{equation}
\end{lemma}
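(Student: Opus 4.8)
\medskip
\noindent\textbf{Proof proposal.} The identity is a purely algebraic consequence of the definition of $\bm{I}_1(k)$ together with the gradient descent update, so the plan is just to unwind definitions and read off the block structure of $\bm{H}(k)$. First, observe that the definition of $I_1^p(k)$ can be rewritten as the first-order expansion of $s_p$ along the step $\bm{w}(k)\to\bm{w}(k+1)$ with explicit remainder,
\[
s_p(k+1)=s_p(k)+\Big\langle \frac{\partial s_p(k)}{\partial \bm{w}},\,\bm{w}(k+1)-\bm{w}(k)\Big\rangle+I_1^p(k),
\]
and likewise $h_j(k+1)=h_j(k)+\big\langle \partial h_j(k)/\partial\bm{w},\,\bm{w}(k+1)-\bm{w}(k)\big\rangle+I_1^{n_1+j}(k)$. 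Hence it suffices to identify the linear term with the appropriate component of $-\eta\,\bm{H}(k)\begin{pmatrix}\bm{s}(k)\\ \bm{h}(k)\end{pmatrix}$.

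Second, I would substitute $\bm{w}(k+1)-\bm{w}(k)=-\eta\,\partial L(\bm{w}(k))/\partial\bm{w}$ and then the expression
\[
\frac{\partial L(\bm{w})}{\partial \bm{w}}=\sum_{p'=1}^{n_1}s_{p'}(\bm{w})\,\frac{\partial s_{p'}(\bm{w})}{\partial \bm{w}}+\sum_{j'=1}^{n_2}h_{j'}(\bm{w})\,\frac{\partial h_{j'}(\bm{w})}{\partial \bm{w}}
\]
derived in the problem setup. Distributing the inner product, the linear term for an interior index $p$ equals
\[
-\eta\Bigg(\sum_{p'=1}^{n_1}\Big\langle \frac{\partial s_p(k)}{\partial\bm{w}},\frac{\partial s_{p'}(k)}{\partial\bm{w}}\Big\rangle s_{p'}(k)+\sum_{j'=1}^{n_2}\Big\langle \frac{\partial s_p(k)}{\partial\bm{w}},\frac{\partial h_{j'}(k)}{\partial\bm{w}}\Big\rangle h_{j'}(k)\Bigg).
\]
Since $\bm{H}(\bm{w})=\bm{D}^T\bm{D}$ with $\bm{D}$ the matrix whose columns are $\partial s_1/\partial\bm{w},\dots,\partial s_{n_1}/\partial\bm{w},\partial h_1/\partial\bm{w},\dots,\partial h_{n_2}/\partial\bm{w}$, the inner products above are exactly the entries in row $p$ of $\bm{H}(k)$, so the displayed quantity is $-\eta\left[\bm{H}(k)\begin{pmatrix}\bm{s}(k)\\ \bm{h}(k)\end{pmatrix}\right]_p$. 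Repeating the computation with $\partial h_j(k)/\partial\bm{w}$ in place of $\partial s_p(k)/\partial\bm{w}$ handles the boundary components (rows $n_1+1,\dots,n_1+n_2$). Stacking the $n_1+n_2$ scalar identities into one vector equation gives the claim.

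Third, on difficulty: there is no genuine obstacle here — this lemma is a warm-up whose content is entirely bookkeeping. The one point worth a remark is the block-index accounting needed to match interior and boundary residuals to the correct rows and columns of $\bm{H}(k)=\bm{D}^T\bm{D}$; and, at a more foundational level, one should note that the first-layer gradients $\partial s_p/\partial\bm{w}$ and $\partial h_j/\partial\bm{w}$ are well defined, which is where the $C^2$ regularity of $\sigma=\text{ReLU}^3$ enters (the residual $s_p$ involves $\sigma'$ and $\sigma''$, hence is differentiable in $\bm{w}$ away from the measure-zero set where a pre-activation vanishes, exactly as in the standard treatment). The real work lies not in proving this recursion but in subsequently controlling the remainder $\bm{I}_1(k)$ and the deviation of $\bm{H}(k)$ from $\bm{H}^{\infty}$, for which the stability lemmas above are invoked.
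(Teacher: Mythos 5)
Your proposal is correct and follows essentially the same route as the paper: decompose the one-step change of each residual into the first-order (linear) term plus the remainder $\bm{I}_1$, substitute the gradient descent update and the gradient of the loss, and recognize the resulting inner products as the rows of $\bm{H}(k)=\bm{D}^T\bm{D}$ applied to the residual vector. The paper carries out exactly this computation, labeling the linear part $I_2^p(k)$ and showing $I_2^p(k)=-\eta[\bm{H}(k)]_p\begin{pmatrix}\bm{s}(k)\\ \bm{h}(k)\end{pmatrix}$ before stacking, so there is no substantive difference.
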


In the recursion formula (39), $\bm{I}_1(k)$ serves as a residual term. From the proof, we can see that $\|\bm{I}_1(k)\|_2=\mathcal{O}(1/\sqrt{m})$ and thus, as $m$ becomes large enough, only the term $\bm{I}-\eta \bm{H}(k)$ is significant. This observation is the reason for the requirement of $\eta$. 

\begin{proof}
	First, we have
	\begin{equation}
		\begin{aligned}
			s_p(k+1)-s_p(k)&=\left[s_p(k+1)-s_p(k)-\left\langle \frac{\partial s_p(k) }{\partial \bm{w}}, \bm{w}(k+1)-\bm{w}(k)\right\rangle\right] + \left\langle \frac{\partial s_p(k) }{\partial \bm{w}}, \bm{w}(k+1)-\bm{w}(k)\right\rangle \\
			&:= I_1^p(k)+I_2^p(k).
		\end{aligned}
	\end{equation}
	For the second term $I_2^p(k)$, from the updating rule of gradient descent, we have that
	\begin{equation}
		\begin{aligned}
			I_2^p(k)&= \left\langle \frac{\partial s_p(k) }{\partial \bm{w}}, \bm{w}(k+1)-\bm{w}(k)\right\rangle \\
			&= \left\langle \frac{\partial s_p(k) }{\partial \bm{w}}, -\eta \frac{\partial L(k)}{\partial \bm{w}}\right\rangle \\
			&= -\sum\limits_{r=1}^m \eta\left\langle \frac{\partial s_p(k) }{\partial \bm{w}_r},  \frac{\partial L(k)}{\partial \bm{w}_r}\right\rangle \\
			&= -\sum\limits_{r=1}^m \eta\left\langle \frac{\partial s_p(k) }{\partial \bm{w}_r},  \sum\limits_{t=1}^{n_1}s_t(k)\frac{\partial s_t(k) }{\partial \bm{w}_r}+  \sum\limits_{j=1}^{n_2}h_j(k)\frac{\partial h_j(k) }{\partial \bm{w}_r}  \right\rangle \\
			&= -\eta \left[ \sum\limits_{t=1}^{n_1} \left\langle \frac{\partial s_p(k) }{\partial \bm{w}_r}, \frac{\partial s_t(k) }{\partial \bm{w}_r}\right\rangle s_t(k) + \sum\limits_{j=1}^{n_2} \left\langle \frac{\partial s_p(k) }{\partial \bm{w}_r}, \frac{\partial h_j(k) }{\partial \bm{w}_r}  \right\rangle h_j(k)  \right]\\
			&=-\eta [\bm{H}(k)]_p \begin{pmatrix} \bm{s}(k)\\ \bm{h}(k) \end{pmatrix},
		\end{aligned}
	\end{equation}
	where $[\bm{H}(k)]_p$ denotes the $p$-row of $\bm{H}(k)$.
	
	Similarly, for $h(k)$, we have
	\begin{equation}
		\begin{aligned}
			h_j(k+1)-h_j(k)&=\left[h_j(k+1)-h_j(k)-\left\langle \frac{\partial h_j(k) }{\partial \bm{w}}, \bm{w}(k+1)-\bm{w}(k)\right\rangle\right] + \left\langle \frac{\partial h_j(k) }{\partial \bm{w}}, \bm{w}(k+1)-\bm{w}(k)\right\rangle \\
			&:= I_1^{n_1+j}(k)+I_2^{n_1+j}(k)
		\end{aligned}
	\end{equation}
	and
	\begin{equation}
		I_2^{n_1+j}(k)=-\eta [\bm{H}(k)]_{n_1+j} \begin{pmatrix} \bm{s}(k)\\ \bm{h}(k) \end{pmatrix}.	
	\end{equation}

	Combining (42), (43), (44) and (45) yields that
	\begin{align*}
		\begin{pmatrix} \bm{s}(k+1)\\ \bm{h}(k+1) \end{pmatrix}-\begin{pmatrix} \bm{s}(k)\\ \bm{h}(k) \end{pmatrix} &= \bm{I}_1(k)+\bm{I}_2(k) \\
		&=\bm{I}_1(k) -\eta \bm{H}(k) \begin{pmatrix} \bm{s}(k)\\ \bm{h}(k) \end{pmatrix}.
	\end{align*}
	A simple transformation directly leads to
	\begin{align*}
		\begin{pmatrix} \bm{s}(k+1)\\ \bm{h}(k+1) \end{pmatrix}=(\bm{I}-\eta \bm{H}(k))\begin{pmatrix} \bm{s}(k)\\ \bm{h}(k) \end{pmatrix} +\bm{I}_1(k), 
	\end{align*}
	which is exactly (39), the new recursion formula we need to prove.
\end{proof}

\subsection{Proof of Theorem 3.7}
Similar to \cite{3} and \cite{5}, we prove Theorem 3.7 by induction. Our induction hypothesis is the following convergence rate of the empirical loss and upper bounds for the weights.

\begin{condition}
	At the $t$-th iteration, we have that for each $r\in [m]$, $\|\bm{w}_r(t)\|_2\leq B$ and 
	\begin{equation}
		L(t)\leq \left(1-\frac{\eta\lambda_0}{2}\right)^t L(0),	
	\end{equation}
	where $B=\sqrt{2(d+2)\log\left(\frac{2m(d+2)}{\delta} \right) }+1$ and $L(k)$ is an abbreviation of $L(\bm{w}(k))$.
\end{condition}

From the update formula of gradient descent, we can directly derive the following corollary, which indicates that under over-parameterization, the weights are closed to their initializations.

\begin{corollary}
	If Condition 2 holds for $t=0,\cdots, k$, then we have for every $r\in [m]$,
	\begin{equation}
		\|\bm{w}_r(k+1)-\bm{w}_r(0) \|_2\leq \frac{CB^2 \sqrt{L(0)}}{\sqrt{m}\lambda_0} := R^{'},	
	\end{equation}
	where $C$ is a universal constant.
\end{corollary}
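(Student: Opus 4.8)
The plan is to write the displacement as a telescoping sum of gradient steps and to bound each step uniformly in $t$ using the two parts of Condition~2. Since the gradient descent update reads $\bm{w}_r(t+1)-\bm{w}_r(t)=-\eta\,\partial L(t)/\partial\bm{w}_r$, summing over $t=0,\dots,k$ gives
\[\bm{w}_r(k+1)-\bm{w}_r(0)=-\eta\sum_{t=0}^{k}\frac{\partial L(t)}{\partial\bm{w}_r},\]
so by the triangle inequality it suffices to bound $\sum_{t=0}^{k}\bigl\|\partial L(t)/\partial\bm{w}_r\bigr\|_2$, and for this I only need a pointwise bound on each summand together with the geometric decay of $L(t)$.

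For the per-step bound I would start from
\[\frac{\partial L(t)}{\partial\bm{w}_r}=\sum_{p=1}^{n_1}s_p(t)\frac{\partial s_p(t)}{\partial\bm{w}_r}+\sum_{j=1}^{n_2}h_j(t)\frac{\partial h_j(t)}{\partial\bm{w}_r},\]
and bound each Jacobian column using the explicit formulas (53) and (54). Since $\|\bm{x}_p\|_2,\|\bm{y}_j\|_2\le\sqrt2$ and the $\text{ReLU}^3$ derivatives obey $|\sigma^{(\ell)}(z)|\lesssim(1+|z|)^{3-\ell}$ for $\ell\in\{0,1,2,3\}$, every term appearing in $\partial s_p(t)/\partial\bm{w}_r$ and $\partial h_j(t)/\partial\bm{w}_r$ is at most a universal constant times $(1+\|\bm{w}_r(t)\|_2^2)/\sqrt{m n_1}$, resp.\ $(1+\|\bm{w}_r(t)\|_2^2)/\sqrt{m n_2}$; invoking $\|\bm{w}_r(t)\|_2\le B$ from Condition~2 and $B\ge1$, this gives $\|\partial s_p(t)/\partial\bm{w}_r\|_2\lesssim B^2/\sqrt{mn_1}$ and $\|\partial h_j(t)/\partial\bm{w}_r\|_2\lesssim B^2/\sqrt{mn_2}$. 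Then Cauchy--Schwarz, $\sum_p|s_p(t)|\le\sqrt{n_1}\|\bm{s}(t)\|_2$ and $\sum_j|h_j(t)|\le\sqrt{n_2}\|\bm{h}(t)\|_2$, together with $\|\bm{s}(t)\|_2^2+\|\bm{h}(t)\|_2^2=2L(t)$, collapses everything to $\bigl\|\partial L(t)/\partial\bm{w}_r\bigr\|_2\lesssim B^2\sqrt{L(t)}/\sqrt m$.

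It then remains to sum the geometric series. The convergence part of Condition~2 gives $\sqrt{L(t)}\le(1-\eta\lambda_0/2)^{t/2}\sqrt{L(0)}$, so
\[\|\bm{w}_r(k+1)-\bm{w}_r(0)\|_2\lesssim \eta\,\frac{B^2\sqrt{L(0)}}{\sqrt m}\sum_{t=0}^{\infty}\Bigl(1-\frac{\eta\lambda_0}{2}\Bigr)^{t/2}=\eta\,\frac{B^2\sqrt{L(0)}}{\sqrt m}\cdot\frac{1}{1-\sqrt{1-\eta\lambda_0/2}},\]
and using $1-\sqrt{1-x}\ge x/2$ for $x\in[0,1]$ (applicable since $\eta\lambda_0\le\eta\|\bm{H}^\infty\|_2\lesssim1$) bounds the last factor by $4/(\eta\lambda_0)$; the $\eta$'s cancel and one is left with $\|\bm{w}_r(k+1)-\bm{w}_r(0)\|_2\le CB^2\sqrt{L(0)}/(\sqrt m\,\lambda_0)=R'$, as claimed.

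The argument is essentially bookkeeping; the only point that needs a little attention is the uniform-in-$t$ control of the Jacobian columns, and in particular the term of $\partial s_p/\partial\bm{w}_r$ carrying the factor $\sigma'''(\bm{w}_r^T\bm{x}_p)$, which for $\text{ReLU}^3$ is an indicator and hence discontinuous. Unlike in Lemma~5, however, here I only use its \emph{boundedness}, not any stability estimate, so no measurability or continuity subtlety arises. Finally, note that the whole derivation invokes Condition~2 only at iterations $0,\dots,k$, exactly matching the inductive hypothesis, so the corollary plugs directly into the induction step used to prove Theorem~2.
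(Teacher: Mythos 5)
Your proof is correct and follows the standard argument that the paper (by citation to Lemma~4.1 of \cite{5}) relies on: telescope the GD updates, bound each step by $\|\partial L(t)/\partial\bm{w}_r\|_2\lesssim B^2\sqrt{L(t)}/\sqrt{m}$ using the weight bound $\|\bm{w}_r(t)\|_2\le B$ from Condition~2 together with Cauchy--Schwarz, and then sum the geometric series generated by the decay $L(t)\le(1-\eta\lambda_0/2)^tL(0)$, with the $\eta$'s cancelling via $1-\sqrt{1-x}\ge x/2$. This is exactly the per-step estimate the paper itself records later as equation~(77), so there is nothing materially different in your route.
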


\textbf{Proof Sketch:} Assume that Condition 2 holds for $t=0,\cdots, k$, it suffices to demonstrate that Condition 2 also holds for $t=k+1.$

From the recursion formula (39), we have that 
\begin{equation}
	\begin{aligned}
		&\left\| \begin{pmatrix} \bm{s}(k+1)\\ \bm{h}(k+1) \end{pmatrix} \right \|_2^2 \\
		&= \left\|(\bm{I}-\eta \bm{H}(k))\begin{pmatrix} \bm{s}(k)\\ \bm{h}(k) \end{pmatrix} +\bm{I}_1(k)\right \|_2^2 \\
		&\leq \left\|\bm{I}-\eta \bm{H}(k) \right\|_2^2 \left\|\begin{pmatrix} \bm{s}(k)\\ \bm{h}(k) \end{pmatrix}\right \|_2^2+ \left\|\bm{I}_1(k)\right \|_2^2  + 2\left\| \bm{I}-\eta \bm{H}(k) \right\|_2\left\|\begin{pmatrix} \bm{s}(k)\\ \bm{h}(k) \end{pmatrix}\right\|_2\left\| \bm{I}_1(k)\right\|_2 ,
	\end{aligned}
\end{equation}
where the inequality follows from the Cauchy's inequality. 

Combining Corollary B.2 with Lemma 3.5, we can deduce that when $m$ is large enough, we have $\|\bm{H}(k)-\bm{H}(0)\|_2\leq \lambda_0/4$. Thus, $\lambda_{min}(\bm{H}(k))\geq \lambda_0/2$ and $\bm{I}-\eta \bm{H}(k)$ is positive definite when $\eta=\mathcal{O}(1/\|\bm{H}^{\infty}\|_2)$. On the other hand, with Corollary B.2, we can derive that $\|\bm{I}_1(k)\|_2=\mathcal{O}(\eta \sqrt{L}(k)/\sqrt{m})$. Plugging these results into (48), we have
\begin{equation}
	\begin{aligned}
		&\left\| \begin{pmatrix} \bm{s}(k+1)\\ \bm{h}(k+1) \end{pmatrix} \right \|_2^2 \\
		&= \left( \left(1-\frac{\eta\lambda_0}{2}\right)^2+\mathcal{O}\left(\frac{\eta^2}{m} \right)+ \mathcal{O}\left(\frac{\eta}{\sqrt{m}} \right)\right)\left\| \begin{pmatrix} \bm{s}(k)\\ \bm{h}(k) \end{pmatrix} \right \|_2^2\\
		&\leq \left(1-\frac{\eta\lambda_0}{2} \right)\left\| \begin{pmatrix} \bm{s}(k)\\ \bm{h}(k) \end{pmatrix} \right \|_2^2,
	\end{aligned}
\end{equation}
where the last inequality holds when $m$ is large enough.

Now we come to prove Theorem 3.7.
\begin{proof}
	Corollary B.2 implies that when $m$ is large enough, we have $\|\bm{w}_r(k+1)-\bm{w}_r(0) \|_2\leq 1$ and then $\|\bm{w}_r(k+1)\|_2 \leq B$. Thus, in induction, we only need to prove that (46) also holds for $t=k+1$, which relies on the recursion formula (49).
	
	Recall that the recursion formula is
	\begin{align*}
		\begin{pmatrix} \bm{s}(k+1)\\ \bm{h}(k+1) \end{pmatrix}=(\bm{I}-\eta \bm{H}(k))\begin{pmatrix} \bm{s}(k)\\ \bm{h}(k) \end{pmatrix} +\bm{I}_1(k). 
	\end{align*}
	From Corollary B.2 and Lemma 3.5, taking $CM^2R < \frac{\lambda_0}{4}$ in (8) and $R^{'}\leq R$ in (47) yields that $\lambda_{min}(\bm{H}(k))\geq \lambda_{min}(\bm{H}(0))-\frac{\lambda_0}{4}\geq\frac{\lambda_0}{2}$ and
	\[ \|\bm{H}(k)\|_2 \leq \|\bm{H}(0)\|_2+\frac{\lambda_0}{4}\leq \|\bm{H}^{\infty}\|_2+\frac{\lambda_0}{2} \leq \frac{3}{2}\|\bm{H}^{\infty}\|_2.\]
	Therefore, if we take $\eta \leq \frac{2}{3}\frac{1}{\|\bm{H}^{\infty}\|_2}$, then $\bm{I}-\eta \bm{H}(k)$ is positive definite and $\|I-\eta \bm{H}(k)\|_2 \leq 1-\frac{\eta\lambda_0}{2}$.
	
	Combining these facts with the recursion formula, we have that 
	\begin{equation}
		\begin{aligned}
			&\left\|\begin{pmatrix} \bm{s}(k+1)\\ \bm{h}(k+1) \end{pmatrix}\right\|_2^2 \\
			&=\left\|(\bm{I}-\eta \bm{H}(k))\begin{pmatrix} \bm{s}(k)\\ \bm{h}(k) \end{pmatrix}  \right\|_2^2 +\|\bm{I}_1(k) \|_2^2+2\left\langle (\bm{I}-\eta \bm{H}(k))\begin{pmatrix} \bm{s}(k)\\ \bm{h}(k) \end{pmatrix} ,\bm{I}_1(k)\right\rangle \\
			&\leq \left(1-\frac{\eta \lambda_0}{2}\right)^2 \left\|\begin{pmatrix} \bm{s}(k)\\ \bm{h}(k) \end{pmatrix} \right\|_2^2 + \|\bm{I}_1(k) \|_2^2+2\left(1-\frac{\eta \lambda_0}{2}\right) \left\| \begin{pmatrix} \bm{s}(k)\\ \bm{h}(k) \end{pmatrix} \right\|_2 \|\bm{I}_1(k) \|_2.
		\end{aligned}
	\end{equation}
	Thus, it remains only to bound $\|\bm{I}_1(k) \|_2$.
	
	For $\bm{I}_1(k)$, recall that  $\bm{I}_1(k)=(I_1^1(k), \cdots, I_1^{n_1}(k), I_1^{n_1+1}(k),\cdots, I_1^{n_1+n_2}(k))^T \in \mathbb{R}^{n_1+n_2}$ and for $p\in [n_1]$,
	\[I_1^p(k)=s_p(k+1)-s_p(k)-\left\langle \frac{\partial s_p(k) }{\partial \bm{w}}, \bm{w}(k+1)-\bm{w}(k)\right\rangle,\]
	for $j\in [n_2]$,
	\[I_1^{n_1+j}(k)=h_j(k+1)-h_j(k)-\left\langle \frac{\partial h_j(k) }{\partial \bm{w}}, \bm{w}(k+1)-\bm{w}(k)\right\rangle.\] 
	
	Recall that
	\[s_p(k)=\frac{1}{\sqrt{n_1}} \left(\frac{1}{\sqrt{m}} \left( \sum\limits_{r=1}^m a_r\sigma^{'}(\bm{w}_r(k)^Tx_p)w_{r0}(k)-a_r\sigma^{''}(\bm{w}_r(k)^Tx_p )\|\bm{w}_{r1}(k)\|_2^2\right)-f(x_p) \right) \]
	and
	\begin{align*}
		\frac{\partial s_p(k)}{\partial \bm{w}_r}&= \frac{a_r}{\sqrt{n_1m}} \left[\sigma^{''}(\bm{w}_r(k)^T \bm{x}_p)w_{r0}(k)\bm{x}_p+\sigma^{'}(\bm{w}_r(k)^T \bm{x}_p) \begin{pmatrix} 1 \\ \bm{0}_{d+2} \end{pmatrix} -\sigma^{'''}(\bm{w}_r(k)^T \bm{x}_p)\|\bm{w}_{r1}(k)\|_2^2 \bm{x}_p \right.\\
		&\quad \left.-2\sigma^{''}(\bm{w}_r(k)^T \bm{x}_p)\begin{pmatrix} 0\\\bm{w}_{r1}(k) \end{pmatrix} \right].
	\end{align*}
	
	Define $\chi_{pr}^{1}(k):=\sigma^{'}(\bm{w}_r(k)^T\bm{x}_p)w_{r0}(k)$ and $\chi_{pr}^{2}(k):=\sigma^{''}(\bm{w}_r(k)^T\bm{x}_p )\|\bm{w}_{r1}(k)\|_2^2$, i.e.,  $\chi_{pr}^{1}(k)$ and $\chi_{pr}^{2}(k)$ are related to the operators $\frac{\partial u}{\partial t}$ and $\Delta u$ respectively. 
	
	Then define 
	\[\hat{\chi}_{pr}^1(k)=\chi_{pr}^{1}(k+1)-\chi_{pr}^{1}(k)-\left\langle \frac{\partial \chi_{pr}^{1}(k) }{\partial \bm{w}_r}, \bm{w}_r(k+1)-\bm{w}_r(k)\right\rangle\]
	and
	\[\hat{\chi}_{pr}^2(k)=\chi_{pr}^{2}(k+1)-\chi_{pr}^{2}(k)-\left\langle \frac{\partial \chi_{pr}^{2}(k) }{\partial \bm{w}_r}, \bm{w}_r(k+1)-\bm{w}_r(k)\right\rangle.\]
	At this time, we have
	\[I_1^p(k)=\frac{1}{\sqrt{n_1m}}\sum\limits_{r=1}^m a_r\left[\hat{\chi}_{pr}^1(k)-\hat{\chi}_{pr}^2(k) \right].\]
	The purpose of defining $\hat{\chi}_{pr}^1(k)$ and $\hat{\chi}_{pr}^1(k)$ in this way is to enable us to handle the terms related to the operators $\frac{\partial u}{\partial t}$ and $\Delta u$ separately. 
	
	We first recall some definitions. For $p\in [n_1]$, 
	\[A_{p,r}=\{\exists \bm{w}: \|\bm{w}-\bm{w}_r(0)\|_2 \leq R, I\{\bm{w}^T \bm{x}_p\geq 0\}\neq I\{\bm{w}_r(0)^T \bm{x}_p\geq 0\}\}\]
	and $S_p=\{r\in [m]:I\{A_{p,r}=0\}\}$, $S_p^{\perp}=[n_1]\backslash S_p$.
	
	In the following, we are going to show that $|\hat{\chi}_{pr}^1(k)|=\mathcal{O}(\|\bm{w}_r(k+1)-\bm{w}_r(k)\|_2^2)$ for every $r\in [m]$ and $|\hat{\chi}_{pr}^2(k)|=\mathcal{O}(\|\bm{w}_r(k+1)-\bm{w}_r(k)\|_2^2)$ for $r\in S_p$, $|\hat{\chi}_{pr}^2(k)|=\mathcal{O}(\|\bm{w}_r(k+1)-\bm{w}_r(k)\|_2)$ for $r\in S_p^{\perp}$. Thus, we can prove that $\|\bm{I}_1(k)\|_2=\mathcal{O}\left(\frac{\sqrt{L(k)}}{\sqrt{m}}\right)$. Then combining with (69) leads to the conclusion.
	
	For $\hat{\chi}_{pr}^1(k)$, from its definition, we have that
	\begin{align*}
		\hat{\chi}_{pr}^1(k)&= \sigma^{'}(\bm{w}_r(k+1)^T \bm{x}_p)w_{r0}(k+1)-\sigma^{'}(\bm{w}_r(k)^T\bm{x}_p)w_{r0}(k)\\
		&\quad -\langle \bm{w}_r(k+1)-\bm{w}_r(k), \bm{x}_p \rangle \sigma^{''}(\bm{w}_r(k)^T \bm{x}_p)w_{r0}(k)-(w_{r0}(k+1)-w_{r0}(k)) \sigma^{'}(\bm{w}_r(k)^T \bm{x}_p) \\
		&= (\sigma^{'}(\bm{w}_r(k+1)^T\bm{x}_p)-\sigma^{'}(\bm{w}_r(k)^T\bm{x}_p))w_{r0}(k+1)
		-\langle \bm{w}_r(k+1)-\bm{w}_r(k), \bm{x}_p \rangle \sigma^{''}(\bm{w}_r(k)^T \bm{x}_p)w_{r0}(k). 
	\end{align*}
	From the mean value theorem, we can deduce that there exists $\zeta(k) \in \mathbb{R}$ such that
	\[\sigma^{'}(\bm{w}_r(k+1)^T\bm{x}_p)-\sigma^{'}(\bm{w}_r(k)^T\bm{x}_p)=\sigma^{''}(\zeta(k))\langle \bm{w}_r(k+1)-\bm{w}_r(k), \bm{x}_p \rangle\]
	and
	\begin{align*}
		|\sigma^{''}(\zeta(k))- \sigma^{''}(\bm{w}_r(k)^T\bm{x}_p)|&\leq |\zeta(k)-\bm{w}_r(k)^T\bm{x}_p| \\
		&\leq \sqrt{2}\|\bm{w}_r(k+1)-\bm{w}_r(k)\|_2.
	\end{align*}
	Then, for $\hat{\chi}_{pr}^1(k)$, we can rewrite it as follows.
	\begin{align*}
		\hat{\chi}_{pr}^1(k)&= \sigma^{''}(\zeta(k))\langle \bm{w}_r(k+1)-\bm{w}_r(k), \bm{x}_p \rangle w_{r0}(k+1)
		-\langle \bm{w}_r(k+1)-\bm{w}_r(k), \bm{x}_p \rangle \sigma^{''}(\bm{w}_r(k)^T \bm{x}_p)w_{r0}(k)\\
		&=\left[\left(\sigma^{''}(\zeta(k))-\sigma^{''}(\bm{w}_r(k)^T\bm{x}_p) \right)\langle \bm{w}_r(k+1)-\bm{w}_r(k), \bm{x}_p \rangle w_{r0}(k+1) \right] \\
		&\quad +\left[\langle \bm{w}_r(k+1)-\bm{w}_r(k), \bm{x}_p \rangle \sigma^{''}(\bm{w}_r(k)^T \bm{x}_p)(w_{r0}(k+1)-w_{r0}(k)) \right].
	\end{align*}
	This implies that 
	\[|\hat{\chi}_{pr}^1(k)|\lesssim B\|\bm{w}_r(k+1)-\bm{w}_r(k)\|_2^2.\]
	
	For $\hat{\chi}_{pr}^2(k)$, we write it as follows explicitly.
	\begin{equation}
		\begin{aligned}
			\hat{\chi}_{pr}^2(k)&= \sigma^{''}(\bm{w}_r(k+1)^T \bm{x}_p)\|\bm{w}_{r1}(k+1)\|_2^2 -\sigma^{''}(\bm{w}_r(k)^T \bm{x}_p)\|\bm{w}_{r1}(k)\|_2^2 \\
			&\quad -\langle \bm{w}_r(k+1)-\bm{w}_r(k), \bm{x}_p \rangle \sigma^{'''}(\bm{w}_r(k)^T \bm{x}_p)\|\bm{w}_{r1}(k)\|_2^2 \\
			&\quad -2\langle \bm{w}_{r1}(k+1)-\bm{w}_{r1}(k),\bm{w}_{r1}(k) \rangle \sigma^{''}(\bm{w}_r(k)^T \bm{x}_p) .
		\end{aligned}
	\end{equation}
	Note that for the term $\sigma^{''}(\bm{w}_r(k)^T \bm{w}_p)\|\bm{w}_{r1}(k)\|_2^2$, we can rewrite it as follows.
	\begin{equation}
		\begin{aligned}
			&\sigma^{''}(\bm{w}_r(k)^T \bm{x}_p)\|\bm{w}_{r1}(k)\|_2^2  \\
			&= \sigma^{''}(\bm{w}_r(k)^T \bm{x}_p)\|\bm{w}_{r1}(k)-\bm{w}_{r1}(k+1)+\bm{w}_{r1}(k+1)\|_2^2 \\
			&= \sigma^{''}(\bm{w}_r(k)^T \bm{x}_p)[\|\bm{w}_{r1}(k)-\bm{w}_{r1}(k+1)\|_2^2+\|\bm{w}_{r1}(k+1)\|_2^2-2\langle \bm{w}_{r1}(k+1)-\bm{w}_{r1}(k),\bm{w}_{r1}(k+1) \rangle ],
		\end{aligned}
	\end{equation}
	where the first term $\sigma^{''}(\bm{w}_r(k)^T \bm{x}_p)\|\bm{w}_{r1}(k)-\bm{w}_{r1}(k+1)\|_2^2=\mathcal{O}(B\|\bm{w}_r(k+1)-\bm{w}_r(k)\|_2^2)$.  
	
	Plugging (52) into (51) yields that 
	\begin{equation}
		\begin{aligned}
			\hat{\chi}_{pr}^2(k)&= [\sigma^{''}(\bm{w}_r(k+1)^T \bm{x}_p)-\sigma^{''}(\bm{w}_r(k)^T \bm{x}_p)]\|\bm{w}_{r1}(k+1)\|_2^2 \\
			&\quad -\langle \bm{w}_r(k+1)-\bm{w}_r(k), \bm{x}_p \rangle \sigma^{'''}(\bm{w}_r(k)^T \bm{x}_p)\|\bm{w}_{r1}(k)\|_2^2 \\
			&\quad +2\langle \bm{w}_{r1}(k+1)-\bm{w}_{r1}(k),\bm{w}_{r1}(k+1)-\bm{w}_{r1}(k) \rangle \sigma^{''}(\bm{w}_r(k)^T \bm{x}_p) +\mathcal{O}(B\|\bm{w}_r(k+1)-\bm{w}_r(k)\|_2^2)\\
			&= [\sigma^{''}(\bm{w}_r(k+1)^T \bm{x}_p)-\sigma^{''}(\bm{w}_r(k)^T \bm{x}_p)-\langle \bm{w}_r(k+1)-\bm{w}_r(k), \bm{x}_p \rangle \sigma^{'''}(\bm{w}_r(k)^T \bm{x}_p)]\|\bm{w}_{r1}(k+1)\|_2^2 \\
			&\quad +\langle \bm{w}_r(k+1)-\bm{w}_r(k), \bm{x}_p \rangle \sigma^{'''}(\bm{w}_r(k)^T \bm{x}_p)(\|\bm{w}_{r1}(k+1)\|_2^2-\|\bm{w}_{r1}(k)\|_2^2) \\
			&\quad +\mathcal{O}(B\|\bm{w}_r(k+1)-\bm{w}_r(k)\|_2^2)\\
			&= \left[\sigma^{''}(\bm{w}_r(k+1)^T \bm{x}_p)-\sigma^{''}(\bm{w}_r(k)^T \bm{x}_p)-\langle \bm{w}_r(k+1)-\bm{w}_r(k), \bm{x}_p \rangle \sigma^{'''}(\bm{w}_r(k)^T \bm{x}_p)\right]\|\bm{w}_{r1}(k+1)\|_2^2 \\
			&\quad +\mathcal{O}(B\|\bm{w}_r(k+1)-\bm{w}_r(k)\|_2^2).
		\end{aligned}
	\end{equation}
	Thus, we only need to consider the term 
	\[\sigma^{''}(\bm{w}_r(k+1)^T \bm{x}_p)-\sigma^{''}(\bm{w}_r(k)^T \bm{x}_p)-\langle \bm{w}_r(k+1)-\bm{w}_r(k), \bm{x}_p \rangle \sigma^{'''}(\bm{w}_r(k)^T \bm{x}_p).\]
	
	For $r\in S_p$, since $\|\bm{w}_r(k+1)-\bm{w}_r(0)\|_2\leq R, \|\bm{w}_r(k)-\bm{w}_r(0)\|_2\leq R$, we have that  $I\{\bm{w}_r(k+1)^T \bm{x}_p\geq 0\}=I\{\bm{w}_r(k)^T \bm{x}_p\geq 0\}$, which yields that 
	\begin{equation}
		\begin{aligned}
			&\sigma^{''}(\bm{w}_r(k+1)^T \bm{x}_p)-\sigma^{''}(\bm{w}_r(k)^T \bm{x}_p)-\langle \bm{w}_r(k+1)-\bm{w}_r(k), \bm{x}_p \rangle \sigma^{'''}(\bm{w}_r(k)^T \bm{x}_p) \\
			&= [(\bm{w}_r(k+1)^T \bm{x}_p)I\{\bm{w}_r(k+1)^T \bm{x}_p\geq 0\}-(\bm{w}_r(k)^T \bm{x}_p)I\{\bm{w}_r(k)^T \bm{x}_p\geq 0\}] \\
			&\quad -\langle \bm{w}_r(k+1)-\bm{w}_r(k), \bm{x}_p \rangle I\{\bm{w}_r(k)^T \bm{x}_p\geq 0\} \\
			&=[(\bm{w}_r(k+1)^T \bm{x}_p)I\{\bm{w}_r(k)^T \bm{x}_p\geq 0\}-(\bm{w}_r(k)^T \bm{x}_p)I\{\bm{w}_r(k)^T \bm{x}_p\geq 0\}]\\
			&\quad -\langle \bm{w}_r(k+1)-\bm{w}_r(k), \bm{x}_p \rangle I\{\bm{w}_r(k)^T \bm{x}_p\geq 0\} \\
			&=0.
		\end{aligned}
	\end{equation}
	
	For $r\in S_p^{\perp}$, the Lipschitz continuity of $\sigma^{''}$ implies that 
	\begin{equation}
		\sigma^{''}(\bm{w}_r(k+1)^T \bm{x}_p)-\sigma^{''}(\bm{w}_r(k)^T \bm{x}_p)-\langle \bm{w}_r(k+1)-\bm{w}_r(k), \bm{x}_p \rangle \sigma^{'''}(\bm{w}_r(k)^T \bm{x}_p)=\mathcal{O}(\|\bm{w}_r(k+1)-\bm{w}_r(k)\|_2).	
	\end{equation}
	
	Combining (53), (54) and (55), we can deduce that for $r\in S_p$,
	\[|\hat{\chi}_{pr}^2(k)|\lesssim B\|\bm{w}_r(k+1)-\bm{w}_r(k)\|_2^2\]
	and for $r\in S_p^{\perp}$,
	\[|\hat{\chi}_{pr}^2(k)|\lesssim B\|\bm{w}_r(k+1)-\bm{w}_r(k)\|_2^2+B^2\|\bm{w}_r(k+1)-\bm{w}_r(k)\|_2.\]
	
	With the estimations for $\hat{\chi}_{pr}^1(k)$ and $\hat{\chi}_{pr}^2(k)$,  we have 
	\begin{equation}
		\begin{aligned}
			|I_1^p(k)|&\leq \frac{1}{\sqrt{n_1m}}\sum\limits_{r=1}^m (|\hat{\chi}_{pr}^1(k)|+|\hat{\chi}_{pr}^2(k)|) \\
			&\lesssim \frac{1}{\sqrt{n_1m}}\sum\limits_{r=1}^m B\|\bm{w}_r(k+1)-\bm{w}_r(k)\|_2^2 +\frac{1}{\sqrt{n_1m}}\sum\limits_{r\in S_p^{\perp}} B^2\|\bm{w}_r(k+1)-\bm{w}_r(k)\|_2.
		\end{aligned}
	\end{equation}
	
	For $j\in [n_2]$, we consider $I_1^{n_1+j}(k)$, which can be written as follows.
	\begin{align*}
		I_1^{n_1+j}(k)	&=h_j(k+1)-h_j(k)-\left\langle \bm{w}(k+1)-\bm{w}(k), \frac{\partial h_j(k)}{\partial \bm{w}} \right\rangle \\
		&= \sum\limits_{r=1}^m \frac{a_r}{\sqrt{n_2m}} \left[\sigma(\bm{w}_r(k+1)^T\bm{y}_j)-\sigma(\bm{w}_r(k)^T \bm{y}_j) -\langle \bm{w}_r(k+1)-\bm{w}_r(k), \bm{y}_j\rangle \sigma^{'}(\bm{w}_r(k)^T\bm{y}_j)\right].
	\end{align*}
	
	From the mean value theorem, we have that there exists $\zeta(k)\in \mathbb{R}$ such that
	\[\sigma(\bm{w}_r(k+1)^T\bm{y}_j)-\sigma(\bm{w}_r(k)^T \bm{y}_j)=\sigma^{'}(\zeta(k))\langle \bm{w}_r(k+1)-\bm{w}_r(k), \bm{y}_j\rangle \]
	and
	\begin{align*}
		|\sigma^{'}(\zeta(k))-\sigma^{'}(\bm{w}_r(k)^T\bm{y}_j)|&\leq 2B|\zeta(k)-\bm{w}_r(k)^T\bm{y}_j| \\
		&\leq 2\sqrt{2} B \|\bm{w}_r(k+1)-\bm{w}_r(k)\|_2.
	\end{align*}
	Thus, 
	\begin{align*}
		&|\sigma(\bm{w}_r(k+1)^T\bm{y}_j)-\sigma(\bm{w}_r(k)^T \bm{y}_j) -\langle \bm{w}_r(k+1)-\bm{w}_r(k), \bm{y}_j\rangle \sigma^{'}(\bm{w}_r(k)^T\bm{y}_j)| \\
		&=| \sigma^{'}(\zeta(k))\langle \bm{w}_r(k+1)-\bm{w}_r(k), \bm{y}_j\rangle -\sigma(\bm{w}_r(k)^T \bm{y}_j) -\langle \bm{w}_r(k+1)-\bm{w}_r(k), \bm{y}_j\rangle \sigma^{'}(\bm{w}_r(k)^T\bm{y}_j)| \\
		&=|(\sigma^{'}(\zeta(k))-\sigma^{'}(\bm{w}_r(k)^T\bm{y}_j))\langle \bm{w}_r(k+1)-\bm{w}_r(k), \bm{y}_j\rangle|\\
		&\lesssim B\|\bm{w}_r(k+1)-\bm{w}_r(k)\|_2.
	\end{align*}
	Therefore, for $j\in [n_2]$,
	\begin{equation}
		|I_1^{n_1+j}(k)|\lesssim \frac{B}{\sqrt{n_2m}} \sum\limits_{r=1}^m \|\bm{w}_r(k+1)-\bm{w}_r(k)\|_2^2.	
	\end{equation}
	From the updating rule of gradient descent, we can deduce that for every $r\in [m]$,
	\begin{equation}
		\|\bm{w}_r(k+1)-\bm{w}_r(k)\|_2=\left\| -\eta \frac{\partial L(k)}{\partial \bm{w}_r }\right\|_2 \lesssim \frac{\eta B^2}{\sqrt{m}} \sqrt{L(k)}.
	\end{equation}
	Plugging (58) into (57) and (56), we can deduce that
	\begin{equation}
		\begin{aligned}
			|I_1^p(k)|&\lesssim \frac{B}{\sqrt{n_1m}} \sum\limits_{r=1}^m \|\bm{w}_r(k+1)-\bm{w}_r(k)\|_2^2 + \frac{B^2}{\sqrt{n_1m}}\sum\limits_{r\in S_p^{\perp}}  \|\bm{w}_r(k+1)-\bm{w}_r(k)\|_2 \\
			&\lesssim \frac{B}{\sqrt{n_1m}} \sum\limits_{r=1}^m \frac{\eta^2B^4}{m} L(k) +\frac{B^2}{\sqrt{n_1m}}\sum\limits_{r\in S_p^{\perp}}\frac{\eta B^2}{\sqrt{m}} \sqrt{L(k)} \\
			&= \frac{\eta^2B^5 L(k)}{\sqrt{n_1m}} +\frac{\eta B^4\sqrt{L(k)}}{\sqrt{n_1}} \frac{1}{m}\sum\limits_{r=1}^m I\{r\in S_p^{\perp}\} \\
			&\leq \frac{\eta^2B^5 \sqrt{L(0)}\sqrt{L(k)}}{\sqrt{n_1m}} +\frac{\eta B^4\sqrt{L(k)}}{\sqrt{n_1}} \frac{1}{m}\sum\limits_{r=1}^m I\{r\in S_p^{\perp}\}
		\end{aligned}
	\end{equation}
	and
	\begin{equation}
		\begin{aligned}
			|I_1^{n_1+j}(k)|&\lesssim \frac{B}{\sqrt{n_2m}} \sum\limits_{r=1}^m \|\bm{w}_r(k+1)-\bm{w}_r(k)\|_2^2 \\
			&\lesssim \frac{B}{\sqrt{n_2m}} \sum\limits_{r=1}^m \frac{\eta^2B^4}{m} L(k) \\
			&\leq \frac{\eta^2B^5 \sqrt{L(0)}\sqrt{L(k)}}{\sqrt{n_2m}}.
		\end{aligned}
	\end{equation}
	Note that
	\[P(A_{p,r}) \leq \frac{2R}{\sqrt{2\pi}}, \ S_p=\{r\in [m]:I\{A_{p,r}\}=0\}.\]
	Thus, from Bernstein's inequality, we have that with probability at least $1-e^{-mR}$,
	\[\frac{1}{m}\sum\limits_{r=1}^m I\{r\in S_p^{\perp}\} =\frac{1}{m}\sum\limits_{r=1}^m I\{A_{pr}\}\lesssim 4R.\]
	Then the inequality holds for all $p\in[n_1]$ with probability at least $1-n_1e^{-mR}$. Plugging this into (59), we can conclude that for every $p\in [n_1]$
	\begin{equation}
		\begin{aligned}
			|I_1^p(k)|\lesssim  \frac{\eta^2B^5 \sqrt{L(0)}\sqrt{L(k)}}{\sqrt{n_1m}}+\frac{\eta B^4\sqrt{L(k)}}{\sqrt{n_1}}R.
		\end{aligned}
	\end{equation}
	Combining (60) and (61), we have that
	\begin{align*}
		\|\bm{I}_1(k)\|_2&=\sqrt{\sum\limits_{p=1}^{n_1} |I_1^p(k)|^2 +\sum\limits_{j=1}^{n_2} |I_1^{n_1+j}(k)|^2  }\\
		&\lesssim  \frac{\eta^2B^5 \sqrt{L(0)}\sqrt{L(k)}}{\sqrt{m}}+\eta B^4\sqrt{L(k)}R.
	\end{align*}
	Plugging this into (50) yields that
	\begin{align*}
		&\left\|\begin{pmatrix} \bm{s}(k+1)\\  \bm{h}(k+1) \end{pmatrix}\right\|_2^2 \\
		&\leq \left(1-\frac{\eta \lambda_0}{2}\right)^2 \left\|\begin{pmatrix} \bm{s}(k)\\ \bm{h}(k) \end{pmatrix} \right\|_2^2 + \|\bm{I}_1(k) \|_2^2+2\left(1-\frac{\eta \lambda_0}{2}\right) \left\| \begin{pmatrix} \bm{s}(k)\\ \bm{h}(k) \end{pmatrix} \right\|_2 \|\bm{I}_1(k) \|_2 \\
		&\leq \left[\left(1-\frac{\eta \lambda_0}{2}\right)^2 +  C^2\left(\frac{\eta^2B^5 \sqrt{L(0)}}{\sqrt{m}}+\eta B^4R\right)^2 +2C\left(\frac{\eta^2B^5 \sqrt{L(0)}}{\sqrt{m}}+\eta B^4R\right)\right]\left\| \begin{pmatrix} \bm{s}(k)\\ \bm{h}(k) \end{pmatrix} \right\|_2^2\\
		&\leq \left(1-\frac{\eta\lambda_0}{2}\right)\left\|  \begin{pmatrix} \bm{s}(k)\\ \bm{h}(k) \end{pmatrix} \right\|_2^2,
	\end{align*}
	where $C$ is a universal constant and the last inequality requires that
	\[\frac{\eta^2B^5 \sqrt{L(0)}}{\sqrt{m}} \lesssim \eta \lambda_0 , \ \eta B^4R \lesssim \eta \lambda_0.\]
	Recall that we also require $CM^2R<\frac{\lambda_0}{4}$ for $R$ in (8) and
	\[R^{'}=\frac{CB^2\sqrt{L(0)}}{\sqrt{m}\lambda_0}<R\]
	for $R^{'}$ in (47) to make sure $\|\bm{H}(k)-\bm{H}(0)\|_2\leq \frac{\lambda_0}{4}$.
	
	Finally, with $R=\mathcal{O}(\frac{\lambda_0}{M^2})$ and the upper bound of $L(0)$, $m$ needs to satisfies that 
	\[m=\Omega\left( \frac{M^4B^{4}L(0)}{\lambda_0^4} \right)=\Omega\left(\frac{d^{8}}{\lambda_0^4}\log^6 \left(\frac{md}{\delta}\right)  \log \left(\frac{n_1+n_2}{\delta}\right) \right). \]
\end{proof}

\section{Proof of Section 4}
\subsection{Proof of Lemma 4.4}
\begin{proof}
Recall that
\[\bm{H}(\bm{w})=\bm{D}^T \bm{D}, \quad \bm{D}=\left[\frac{\partial s_1(\bm{w})}{\partial \bm{w}}, \cdots,\frac{\partial s_{n_1}(\bm{w})}{\partial \bm{w}}, \frac{\partial h_1(\bm{w})}{\partial \bm{w}}, \cdots, \frac{\partial h_{n_2}(\bm{w})}{\partial \bm{w}} \right],\]
and $\bm{H}^{\infty}=\mathbb{E}_{\bm{w}\sim \mathcal{N}(\bm{0}, \bm{I}) } \bm{H}(\bm{w})$.

We denote $\varphi(\bm{x};\bm{w}) =\sigma^{'}(\bm{w}^T \bm{x})w_0-\sigma^{''}(\bm{w}^T \bm{x})\|\bm{w}_1\|_2^2$, where $\bm{w}=(w_0, \bm{w}_1^T)^T$, $w_0 \in \mathbb{R}, \bm{w}_1\in \mathbb{R}^d$, then
\[ \frac{\partial s_p(\bm{w})}{\partial \bm{w}_r}= \frac{1}{\sqrt{n}_1}\frac{a_r}{\sqrt{m}} \frac{\partial \varphi(\bm{x}_p;\bm{w}_r)}{\partial \bm{w}_r}.\]

Similarly, we denote $\psi(\bm{y};\bm{w})= \sigma(\bm{w}^T \bm{y})$, then
\[\frac{\partial h_j(\bm{w})}{\partial \bm{w}_r}= \frac{1}{\sqrt{n}_2}\frac{a_r}{\sqrt{m}} \frac{\partial \psi(\bm{y}_j,\bm{w}_r)}{\partial \bm{w}_r}. \]

With the notations, we can deduce that
\begin{equation*}	
H_{p,j}^{\infty}=\left\{
\begin{aligned}
\frac{1}{n_1}\mathbb{E}_{\bm{w} \sim \mathcal{N}(\bm{0},\bm{I}) }	\left\langle \frac{\partial \varphi(\bm{x}_p;\bm{w})}{\partial \bm{w}},\frac{\partial \varphi(\bm{x}_j;\bm{w})}{\partial \bm{w}} \right\rangle & , & 1\leq p \leq n_1, 1\leq j\leq n_1, \\
\frac{1}{\sqrt{n_1n_2}}\mathbb{E}_{\bm{w} \sim \mathcal{N}(\bm{0},\bm{I}) }	\left\langle \frac{\partial \varphi(\bm{x}_p;\bm{w})}{\partial \bm{w}},\frac{\partial \psi(\bm{y}_j;\bm{w})}{\partial \bm{w}} \right\rangle & , & 1\leq p \leq n_1, n_1+1\leq j\leq n_1+n_2,\\
\frac{1}{n_2}\mathbb{E}_{\bm{w} \sim \mathcal{N}(\bm{0},\bm{I}) }	\left\langle \frac{\partial \psi(\bm{y}_p;\bm{w})}{\partial \bm{w}},\frac{\partial \psi(\bm{y}_j;\bm{w})}{\partial \bm{w}} \right\rangle & , & n_1+1\leq p \leq n_1+n_2, n_1+1\leq j\leq n_1+n_2,
\end{aligned}
\right.
\end{equation*}
where $H_{p,j}^{\infty}$ is the $(p,j)$-th entry of $\bm{H}^{\infty}$.

The proof of this lemma requires tools from functional analysis. Let $\mathcal{H}$ be a Hilbert space of integrable $(d+2)$-dimensional vector fields on $\mathbb{R}^{d+2}$, i.e., $f\in \mathcal{H}$ if $\mathbb{E}_{\bm{w}\sim \mathcal{N}(\bm{0},\bm{I}) }[\|f(\bm{w})\|_2^2] <\infty$. The inner product for any two elements $f,g$ in $\mathcal{H}$ is $\mathbb{E}_{\bm{w}\sim \mathcal{N}(\bm{0},\bm{I}) }[\langle f(\bm{w}),g(\bm{w}) \rangle]$. Thus, proving $\bm{H}^{\infty}$ is strictly positive definite is equivalent to show that
\[\frac{\partial \varphi(\bm{x}_1;\bm{w})}{\partial \bm{w}} ,\cdots,\frac{\partial \varphi(\bm{x}_{n_1};\bm{w})}{\partial \bm{w}}, \frac{\partial \psi(\bm{y}_1;\bm{w})}{\partial \bm{w}},\cdots, \frac{\partial \psi(\bm{y}_{n_2};\bm{w})}{\partial \bm{w}} \in \mathcal{H}\]
are linearly independent. Suppose that there are $\alpha_1,\cdots,\alpha_{n_1}, \beta_{1}, \cdots, \beta_{n_2}\in \mathbb{R}$ such that
\[\alpha_1\frac{\partial\varphi(\bm{x}_1;\bm{w})}{\partial \bm{w}}+\cdots+ \alpha_{n_1}\frac{\partial\varphi(\bm{x}_{n_1};\bm{w})}{\partial \bm{w}}+ \beta_{1}\frac{\partial\psi(\bm{y}_1;\bm{w})}{\partial \bm{w}}+\cdots+ \beta_{n_2}\frac{\partial \psi(\bm{y}_{n_2};\bm{w})}{\partial \bm{w}}=0 \ in \ \mathcal{H}.\]
This implies that
\begin{equation}
\alpha_1\frac{\partial\varphi(\bm{x}_1;\bm{w})}{\partial \bm{w}}+\cdots+ \alpha_{n_1}\frac{\partial\varphi(\bm{x}_{n_1};\bm{w})}{\partial \bm{w}}+ \beta_{1}\frac{\partial\psi(\bm{y}_1;\bm{w})}{\partial \bm{w}}+\cdots+ \beta_{n_2}\frac{\partial \psi(\bm{y}_{n_2};\bm{w})}{\partial \bm{w}}=0	
\end{equation}
holds for all $\bm{w}\in\mathbb{R}^{d+1}$, as $\sigma(\cdot)$ is smooth.

First, we compute the derivatives of $\varphi$ and $\psi$ with respect to $\bm{w}$. For the $k$-th derivative of $\psi(\bm{y};\bm{w})$ with respect to $\bm{w}$, we have
\[ \frac{\partial^k \psi(\bm{y};\bm{w})}{\partial \bm{w}^k}= \sigma^{(k)}(\bm{w}^T \bm{y}) \bm{y}^{\otimes(k)},\]
where $\otimes$ denotes the tensor product.

For $\varphi(\bm{x};\bm{w})$, let $\varphi_0(\bm{x};\bm{w})=\sigma^{'}(\bm{w}^T \bm{x})w_0$ and $\varphi_i(\bm{x};\bm{w})=\sigma^{''}(\bm{w}^T \bm{x})w_i^2$ for $1\leq i \leq d$. Then
\[\varphi(\bm{x};\bm{w})=\varphi_0(\bm{x};\bm{w})-\sum\limits_{i=1}^d \varphi_i(\bm{x};\bm{w}).\]

For the $k$-th derivative of $\varphi_0(\bm{x};\bm{w})$ with respect to $\bm{w}$, analogous to the Leibniz rule for the $k$-th derivative of the product of two scalar functions, we have
\begin{equation}
	\begin{aligned}
		\frac{\partial^k \varphi_0(\bm{x};\bm{w})}{\partial \bm{w}^k} &=\sigma^{(k+1)}(\bm{w}^T \bm{x})w_0 \bm{x}^{\otimes (k)}+\sum\limits_{i=1}^k \bm{x}^{\otimes (i-1)}\otimes \bm{e}_0 \otimes \bm{x}^{\otimes (k-i)} \sigma^{(k)}(\bm{w}^T \bm{x})\\
		&= \sigma^{(k+1)}(\bm{w}^T \bm{x})w_0 \bm{x}^{\otimes (k)}+\sigma^{(k)}(\bm{w}^T \bm{x})\sum\limits_{i=1}^k \bm{e}_0^{(i)} \otimes \bm{x}^{\otimes (k)} ,
	\end{aligned}
\end{equation}
where $\bm{e}_0=(1,0,\cdots,0)^T\in \mathbb{R}^{d+1}$ and in the second equality, $\bm{e}_0^{(i)}$ denotes that $\bm{e}_0$ is placed at the $i$-th position.

Similarly, for $\varphi_i(\bm{x};\bm{w})$ where $1\leq i \leq d$, taking $i=1$ as example, we have
\begin{equation}
	\begin{aligned}
		\frac{\partial^{k}\varphi_1(\bm{x};\bm{w})}{\partial \bm{w}^k}&= \sigma^{(k+2)}(\bm{w}^T \bm{x})w_1^2 \bm{x}^{\otimes(k)} +2kw_1 \sigma^{(k+1)}(\bm{w}^T \bm{x})\sum\limits_{i=1}^k
		\bm{e}_1^{(i)}\otimes \bm{x}^{\otimes (k-1)} \\
		&\quad +k(k-1)\sigma^{(k)}(\bm{w}^T \bm{x})\sum\limits_{1\leq i<j\leq k} \bm{e}_1^{(i)}\otimes \bm{e}_1^{(j)} \otimes \bm{x}^{\otimes (k-2)},
	\end{aligned}	
\end{equation}
where $\bm{e}_i\in \mathbb{R}^{d+1}$ is a vector with the $(i+1)$-th component equal to $1$ and all other components equal to $0$, and $\bm{e}_1^{(i)}$ indicates that $\bm{e}_1$ is placed at the $i$-th position.

By combining the derivatives of $\varphi_0(\bm{x};\bm{w}),\cdots, \varphi_d(\bm{x};\bm{w})$ from equations (64) and (65), we can compute the $k$-th derivative of $w_0\sigma^{'}(\bm{w}^T \bm{x}) -\sum\limits_{i=1}^d w_i^2\sigma^{''}(\bm{w}^T \bm{x})$ as follows:
\begin{equation}
	\begin{aligned}
		\frac{\partial^{k}\varphi(\bm{x};\bm{w})}{\partial \bm{w}^k}&=\frac{\partial^{k}\varphi_0(x;\bm{w})}{\partial \bm{w}^k}-\sum\limits_{i=1}^d \frac{\partial^{k}\varphi_i(\bm{x};\bm{w})}{\partial \bm{w}^k}\\
		&= w_0\sigma^{(k+1)}(\bm{w}^T \bm{x}) \bm{x}^{\otimes (k)}+\sigma^{(k)}(\bm{w}^T \bm{x})\sum\limits_{i=1}^k \bm{e}_0^{(i)} \otimes \bm{x}^{\otimes (k)} \\
		&\quad - \sum\limits_{t=1}^d \left[ w_t^2\sigma^{(k+2)}(\bm{w}^T \bm{x}) \bm{x}^{\otimes(k)} + 2kw_t\sigma^{(k+1)}(\bm{w}^T \bm{x}) \sum\limits_{i=1}^k\bm{e}_t^{(i)} \otimes \bm{x}^{\otimes(k-1)} \right.\\
		&\quad \left. +k(k-1) \sigma^{(k)}(\bm{w}^T \bm{x})    \sum\limits_{1\leq i<j\leq k}  \bm{e}_t^{(i)}\otimes \bm{e}_t^{(j)}\otimes \bm{x}^{\otimes (k-2)}\right].
	\end{aligned}
\end{equation}

Note that when any two points in $\{\bm{x}_1,\cdots,\bm{x}_{n_1},\bm{y}_1,\cdots,\bm{y}_{n_2}\}$ are non-parallel, the tensors
\[\bm{x}_1^{\otimes(n_1+n_2)},\cdots,\bm{x}_{n_1}^{\otimes(n_1+n_2)} ,\bm{y}_1^{\otimes(n_1+n_2)},\cdots, \bm{y}_{n_2}^{\otimes(n_1+n_2)}\]
are linearly independent (see Lemma G.6 in \cite{17}). This observation motivates us to take the $(k-1)$-th derivative of both sides of equation (63) with respect to $\bm{w}$, yielding
\begin{equation}
	\alpha_1\frac{\partial^k\varphi(\bm{x}_1;\bm{w})}{\partial \bm{w}^k}+\cdots+ \alpha_{n_1}\frac{\partial^k\varphi(\bm{x}_{n_1};\bm{w})}{\partial \bm{w}^k}+ \beta_{1}\frac{\partial^k\psi(\bm{y}_1;\bm{w})}{\partial \bm{w}^k}+\cdots+ \beta_{n_2}\frac{\partial^k \psi(\bm{y}_{n_2};\bm{w})}{\partial \bm{w}^k}=0.	
\end{equation}

Since this equation holds for all $\bm{w}\in \mathbb{R}^{d+1}$, we specifically consider $\bm{w}=(w_0, \bm{0}_{d})$, where $w_0$ is to be determined. Under this condition, equation (67) becomes
\begin{equation}
	\begin{aligned}
		w_0\sum\limits_{p=1}^{n_1}\alpha_p \left[ \sigma^{(k+1)}(w_0x_p^{0})\bm{x}_p^{\otimes(k)}  \right]+\sum\limits_{p=1}^{n_1}\alpha_p \left[ \sigma^{(k)}(w_0x_p^{0})\bm{z}_p  \right]+\sum\limits_{j=1}^{n_2} \beta_{j} \left[ \sigma^{(k)}(w_0y_j^{0}) \bm{y}_{j}^{\otimes(k)} \right] =0,
	\end{aligned}
\end{equation}
where the tensor $\bm{z}_p$ is defined as
\begin{equation}
	\bm{z}_p=\sum\limits_{i=1}^k \bm{e}_0^{(i)} \otimes \bm{x}_p^{\otimes (k)}-k(k-1)\sum\limits_{t=1}^d     \sum\limits_{1\leq i<j\leq k} \bm{e}_t^{(i)}\otimes \bm{e}_t^{(j)}\otimes \bm{x}_p^{\otimes (k-2)}.
\end{equation}

By assumption, for any positive integer $n\geq 0$ we have $\lim\limits_{x\to +\infty} \frac{\sigma^{(n)}(x)}{\phi(x)}=c_n\neq 0$. We first consider the case where all input components satisfy $x_1^0=\cdots=x_{n_1}^0=y_{1}^{0}=\cdots=y_{n_2}^{0}=a>0$. Under this condition, equation (68) simplifies to:
\begin{equation}
	w_0\sigma^{(k+1)}(w_0 a)\left[ \sum\limits_{p=1}^{n_1}\alpha_p \bm{x}_p^{\otimes (k)} \right] + \sigma^{(k)}(w_0 a)\left[ \sum\limits_{p=1}^{n_1}\alpha_p \bm{z}_p \right]+ \sigma^{(k)}(w_0 a) \left[\sum\limits_{j=1}^{n_2}\beta_j \bm{y}_j^{\otimes (k)} \right]=0.
\end{equation} 

Dividing both sides of equation (70) by $\phi(w_0 a)$ yields
\begin{equation}
	w_0\frac{\sigma^{(k+1)}(w_0 a)}{\phi(w_0 a)}\left[ \sum\limits_{p=1}^{n_1}\alpha_p \bm{x}_p^{\otimes (k)} \right] + \frac{\sigma^{(k)}(w_0 a)}{\phi(w_0 a)}\left[ \sum\limits_{p=1}^{n_1}\alpha_p \bm{z}_p \right]+ \frac{\sigma^{(k)}(w_0 a)}{\phi(w_0 a)} \left[\sum\limits_{j=1}^{n_2}\beta_j \bm{y}_j^{\otimes (k)} \right]=0.
\end{equation}

Now taking the limit as $w_0$ tends to positive infinity, we observe that $\frac{\sigma^{(k)}(w_0 a)}{\phi(w_0 a)}$ converges to a non-zero constant, while $w_0\frac{\sigma^{(k+1)}(w_0 a)}{\phi(w_0 a)}$ diverges to infinity. This asymptotic behavior leads to the following conclusions:
\begin{equation}
	\sum\limits_{p=1}^{n_1}\alpha_p \bm{x}_p^{\otimes (k)}=0, \ \sum\limits_{p=1}^{n_1}\alpha_p \bm{z}_p+	\sum\limits_{j=1}^{n_2}\beta_j \bm{y}_j^{\otimes (k)}=0.
\end{equation}
By the linear independence of the tensor products (established earlier), we can deduce that $\alpha_p=0$ for all $p = 1,\cdots, n_1$, which subsequently implies $\sum\limits_{j=1}^{n_2}\beta_j \bm{y}_j^{\otimes (k)}=0$ and thus $\beta_j=0$ for all $j=1,\cdots, n_2$. 

When our previous assumption does not hold—that is, when $x_1^0,\cdots,x_{n_1}^{0}, y_{1}^0,\cdots,y_{n_2}^{0}$ are not necessarily all equal—we proceed as follows:

Case 1: All elements are strictly positive.

Let $b=\min\{x_1^0,\cdots,x_{n_1}^{0}, y_{1}^0,\cdots,y_{n_2}^{0}\}$. Dividing both sides of (68) by $\phi(w_0 b)$, we observe that for any $x>b$, 
\begin{equation*}
	\lim\limits_{w_0\to +\infty} \frac{\sigma^{(n)}(w_0 x)}{\phi(w_0 b)} =0.
\end{equation*}
Thus, the problem reduces to the previously considered case where all inputs are equal (to $b$). Due to linear independence, the coefficients $\alpha_p$ 
and $\beta_j$ corresponding to the minimal $b$ must vanish. Repeating this process iteratively, we conclude that all $\alpha_p$ and $\beta_j$ must be zero.

Case 2: Some elements are zero.

Since the $\bm{x}_p$ are interior points (and thus non-zero), any zero-valued inputs must correspond to boundary or initial conditions. Let us assume without loss of generality that $y_1^0,\cdots, y_{n_2}^0$ are all zero.

1. If $\sigma^{(k)}(0)=0$, , then following our previous method, we can conclude that the coefficients corresponding to non-zero inputs vanish. Returning to equation (67), we have
\begin{equation}
	\sum\limits_{j=1}^{n_2} \beta_j \sigma^{(k)}(\bm{w}^T \bm{y}_j)\bm{y}_j^{\otimes(k)} = 0.
\end{equation}
By the independence of $\bm{y}_1,\cdots, \bm{y}_{n_2}$, we can deduce that $\beta_j \sigma^{(k)}(\bm{w}^T \bm{y}_j)=0$ holds for all $j\in [n_2]$. From the assumption, we can select $\bm{w}$ such that $\sigma^{(k)}(\bm{w}^T \bm{y}_j)\neq 0$, and consequently, $\beta_j=0$ holds for all $j\in [n_2]$.

2. If $\sigma^{(k)}(0)\neq 0$, let $b$ be the smallest strictly positive value among $x_1^0,\cdots,x_{n_1}^{0}, y_{1}^0,\cdots,y_{n_2}^{0}$. Divide (70) by $\phi(w_0 b/2)$. Since all other positive terms decay to zero as $w_0 \to +\infty$, we obtain:
\begin{equation}
	\lim\limits_{w_0\to \infty} \sum\limits_{j=1}^{n_2} \beta_{j} \sigma^{(k)}(0)\bm{y}_{j}^{\otimes (k)}/\phi(w_0 b/2) =0.
\end{equation}
This implies that $ \sum\limits_{j=1}^{n_2} \beta_{j} \bm{y}_{j}^{\otimes (k)}=0$. By linear independence, all $\beta_j=0$.
\end{proof}

\begin{remark} 
The key point in the proof lies in the fact that the order of the PDE in the interior is higher than that of the initial and boundary conditions, allowing for a natural extension to broader classes of PDEs. For general PDEs, we may focus solely on the interior and boundary, assuming the interior is of second order and the boundary is of first order. Suppose the second-order interior term is taken at $x_0$, i.e., it has the form $\frac{\partial^2 u}{\partial x_0^2}$, and the first-order boundary term is also taken at $x_0$. Since we can translate the coordinates, without loss of generality, we can assume that  all $x_0$-components are positive.

For the interior, taking the $k$-th derivative of $w_0^2 \sigma^{(2)}(\bm{w}^T \bm{x})$ yields that 
\begin{equation*}
\begin{aligned}
&w_0^2\sigma^{(k+2)}(\bm{w}^T \bm{x})\bm{x}^{\otimes(k)} +2kw_0 \sigma^{(k+1)}(\bm{w}^T \bm{x})\sum\limits_{i=1}^k
\bm{e}_1^{(i)}\otimes \bm{x}^{\otimes (k-1)} \\
&\quad +k(k-1)\sigma^{(k)}(\bm{w}^T \bm{x})\sum\limits_{1\leq i<j\leq k} \bm{e}_0^{(i)}\otimes \bm{e}_0^{(j)} \otimes \bm{x}^{\otimes (k-2)}.
\end{aligned}	
\end{equation*}

For the boundary, taking the $k$-th derivative of $w_0 \sigma^{(1)}(\bm{w}^T \bm{x})$ yields that
\begin{equation*}
\begin{aligned}
\sigma^{(k+1)}(\bm{w}^T \bm{x})w_0 \bm{x}^{\otimes (k)}+\sigma^{(k)}(\bm{w}^T \bm{x})\sum\limits_{i=1}^k \bm{e}_0^{(i)} \otimes \bm{x}^{\otimes (k)} .
\end{aligned}
\end{equation*}

As before, we set $\bm{w}=(w_0, \bm{0})$. Then, the equation (68) becomes
\begin{equation*}
\begin{aligned}
&w_0^2\sum\limits_{p=1}^{n_1}\alpha_p \left[ \sigma^{(k+1)}(w_0x_p^{0})\bm{x}_p^{\otimes(k)}  \right]+w_0\sum\limits_{p=1}^{n_1}\alpha_p \left[ \sigma^{(k)}(w_0x_p^{0})\bm{z}_p^{0}  \right]+\sum\limits_{p=1}^{n_1}\alpha_p \left[ \sigma^{(k)}(w_0x_p^{0})\bm{z}_p^{1}  \right] \\
&\quad +w_0\sum\limits_{j=1}^{n_2} \beta_{j} \left[ \sigma^{(k)}(w_0y_j^{0}) \bm{y}_{j}^{\otimes(k)} \right] + \sum\limits_{j=1}^{n_2} \beta_{j} \left[ \sigma^{(k)}(w_0y_j^{0}) \bm{z}_{j}^{2} \right]=0,
\end{aligned}
\end{equation*}	
where $\bm{z}_p^0, \bm{z}_p^1, \bm{z}_j^2$ are tensors of similar form $\bm{z}_p$ in (69), whose explicit definitions are omitted for simplicity.
Dividing both sides by $w_0\phi(w_0x_p^{0})$ reduces it to the form considered earlier. We can therefore conclude that the Gram matrix is strictly positive definite. Indeed, since the orders of the interior and boundary terms in the partial differential equation differ, we can relax the conditions in Lemma 4.4 to simply requiring that no two samples in $\{\bm{x}_p\}_{p=1}^{n_1}$ are parallel and no two samples in $\{\bm{y}_j\}_{j=1}^{n_2}$ are parallel. In brief, we can set $k=n_1-1$ and $k=n_2-1$ in equation (67), and then use the method described above.	
\end{remark}

\begin{remark} 

For the activation functions $\sin(x)$ and $\cos(x)$,  in equation (68), we may assume that $\sigma^{(k+1)}(x)=\sin(x), \sigma^{(k)}(x)=-\cos(x)$. Dividing both sides of equation (68) by $w_0$ and letting $w_0\to +\infty$, we can obtain 
\begin{equation*}
	\lim\limits_{w_0\to +\infty}\sum\limits_{p=1}^{n_1} \alpha_p \left[ \sin(w_0x_p^0)\bm{x}_p^{\otimes (k)} \right] = 0.
\end{equation*}

We express the general form of the components of the tensor above as
\begin{equation*}
	\sum\limits_{p=1}^{n_1} \alpha_p c_p \sin(w_0x_p^0)=\sum\limits_{i=1}^n a_i \sin(w_0 b_i),
\end{equation*}
where the $b_i>0$ are distinct and $c_p$ denotes the components of the tensor $\bm{x}_p^{\otimes (k)}$. For simplicity, we denote $w_0$ as $x$. To prove that $\sum\limits_{p=1}^{n_1} \alpha_p \bm{x}_p^{\otimes (k)} =0$, we need to show that any component of this tensor is zero, i.e., its general form satisfies $\sum\limits_{p=1}^{n_1} \alpha_p c_p=0$. This is equivalent to proving $\sum\limits_{i=1}^n a_i=0$.

Let $f(x)=\sum\limits_{i=1}^n a_i \sin(b_i x)$, note that dividing both sides of equation (68) by $w_0$ yields that $f(x)=\mathcal{O}(1/x)$. Thus, we can consider the average energy of $f^2(x)$ over the interval $[T,T+L]$, i.e., 
\begin{equation*}
	\frac{1}{L} \int_{T}^{T+L} f^2(x)dx.
\end{equation*}
Expanding this, we obtain
\begin{equation*}
\begin{aligned}
&\frac{1}{L} \int_{T}^{T+L} \left( \sum\limits_{i=1}^n a_i \sin(b_i x)\right)^2dx \\
&=\frac{1}{L} \int_{T}^{T+L} \left[\sum\limits_{i=1}^n a_i^2 \sin^2(b_ix) + \sum\limits_{i\neq j} a_i a_j\sin(b_ix)\sin(b_jx)\right]dx \\
&= \frac{1}{2}\sum\limits_{i=1}^n a_i^2 -\frac{1}{L}\sum\limits_{i=1}^n \frac{\sin(2b_i(T+L))-\sin(2b_i T)}{4b_i} \\
&\quad + \frac{1}{L}\sum\limits_{i\neq j} a_i a_j \frac{\sin((b_i-b_j)(T+L))-\sin((b_i-b_j)T)}{2(b_i-b_j)}\\
&\quad - \frac{1}{L}\sum\limits_{i\neq j} a_i a_j \frac{\sin((b_i+b_j)(T+L))-\sin((b_i+b_j)T)}{2(b_i+b_j)}.
\end{aligned}
\end{equation*}

Taking the limits $L\to +\infty$ and $T\to +\infty$ in the above equation, the right-hand side tends to $\frac{1}{2}\sum\limits_{i=1}^{n}a_i^2$. Regarding the left-hand side, recall that $f(x)=\mathcal{O}(1/x)$, thus for any $\epsilon>0$, there exists $T_0$ such that for all $x>T_0$, $|f(x)|<\epsilon$. Therefore, for $T>T_0$ and any $L$, we have
\begin{equation*}
\frac{1}{L} \int_{T}^{T+L} f^2(x)dx \leq \epsilon^2.
\end{equation*}

By the arbitrariness of $\epsilon$, we can deduce that
\begin{equation*}
	\lim\limits_{T, L\to +\infty} \frac{1}{L} \int_{T}^{T+L} f^2(x)dx =0.
\end{equation*}

Hence, we can deduce that $\frac{1}{2}\sum\limits_{i=1}^{n}a_i^2=0$, which implies that
$\sum\limits_{i=1}^n a_i =0$, i.e.,  $\sum\limits_{p=1}^{n_1} \alpha_p c_p =0$. Finally, we obtain
\begin{equation*}
	\sum\limits_{p=1}^{n_1} \alpha_p \bm{x}_p^{\otimes (k)}=0.
\end{equation*}
Applying the same approach as before, we conclude that $\alpha_p=0$ for all $p \in [n_1] $and $\beta_j=0$ for all $j\in [n_2]$.
\end{remark}

\subsection{Proof of Lemma 4.5}

\begin{proof}
Recall that 
\begin{align*}
\frac{\partial s_p(\bm{w})}{\partial \bm{w}_r}&= \frac{a_r}{\sqrt{n_1m}} \left[\sigma^{''}(\bm{w}_r^T \bm{x}_p)w_{r0}\bm{x}_p+\sigma^{'}(\bm{w}_r^T \bm{x}_p) \begin{pmatrix} 1 \\ \bm{0}_{d+1} \end{pmatrix} -\sigma^{'''}(\bm{w}_r^T \bm{x}_p)\|\bm{w}_{r1}\|_2^2 \bm{x}_p \right.\\
&\quad \left. -2\sigma^{''}(\bm{w}_r^T \bm{x}_p)\begin{pmatrix} 0\\\bm{w}_{r1} \end{pmatrix} \right]
\end{align*}
and
\[\frac{\partial h_j(\bm{w})}{\partial \bm{w}_r}=\frac{a_r}{\sqrt{n_2m}} \sigma^{'}(\bm{w}_r^T\bm{y}_j)\bm{y}_j.\]
(1) When $\sigma(\cdot)$ is the $\text{ReLU}^3$ activation function. 

From the form of $\frac{\partial s_p(\bm{w})}{\partial \bm{w}_r}$, we can deduce that
\begin{equation}
\begin{aligned}
&\left\|\frac{\partial s_p(\bm{w})}{\partial \bm{w}_r}-\frac{\partial s_p(0)}{\partial \bm{w}_r}\right\|_2 \\
&\lesssim  \frac{1}{\sqrt{n_1m}} \left[ R(\|\bm{w}_r(0)\|_2+1) +|I\{\bm{w}_r^T \bm{x}_p\geq 0\}-I\{\bm{w}_r(0)^T \bm{x}_p\geq 0\}|(\|\bm{w}_r(0)\|_2^2+1)\right] \\
&\leq \frac{1}{\sqrt{n_1m}} \left[ R(\|\bm{w}_r(0)\|_2+1) +I\{A_{pr}\}(\|\bm{w}_r(0)\|_2^2+1)\right],
\end{aligned}
\end{equation}
where the second inequality follows from the fact $\|\bm{w}-\bm{w}_r(0)\|_2<R\leq 1$ and the definition of $A_{pr}$ in (26).

Similarly, we have that
\begin{equation}
\left\|\frac{\partial h_j(\bm{w})}{\partial \bm{w}_r}-\frac{\partial h_j(0)}{\partial \bm{w}_r}\right\|_2\lesssim   \frac{1}{\sqrt{n_2m}}R(\|\bm{w}_r(0)\|_2+1).	
\end{equation}
Combining (75) and (76), we can deduce that
\begin{align*}
&\|\bm{J}(\bm{w})-\bm{J}(0)\|_2^2 \\
&\leq \|\bm{J}(\bm{w})-\bm{J}(0)\|_F^2 \\
&=\sum\limits_{i=1}^{n_1+n_2} \|\bm{J}_i(\bm{w})-\bm{J}_i(0)\|_2^2\\
&= \sum\limits_{r=1}^m \left( \sum\limits_{p=1}^{n_1} \left\|\frac{\partial s_p(\bm{w})}{\partial \bm{w}_r} - \frac{\partial s_p(0)}{\partial \bm{w}_r} \right\|_2^2  + \sum\limits_{j=1}^{n_2} \left\|\frac{\partial h_j(\bm{w})}{\partial \bm{w}_r} - \frac{\partial h_j(0)}{\partial \bm{w}_r} \right\|_2^2\right) \\
&\lesssim \sum\limits_{r=1}^m \left( \sum\limits_{p=1}^{n_1} \frac{1}{n_1m}\left(R(\|\bm{w}_r(0)\|_2+1)+I\{A_{pr}\}(\|\bm{w}_r(0)\|_2^2+1)\right)^2 +\sum\limits_{j=1}^{n_2} \frac{1}{n_2m}(R\|\bm{w}_r(0)\|_2+R)^2 \right)\\
&\lesssim \frac{R^2}{m} \sum\limits_{r=1}^m (\|\bm{w}_r(0)\|_2^2+1) +\frac{1}{n_1m} \sum\limits_{p=1}^{n_1}\sum\limits_{r=1}^m I\{A_{pr}\}(\|\bm{w}_r(0)\|_2^4+1) \\
&= \frac{R^2}{m} \sum\limits_{r=1}^m (\|\bm{w}_r(0)\|_2^2+1)\\ &\quad +\frac{1}{n_1m} \sum\limits_{p=1}^{n_1}\sum\limits_{r=1}^m I\{A_{pr}\} \left(\|\bm{w}_r(0)\|_2^4 I\{\|\bm{w}_r(0)\|_2^2\leq M\} +\|\bm{w}_r(0)\|_2^4 I\{\|\bm{w}_r(0)\|_2^2> M\} +1\right)\\
&\lesssim \frac{R^2}{m} \sum\limits_{r=1}^m (\|\bm{w}_r(0)\|_2^2+1)+ \frac{M^2}{n_1m}\sum\limits_{p=1}^{n_1} \sum\limits_{r=1}^m I\{A_{pr}\}+\frac{1}{m} \sum\limits_{r=1}^m \|\bm{w}_r(0)\|_2^4 I\{\|\bm{w}_r(0)\|_2^2> M\},
\end{align*}
where $M=2(d+2)\log(2m(d+2)/\delta)$. Note that from (47), we have
\[P\left(\exists r\in [m], \|\bm{w}_r(0)\|_2^2\geq2 (d+2)\log\left(\frac{2m(d+2)}{\delta}\right)\right)\leq \delta.\]
On the other hand, applying Bernstein's inequality yields that with probability at least $1-n_1e^{-mR}$,
\[\frac{1}{m}\sum\limits_{r=1}^m I\{A_{pr}\} <4R\]
holds for all $p\in [n_1]$. 
	
Therefore, we have that
\[\|\bm{J}(\bm{w})-\bm{J}(0)\|_2^2 \lesssim MR^2+R^2+M^2R \lesssim M^2R\]
holds with probability at least $1-\delta-n_1e^{-mR}$.

(2) Note that when $\sigma$ satisfies Assumption 4.3, $\sigma^{'},\sigma^{''}$ and $\sigma^{'''}$ are all Lipschitz continuous and bounded. Thus, we can obtain that
\begin{equation}
\left\|\frac{\partial s_p(\bm{w})}{\partial \bm{w}_r}-\frac{\partial s_p(0)}{\partial \bm{w}_r}\right\|_2 \lesssim \frac{1}{\sqrt{n_1m}} R(\|\bm{w}_r(0)\|_2^2+\|\bm{w}_r(0)\|_2+1)\lesssim \frac{1}{\sqrt{n_1m}} R(\|\bm{w}_r(0)\|_2^2+1),	
\end{equation}
where the second inequality is from Young's inequality.

Similarly, we have
\begin{equation}
\left\|\frac{\partial h_j(\bm{w})}{\partial \bm{w}_r}-\frac{\partial h_j(0)}{\partial \bm{w}_r}\right\|_2 \lesssim \frac{1}{\sqrt{n_2m}} R(\|\bm{w}_r(0)\|_2+1).	
\end{equation}
	
Combining (77) and (78) yields that
\begin{align*}
&\|\bm{J}(\bm{w})-\bm{J}(0)\|_2^2 \\
&\leq  \sum\limits_{r=1}^m \left( \sum\limits_{p=1}^{n_1} \left\|\frac{\partial s_p(\bm{w})}{\partial \bm{w}_r} - \frac{\partial s_p(0)}{\partial \bm{w}_r} \right\|_2^2  + \sum\limits_{j=1}^{n_2} \left\|\frac{\partial h_j(\bm{w})}{\partial \bm{w}_r} - \frac{\partial h_j(0)}{\partial \bm{w}_r} \right\|_2^2\right) \\
&\lesssim \sum\limits_{r=1}^m \left( \sum\limits_{p=1}^{n_1} \frac{1}{n_1m}(R\|\bm{w}_r(0)\|_2^2+R)^2 +\sum\limits_{j=1}^{n_2} \frac{1}{n_2m}(R\|\bm{w}_r(0)\|_2+R)^2 \right)\\
&\lesssim \frac{R^2}{m}\sum\limits_{r=1}^m (\|\bm{w}_r(0)\|_2^4+1) \\
&\lesssim R^2\left[d^2+ \frac{d^2}{\sqrt{m}}\sqrt{\log\left(\frac{1}{\delta}\right)}+\frac{d^2}{m}\left(\log\left(\frac{1}{\delta}\right)\right)^2 \right],
\end{align*}
where the last inequality follows from the fact that $\left\|\|\bm{w}_r(0)\|_2^4\right\|_{\psi_{\frac{1}{2}} } \lesssim d^2 $ and Lemma D.4.
\end{proof}

\subsection{Proof of Theorem 4.7}
For the sake of completeness in the proof, we restate Condition 1 and Corollary 4.11 from the main text, and label them as Condition 3 and Corollary 10.3, respectively.

\begin{condition}
	At the $t$-th iteration, we have $\|\bm{w}_r(t)\|_2 \leq B$ and
	\[\|\bm{w}_r(t)-\bm{w}_r(0)\|_2 \leq \frac{CB^2\sqrt{L(0)}}{\sqrt{m}\lambda_0}:=R^{'}\] 
	for all $r\in [m]$, where $C$ is a universal constant and $B=\sqrt{2(d+2)\log \left(\frac{2m(d+2)}{\delta} \right) }+1.$
\end{condition}

\begin{corollary}
	If Condition 3 holds for $t=0,\cdots, k$ and $R^{'}\leq R$ and $R^{''}\lesssim \sqrt{1-\eta}\sqrt{\lambda_0}$, then  
	\[L(t)\leq (1-\eta)^t L(0),\]
	holds for $t=0,\cdots, k$, where $R$ is the constant in Lemma 4.5 and 
	$R^{''}=CM\sqrt{R}$ in (16) when $\sigma$ is the $\text{ReLU}^3$ activation function, $R^{''}=CdR$ in (18) when $\sigma$ satisfies Assumption 4.3.
\end{corollary}

Thanks to Corollary 10.3, it is sufficient to prove that Condition 3 also holds for $t=k+1$. For readability, we defer the proof of Corollary 10.3 to the end of this section. In the following, we are going to show that the Condition 3 also holds for $t=k+1$, thus combining Condition 3 and Corollary 10.3 leads to Theorem 4.7. 

\begin{proof}[Sketch Proof of Theorem 4.7]
	First, let $\bm{u}(t)=\begin{pmatrix}
		\bm{s}(t)\\ \bm{h}(t)
	\end{pmatrix}$, then from the updating formula of NGD (11), we have
	\begin{equation}
		\begin{aligned}
			&\bm{u}(t+1)-\bm{u}(t)\\
			&=\bm{u}\left(\bm{w}(t)-\eta \bm{J}(t)^T \bm{H}(t)^{-1} \bm{u}(\bm{w}(t))\right) -\bm{u}(\bm{w}(t)) \\
			&= -\int_{0}^{1} \left\langle \frac{\partial \bm{u}(\bm{w}(s))}{\partial \bm{w}} , \eta \bm{J}(t)^T \bm{H}(t)^{-1} \bm{u}(\bm{w}(t))\right\rangle ds \\
			&= -\int_{0}^{1} \left\langle \frac{\partial \bm{u}(\bm{w}(t))}{\partial \bm{w}} , \eta \bm{J}(t)^T \bm{H}(t)^{-1} \bm{u}(\bm{w}(t))\right\rangle ds \\
			&+ \int_{0}^{1} \left\langle \frac{\partial \bm{u}(\bm{w}(t))}{\partial \bm{w}}-\frac{\partial \bm{u}(\bm{w}(s))}{\partial \bm{w}} , \eta \bm{J}(t)^T \bm{H}(t)^{-1} \bm{u}(t)\right\rangle ds \\
			&:= \bm{I}_1(t)+\bm{I}_2(t),
		\end{aligned}
	\end{equation}
	where the second equality is from the fundamental theorem of calculus and $\bm{w}(s)=s\bm{w}(t+1)+(1-s)\bm{w}(t)=\bm{w}(t)-s\eta \bm{J}(t)^T \bm{H}(t)^{-1} \bm{u}(t)$.
	
	In the proof, we assume that Condition 3 holds for $t=0,\cdots, k$. Then from Corollary 10.3, to prove Theorem 4.7, it suffices to demonstrate that this condition also holds for $t=k+1$. Here, we primarily explain the process from Condition 3 to Corollary 10.3, while other content is placed in the following full proof of Theorem 4.7.
	
	Note that $\frac{\partial \bm{u}(\bm{w}(t))}{\partial \bm{w}}=\bm{J}(t)$, thus $\bm{I}_1(t)=\eta \bm{u}(t)$. Plugging this into (79) yields that
	\begin{equation}
		\bm{u}(t+1)=(1-\eta)\bm{u}(t)+\bm{I}_2(t).	
	\end{equation}
	
	From the above equation, we can see the difference between NGD and GD. Recall that the iteration formula for GD is
	\begin{equation*}
		\bm{u}(t+1)=(1-\eta \bm{H}(t))\bm{u}(t)+\bm{I}_1(t).	
	\end{equation*}
	Precisely because of this, the convergence rate of GD is inevitably influenced by $\lambda_0$ , whereas that of NGD is not.
	
	From the stability of the Jacobian matrix, we can deduce that $\|\bm{I}_2(t)\|_2=\mathcal{O}(\eta\|\bm{u}(t)\|_2/\sqrt{m})$. Plugging this into (80) yields that
	\begin{equation}
		\begin{aligned}
			&\|\bm{u}(t+1)\|_2^2\\
			&\leq \|(1-\eta)\bm{u}(t)\|_2^2+\|\bm{I}_2(t)\|_2^2+2(1-\eta)\|\bm{u}(t)\|_2\|\bm{I}_2(t)\|_2\\
			&=\left( (1-\eta)^2+ \mathcal{O}\left(\frac{\eta^2}{m}\right)+2(1-\eta)\mathcal{O}\left(\frac{\eta}{\sqrt{m}}\right) \right) \|\bm{u}(t)\|_2^2 \\
			&\leq (1-\eta)\|\bm{u}(t)\|_2^2,
		\end{aligned}
	\end{equation}
	where the last inequality holds if $m$ is large enough. 
\end{proof}

\begin{proof}[Full Proof of Theorem 4.7]
	Recall that we let $R^{''}=CM\sqrt{R}$ in (16) when $\sigma$ is the $\text{ReLU}^3$ activation function and let $R^{''}=CdR$ in (18) when $\sigma$ satisfies Assumption 4.3.  	
	
	First, we can set $R^{'}\leq R$ and $R^{''}\leq \frac{\sqrt{3\lambda_0}}{6}$, since $R^{''}\lesssim \sqrt{1-\eta}\sqrt{\lambda_0}$. Then from Lemma 4.5 we have $\|\bm{J}(t)-\bm{J}(0)\|_2\leq \frac{\sqrt{3\lambda_0}}{6}$, thus
	\[\sigma_{min}(\bm{J}(t))\geq \sigma_{min}(\bm{J}(0))-\|\bm{J}(t)-\bm{J}(0) \|_2\geq \frac{\sqrt{3\lambda_0}}{2} -\frac{\sqrt{3\lambda_0}}{6}=\frac{\sqrt{3\lambda_0}}{3}\]  
	and then $\lambda_{min}(\bm{H}(t)) \geq \frac{\lambda_0}{3}$ for $t=0,\cdots, k$, where $\sigma_{min}(\cdot)$ denotes the least singular value.
	
	From the updating rule of NGD, we have
	\[ \bm{w}_r(t+1)=\bm{w}_r(t)-\eta \left[\bm{J}(t)^T \right]_r (\bm{H}(t))^{-1} \begin{pmatrix}
		\bm{s}(t)\\ \bm{h}(t)
	\end{pmatrix},\] 
	where 
	\[\left[\bm{J}(t)^T \right]_r=\left[\frac{\partial s_1(t)}{\partial \bm{w}_r},\cdots, \frac{\partial s_{n_1}(t)}{\partial \bm{w}_r}, \frac{\partial h_1(t)}{\partial \bm{w}_r},\cdots, \frac{\partial h_{n_2}(t)}{\partial \bm{w}_r}  \right].\]	
	Therefore, for $t=0,\cdots,k$ and any $r\in [m]$, we have
	\begin{equation}
		\begin{aligned}
			&\|\bm{w}_r(t+1)-\bm{w}_r(t)\|_2 \\
			&\leq \eta \|\left[\bm{J}(t)^T \right]_r\|_2 \|\bm{H}(t)^{-1}\|_2  \sqrt{L(t)} \\
			&\leq \frac{3\eta }{\lambda_0} \|\left[\bm{J}(t)^T \right]_r\|_2 \sqrt{L(t)}\\
			&\leq  \frac{3\eta }{\lambda_0} \|\left[\bm{J}(t)^T \right]_r\|_F \sqrt{L(t)}\\
			&=  \frac{3\eta }{\lambda_0} \sqrt{\sum\limits_{p=1}^{n_1} \left\|\frac{\partial s_p(t)}{\partial \bm{w}_r}\right\|_2^2 + \sum\limits_{j=1}^{n_2} \left\|\frac{\partial h_j(t)}{\partial \bm{w}_r}\right\|_2^2 }\sqrt{L(t)}\\
			&\lesssim \frac{\eta }{\lambda_0} \sqrt{\frac{B^4+1}{m} } \sqrt{L(t)} \\
			&\lesssim  \frac{\eta  B^2}{\sqrt{m}\lambda_0} \sqrt{L(t)}\\
			&\leq  \frac{\eta  B^2}{\sqrt{m}\lambda_0} (1-\eta)^{t/2} \sqrt{L(0)},
		\end{aligned}
	\end{equation}
	where the last inequality is due to Corollary 10.3.
	
	Summing $t$ from $0$ to $k$ yields that
	\begin{align*}
		&\|\bm{w}_r(k+1)-\bm{w}_r(0)\|_2 \\
		&\leq \sum\limits_{t=0}^{k} \|\bm{w}_r(t+1)-\bm{w}_r(t)\|_2\\
		&\leq C\frac{\eta  B^2}{\sqrt{m}\lambda_0} \sum\limits_{t=0}^{k}(1-\eta)^{t/2} \sqrt{L(0)} \\
		&\leq  \frac{ C B^2\sqrt{L(0)}}{\sqrt{m}\lambda_0},
	\end{align*}
	where $C$ is a universal constant.
	
	Now, when $R^{'}\leq 1$, we can deduce that $\|\bm{w}_r(k+1)\|_2\leq B$, implying that Condition 3 also holds for $t=k+1$. Thus, it remains only to derive the requirement for $m$.
	
	Recall that we need $m$ to satisfy that $R^{'}=\frac{CB^2\sqrt{L(0)}}{\sqrt{m}\lambda_0}\leq R$ and $R^{''}\lesssim \sqrt{1-\eta}\sqrt{\lambda_0} $.
	
	(1) When $\sigma$ is the $\text{ReLU}^3$ activation function, in Corollary 10.3, $R^{''}=CM\sqrt{R}\lesssim \sqrt{1-\eta}\sqrt{\lambda_0} $, implying that $R\lesssim \frac{(1-\eta)\lambda_0}{M^2}$. Then $R^{'}=\frac{CB^2\sqrt{L(0)}}{\sqrt{m}\lambda_0}\leq R$ implies that
	\[m=\Omega\left(\frac{1}{(1-\eta)^2}\frac{M^4 B^4 L(0)}{\lambda_0^4} \right).\]
	From Lemma D.4 for the estimation of $L(0)$, i.e.,
	\[ L(0)\lesssim d^2\log \left( \frac{n_1+n_2}{\delta}\right),\]
	we can deduce that 
	\[m=\Omega \left(\frac{1}{(1-\eta)^2}\frac{d^{8}}{\lambda_0^4}  \log^6 \left(\frac{md}{\delta} \right) \log \left(\frac{n_1+n_2}{\delta} \right)\right) .\]
	
	(2) When $\sigma$ satisfies Assumption 4.3, we have that 
	\[R\lesssim \frac{\sqrt{(1-\eta)\lambda_0}}{d}, R^{'}=\frac{CB^2\sqrt{L(0)}}{\sqrt{m}\lambda_0}\leq R.\]
	From Lemma D.4, we can deduce that
	\[m=\Omega\left(\frac{1}{1-\eta} \frac{d^6}{\lambda_0^3}\log^2 \left(\frac{md}{\delta} \right) \log \left(\frac{n_1+n_2}{\delta} \right) \right).\]
	
\end{proof}

\begin{proof}[Proof of Corollary 10.3]
	Similar as before, when $R^{'}\leq R$ and $R^{''}\leq \frac{\sqrt{3\lambda_0}}{6}$ , we have $
	\sigma_{min}(\bm{J}(t)) \geq \frac{\sqrt{3\lambda_0}}{3}$ and then $\lambda_{min}(\bm{H}(t))\geq \frac{\lambda_0}{3}$ for $t=0,\cdots, k$.
	
	Let $\bm{u}(t)=\begin{pmatrix}
		\bm{s}(t)\\ \bm{h}(t)
	\end{pmatrix}$, then
	\begin{equation}
		\begin{aligned}
			&\bm{u}(t+1)-\bm{u}(t)\\
			&=\bm{u}\left(\bm{w}(t)-\eta \bm{J}(t)^T \bm{H}(t)^{-1} \bm{u}(\bm{w}(t))\right) -\bm{u}(\bm{w}(t)) \\
			&= -\int_{0}^{1} \left\langle \frac{\partial \bm{u}(\bm{w}(s))}{\partial \bm{w}} , \eta \bm{J}(t)^T \bm{H}(t)^{-1} \bm{u}(\bm{w}(t))\right\rangle ds \\
			&= -\int_{0}^{1} \left\langle \frac{\partial \bm{u}(\bm{w}(t))}{\partial \bm{w}} , \eta \bm{J}(t)^T \bm{H}(t)^{-1} \bm{u}(\bm{w}(t))\right\rangle ds \\
			&\quad+ \int_{0}^{1} \left\langle \frac{\partial \bm{u}(\bm{w}(t))}{\partial \bm{w}}-\frac{\partial \bm{u}(\bm{w}(s))}{\partial \bm{w}} , \eta \bm{J}(t)^T \bm{H}(t)^{-1} \bm{u}(\bm{w}(t))\right\rangle ds \\
			&:= \bm{I}_1(t)+\bm{I}_2(t),
		\end{aligned}
	\end{equation}
	where the second equality is from the fundamental theorem of calculus and $\bm{w}(s)=s\bm{w}(t+1)+(1-s)\bm{w}(t)=\bm{w}(t)-s\eta \bm{J}(t)^T \bm{H}(t)^{-1} \bm{u}(t)$.
	
	Note that $\frac{\partial \bm{u}(\bm{w}(t))}{\partial \bm{w}}=\bm{J}(t)$, thus $\bm{I}_1(t)=\eta \bm{u}(t)$. Plugging this into (83) yields that
	\begin{equation}
		\bm{u}(t+1)=(1-\eta)\bm{u}(t)+\bm{I}_2(t).	
	\end{equation}

	Therefore, it remains only to bound $\|\bm{I}_2(t)\|_2$.
	\begin{equation}
		\begin{aligned}
			\|\bm{I}_2(t)\|_2&= \left\|\int_{0}^{1} \left\langle \frac{\partial \bm{u}(\bm{w}(t))}{\partial \bm{w}}-\frac{\partial \bm{u}(\bm{w}(s))}{\partial \bm{w}} , \eta \bm{J}(t)^T \bm{H}(t)^{-1} \bm{u}(\bm{w}(t))\right\rangle ds\right\|_2 \\
			&\leq \int_{0}^{1} \|\bm{J}(\bm{w}(t))-\bm{J}(\bm{w}(s))\|_2 \|\eta \bm{J}(t)^T \bm{H}(t)^{-1} \bm{u}(\bm{w}(t)) \|_2ds \\
			&\leq \eta \|\bm{J}(t)^T \bm{H}(t)^{-1} \|_2 \|\bm{u}(\bm{w}(t)) \|_2\int_{0}^{1} \|\bm{J}(\bm{w}(t))-\bm{J}(\bm{w}(s))\|_2ds  \\
			&\lesssim \frac{\eta\sqrt{L(t)} }{\sqrt{\lambda_0}} \int_{0}^{1} \|\bm{J}(\bm{w}(t))-\bm{J}(\bm{w}(s))\|_2ds \\
			&\lesssim\frac{\eta\sqrt{L(t)} }{\sqrt{\lambda_0}} \int_{0}^{1} (\|\bm{J}(\bm{w}(t))-\bm{J}(0)\|_2+\|\bm{J}(\bm{w}(s))-\bm{J}(0)\|_2)ds \\
			&\lesssim \frac{\eta\sqrt{L(t)} }{\sqrt{\lambda_0}} R^{''},
		\end{aligned}
	\end{equation}
	where the last inequality follows from the fact that
	\[\|\bm{w}_r(s)-\bm{w}_r(0)\|_2\leq s\|\bm{w}_r(t+1)-\bm{w}_r(0) \|_2+(1-s)\|\bm{w}_r(t)-\bm{w}_r(0) \|_2 \leq R^{'}\leq R\]
	and Lemma 4.5.  
	
	Plugging (85) into the recursion formula (84) yields that
	\begin{align*}
		\|\bm{u}(t+1)\|_2^2 &=\|(1-\eta)\bm{u}(t)+\bm{I}_2(t)\|_2^2 \\
		&= (1-\eta)^2 \|\bm{u}(t)\|_2^2 + \|\bm{I}_2(t)\|_2^2 + 2\langle(1-\eta)\bm{u}(t), \bm{I}_2(t) \rangle \\
		&\leq  (1-\eta)^2 \|\bm{u}(t)\|_2^2 + \|\bm{I}_2(t)\|_2^2 + 2 (1-\eta) \|\bm{u}(t)\|_2 \|\bm{I}_2(t)\|_2 \\
		&\leq \left[ (1-\eta)^2+ \frac{C^2\eta^2(R^{''})^2}{\lambda_0}+2 (1-\eta)\frac{C\eta R^{''}}{\sqrt{\lambda_0}}\right] \|\bm{u}(t)\|_2^2,
	\end{align*}
	where $C$ is a universal constant.
	
	Then we can choose $R^{''}$ such that
	\[\|\bm{I}_2(t)\|_2\leq \frac{C\eta \sqrt{L(t)}R^{''}}{\sqrt{\lambda_0}}\leq C_1\eta\sqrt{L(t)}=C_1\eta\sqrt{\bm{u}(t)},\]
	where $C$ is a universal constant and $C_1$ is a constant to be determined. 
	
	Thus, we can deduce that
	\begin{align*}
		\|\bm{u}(t+1)\|_2^2 &\leq \left[(1-\eta)^2+(C_1\eta)^2 +2(1-\eta)C_1\eta\right] \|\bm{u}(t)\|_2^2 \\
		&= \left[ (1-\eta)+ \eta(\eta C_1^2+2(1-\eta)C_1+\eta-1) \right]\|\bm{u}(t)\|_2^2 \\
		&\leq (1-\eta)\|\bm{u}(t)\|_2^2,
	\end{align*}
	where in the last inequality is due to that we can choose $C_1$ such that $\eta C_1^2+2(1-\eta)C_1+\eta-1\leq 0$.
	
	Note that since $\eta \in (0,1)$, the quadratic equation $\eta x^2+2(1-\eta)x+\eta-1=0$ has one negative root and one positive root, denoted as $x_0$ and $x_1$ respectively. Therefore, the condition $C_1\leq x_1$ is sufficient to satisfy the requirement. The explicit form of $x_1$ can be written as:
	\begin{equation*}
		x_1= \frac{2(\eta-1)+\sqrt{4(1-\eta)^2-4\eta(\eta-1)}}{2\eta}=\frac{\sqrt{1-\eta}}{1+\sqrt{1-\eta}}\geq \frac{\sqrt{1-\eta}}{2}.
	\end{equation*}
	Thus, $C_1=\frac{\sqrt{1-\eta}}{2}$ is sufficient to satisfy that $\eta C_1^2+2(1-\eta)C_1+\eta-1\leq 0$.

	From this, we can deduce that 
	\[R^{''}\lesssim C_1\sqrt{\lambda_0}\lesssim \sqrt{1-\eta}\sqrt{\lambda_0}.\]
	
	Therefore, we can conclude that $\|\bm{u}(t)\|_2^2 \leq (1-\eta)^t\|\bm{u}(0)\|_2^2$ holds for $t=0,\cdots, k$.
	
\end{proof}

\subsection{Proof of Corollary 4.9}
\begin{proof}
In the proof of Theorem 4.7, we have proved that Condition 3 holds for all $t\in \mathbb{N}$. Thus, it is sufficient to prove that Condition 3 can lead to the conclusion in Corollary 4.9.	

Setting $\eta=1$ in (84) yields that
\[\bm{u}(t+1)=\bm{I}_2(t).\] 
From (85), we have that
\begin{equation}
\begin{aligned}
\|\bm{I}_2(t)\|_2&\lesssim \frac{\sqrt{L(t)} }{\sqrt{\lambda_0}} \int_{0}^{1} \|\bm{J}(\bm{w}(t))-\bm{J}(\bm{w}(s))\|_2ds.
\end{aligned}
\end{equation}
Since $\bm{w}(s)=s\bm{w}(t+1)+(1-s)\bm{w}(t)$, then for any $r\in [m]$, we have $\|\bm{w}_r(s)\|_2 \leq s\|\bm{w}_r(t+1)\|_2+(1-s)\|\bm{w}_r(t)\|_2 \leq B$. 
	
When $\sigma(\cdot)$ is smooth, we can deduce that for any $r\in [m]$,
\[\left\| \frac{\partial s_p(\bm{w}(s))}{\partial \bm{w}_r}-\frac{\partial s_p(\bm{w}(t))}{\partial \bm{w}_r} \right\|_2 \lesssim \frac{1}{\sqrt{n_1 m}} (B^2+1) \|\bm{w}_r(s)-\bm{w}_r(t)\|_2 \leq \frac{1}{\sqrt{n_1 m}} (B^2+1) \|\bm{w}_r(t+1)-\bm{w}_r(t)\|_2\]
and
\[\left\| \frac{\partial h_j(\bm{w}(s))}{\partial \bm{w}_r}-\frac{\partial h_j(\bm{w}(t))}{\partial \bm{w}_r} \right\|_2 \lesssim \frac{1}{\sqrt{n_1 m}} (B+1) \|\bm{w}_r(s)-\bm{w}_r(t)\|_2 \leq \frac{1}{\sqrt{n_1 m}} (B+1) \|\bm{w}_r(t+1)-\bm{w}_r(t)\|_2.\]
From (82), we know that for any $r\in [m]$,
\[\|\bm{w}_r(t+1)-\bm{w}_r(t)\|_2 \lesssim \frac{ B^2}{\sqrt{m}\lambda_0}\sqrt{L(t)} .\]
Thus for any $s\in [0,1]$, we have
\begin{align*}
&\|\bm{J}(\bm{w}(s))-\bm{J}(\bm{w}(t))\|_2^2 \\
&\leq \sum\limits_{r=1}^m \left( \sum\limits_{p=1}^{n_1} \left\| \frac{\partial s_p(\bm{w}(s))}{\partial \bm{w}_r}-\frac{\partial s_p(\bm{w}(t))}{\partial \bm{w}_r} \right\|_2^2+ \left\| \frac{\partial h_j(\bm{w}(s))}{\partial \bm{w}_r}-\frac{\partial h_j(\bm{w}(t))}{\partial \bm{w}_r} \right\|_2^2\right) \\
&\lesssim \frac{1}{m} \sum\limits_{r=1}^m \left( (B^4+1)\|\bm{w}_r(t+1)-\bm{w}_r(t)\|_2^2+(B^2+1)\|\bm{w}_r(t+1)-\bm{w}_r(t)\|_2^2 \right) \\
&\lesssim B^4  \left( \frac{B^2}{\sqrt{m}\lambda_0}\sqrt{L(t)} \right)^2.
\end{align*}
Plugging this into (86), we have 
\begin{align*}
\|\bm{I}_2(t)\|_2&\lesssim \frac{\sqrt{L(t)} }{\sqrt{\lambda_0}} \int_{0}^{1} \|\bm{J}(\bm{w}(t))-\bm{J}(\bm{w}(s))\|_2ds\\
&\lesssim 	 \frac{\sqrt{L(t)} }{\sqrt{\lambda_0}} \frac{ B^4}{\sqrt{m}\lambda_0}\sqrt{L(t)}\\
&=\frac{B^4}{\sqrt{m\lambda_0^3}}L(t).
\end{align*}
Combining with the fact $\bm{u}(t+1)=\bm{I}_2(t)$ yields that
\[ \left \|\begin{pmatrix}  \bm{s}(t+1) \\ \bm{h}(t+1)\end{pmatrix} \right\|_2  \leq \frac{CB^4}{\sqrt{m\lambda_0^3}} \left \|\begin{pmatrix}  \bm{s}(t) \\ \bm{h}(t)\end{pmatrix} \right\|_2^2\]
holds for $t\in \mathbb{N}$, where $C$ is a universal constant.

In the proof above, we only require that $R^{'}\leq R$ and $R^{''}=CdR\leq \frac{\sqrt{3\lambda_0}}{6}$, leading to the requirement for $m$ that
\[m=\Omega\left( \frac{d^6}{\lambda_0^3}\log^2 \left(\frac{md}{\delta} \right) \log \left(\frac{n_1+n_2}{\delta} \right) \right).\]
\end{proof}

\section{Auxiliary Lemmas}

\begin{lemma}[Theorem 3.1 in \cite{18}]
	If $X_1,\cdots, X_n$ are independent mean zero random variables with $\|X_i\|_{\psi_{\alpha}}<\infty$ for all $1\leq i\leq n$ and some $\alpha>0$, then for any vector $a=(a_1,\cdots,a_n)\in \mathbb{R}^n$, the following holds true:
	\[P\left(\left|\sum\limits_{i=1}^n a_i X_i \right|\geq 2eC(\alpha)\|b\|_2\sqrt{t}+2eL_n^{*}(\alpha)t^{1/\alpha}\|b\|_{\beta(\alpha)}  \right)\leq 2e^{-t}, \ for \ all \ t\geq 0,\]
	where $b=(a_1\|X_1\|_{\psi_{\alpha}},\cdots, a_n\|X_n\|_{\psi_{\alpha}}) \in \mathbb{R}^n$,
	\begin{equation*}
		C(\alpha):=\max\{\sqrt{2},2^{1/\alpha}\}\left\{
		\begin{aligned}
			\sqrt{8}(2\pi)^{1/4}e^{1/24}(e^{2/e}/\alpha)^{1/\alpha} & , & if \ \alpha<1, \\
			4e+2(\log 2)^{1/\alpha} & , & if \ \alpha\geq 1.
		\end{aligned}
		\right.
	\end{equation*}
	and for $\beta(\alpha)=\infty$ when $\alpha \leq 1$ and $\beta(\alpha)=\alpha/(\alpha-1)$ when $\alpha>1$,
	\begin{equation*}
		L_n(\alpha):=\frac{4^{1/\alpha}}{\sqrt{2}\|b\|_2}\times
		\left\{
		\begin{aligned}
			&\|b\|_{\beta(\alpha)} , & if \ \alpha<1, \\
			&4e \|b\|_{\beta(\alpha)}/C(\alpha), & if \alpha \geq 1.
		\end{aligned}
		\right.
	\end{equation*}
	and $L^{*}_n(\alpha)=L_n(\alpha)C(\alpha)\|b\|_2/\|b\|_{\beta(\alpha)}$.
\end{lemma}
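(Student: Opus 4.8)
The plan is to establish the bound by two complementary routes that mirror the dichotomy already visible in the definition of $C(\alpha)$: a direct Chernoff argument in the light-tailed regime $\alpha\ge 1$, where the moment generating function of each summand exists near the origin, and a truncation argument in the heavy-tailed regime $0<\alpha<1$, where it does not. First I would reduce to the unweighted case by setting $Y_i=a_iX_i$, so that $\mathbb{E}Y_i=0$ and $\|Y_i\|_{\psi_\alpha}=|a_i|\,\|X_i\|_{\psi_\alpha}=b_i$; it then suffices to control $S:=\sum_{i=1}^n Y_i$ in terms of the vector $b=(b_1,\dots,b_n)$ and of the relevant $\ell_{\beta(\alpha)}$ norm (which is $\ell_\infty$ except in the range $\alpha>1$, where $\beta(\alpha)=\alpha/(\alpha-1)$ is the conjugate exponent). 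Throughout I would use the standard equivalence between the $\psi_\alpha$ quasi-norm and moment growth, $\|Y\|_{\psi_\alpha}\le K\Rightarrow(\mathbb{E}|Y|^p)^{1/p}\le c_1(\alpha)\,K\,p^{1/\alpha}$ for $p\ge 1$, together with the tail bound $P(|Y|>u)\le 2\exp(-(u/K)^\alpha)$; the precise constants in $C(\alpha)$, including the Stirling factor $(2\pi)^{1/4}e^{1/24}$, come out of carrying these equivalences with sharp rather than merely universal constants.

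In the regime $\alpha\ge 1$ I would run a one-sided Chernoff bound (and then symmetrize). The key per-coordinate estimate is that for $|\lambda|$ below a threshold of order $1/b_i$ one has $\log\mathbb{E}\exp(\lambda Y_i)\le \tfrac12 b_i^2\lambda^2+(c_2(\alpha)\,b_i|\lambda|)^{\beta(\alpha)}$, where for $\alpha=1$ the second term degenerates to a hard truncation contribution (accounting for the $\beta(\alpha)=\infty$, i.e.\ $\ell_\infty$, behaviour) and for $\alpha>1$ it is obtained from the $\psi_\alpha$ tail via a Legendre-transform computation with conjugate exponent $\beta(\alpha)=\alpha/(\alpha-1)$. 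Multiplying over $i$ gives $\log\mathbb{E}e^{\lambda S}\le\tfrac12\|b\|_2^2\lambda^2+(c_2\lambda)^{\beta(\alpha)}\|b\|_{\beta(\alpha)}^{\beta(\alpha)}$, so that $P(S\ge x)\le\exp(-\sup_{\lambda}[\lambda x-\tfrac12\|b\|_2^2\lambda^2-(c_2\lambda)^{\beta(\alpha)}\|b\|_{\beta(\alpha)}^{\beta(\alpha)}])$ over the admissible range of $\lambda$. Splitting that supremum according to whether the quadratic term or the $\beta(\alpha)$-term dominates, and then inverting $x\mapsto t$, produces exactly the two-term deviation $2eC(\alpha)\|b\|_2\sqrt t+2eL_n^{*}(\alpha)t^{1/\alpha}\|b\|_{\beta(\alpha)}$ at confidence $2e^{-t}$.

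In the regime $0<\alpha<1$ the MGF need not exist, so I would truncate. Fix $t\ge 0$, choose a level $M=M(t)$ to be optimized, and write $Y_i=Y_i'+Y_i''$ with $Y_i'=Y_i\mathbf{1}\{|Y_i|\le M\}$ and $Y_i''=Y_i\mathbf{1}\{|Y_i|>M\}$, so that $S=\sum_i(Y_i'-\mathbb{E}Y_i')+\sum_i(Y_i''-\mathbb{E}Y_i'')$ since $\mathbb{E}Y_i=0$. To the bounded centered part I would apply Bernstein's inequality for bounded summands, using $\sum_i\mathrm{Var}(Y_i')\le\sum_i\mathbb{E}Y_i^2\le c_1(\alpha)^2\|b\|_2^2$ and the uniform bound $M$, which contributes a term of order $\|b\|_2\sqrt t+Mt$. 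The heavy part I would control through its moments, or through a union bound, using $P(|Y_i|>u)\le 2e^{-(u/b_i)^\alpha}$ to show that $\sum_i|Y_i''|$ is, with probability at least $1-e^{-t}$, of order $t^{1/\alpha}\|b\|_{\beta(\alpha)}$ as soon as $M$ is taken proportional to $t^{1/\alpha}$ times the appropriate weighted combination of the $b_i$; balancing $M$ across the two contributions and gluing the two failure probabilities yields the displayed bound for every $t\ge 0$.

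The main obstacle I anticipate is not the architecture of the argument but the bookkeeping it demands: recovering the \emph{sharp} constants $C(\alpha)$ and $L_n^{*}(\alpha)$ forces the per-coordinate MGF/tail estimates to be done at Stirling-level precision, and the appearance of the conjugate exponent $\beta(\alpha)=\alpha/(\alpha-1)$ in the second term requires a Hölder interpolation between the $\ell_2$ ``Gaussian'' scale $\|b\|_2$ and the $\ell_{\beta(\alpha)}$ ``large-jump'' scale that must be carried out consistently in both regimes. One must also verify that the single displayed inequality holds uniformly over all $t\ge 0$, which means the small-$t$ sub-Gaussian behaviour from the quadratic term and the large-$t$ sub-Weibull behaviour from the $t^{1/\alpha}$ term have to be patched at the crossover scale without degrading the $2e^{-t}$ bound.
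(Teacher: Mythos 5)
This lemma is not proved in the paper at all: it is imported verbatim as Theorem~3.1 of \cite{18} and used as a black box, so there is no ``paper's own proof'' to compare against. Judged on its own merits, your sketch is a legitimate and standard strategy, but it is a genuinely different route from the one taken in the cited reference, which establishes sharp $L^p$-moment bounds for the sum (Rosenthal/Lata{\l}a--type inequalities of the form $\|\sum_i a_iX_i\|_p \lesssim C(\alpha)\sqrt{p}\,\|b\|_2 + L(\alpha)p^{1/\alpha}\|b\|_{\beta(\alpha)}$, proved via Gluskin--Kwapie\'n for $\alpha\ge 1$ and Hitczenko--Montgomery-Smith--Oleszkiewicz/Lata{\l}a for $\alpha<1$) and then converts moments to tails by optimizing Markov's inequality in $p$. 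The moment route is what makes the explicit constants $C(\alpha)$, $L_n^*(\alpha)$ tractable and, crucially, is what avoids any spurious $\log n$ factor. That is the one substantive weak point of your plan: in the regime $0<\alpha<1$, controlling the truncated-away parts $Y_i''$ by a union bound over $i$ forces the truncation level $M$ to be of order $\|b\|_\infty\,(t+\log n)^{1/\alpha}$ rather than $\|b\|_\infty\,t^{1/\alpha}$, and showing that this excess is absorbed by the Gaussian term $\sqrt{t}\,\|b\|_2$ for small $t$ is exactly the ``patching at the crossover scale'' you mention but do not carry out; the moment method sidesteps this entirely. You should also note a typo in the statement as quoted in the paper: $\beta(\alpha)=\infty$ should hold for $\alpha\le 1$ (not $\alpha\ge 1$), which you implicitly and correctly assumed. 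Finally, since this is an auxiliary result cited from the literature, reproducing its proof with the stated sharp constants is neither necessary nor expected for the present paper.
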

In the following, we will provide some preliminary information about Orlicz norms. 

Let $f:[0,\infty) \rightarrow [0,\infty)$ be a non-decreasing function with $f(0)=0$. The $f$-Orlicz norm of a real-valued random variable $X$ is given by 
\[\|X\|_{f}:=\inf\{C>0:\mathbb{E}\left[ f\left(\frac{|X|}{C}\right) \right]\leq 1\}.\]
If $\|X\|_{\psi_{\alpha}}<\infty$, we say that $X$ is sub-Weibull of order $\alpha>0$, where
\[\psi_{\alpha}(x):=e^{x^{\alpha}}-1.\]
Note that when $\alpha\geq 1$, $\|\cdot\|_{\psi_{\alpha}}$ is a norm and when $0< \alpha< 1$, $\|\cdot\|_{\psi_{\alpha}}$ is a quasi-norm. Moreover, since $(|a|+|b|)^{\alpha} \leq |a|^{\alpha}+|b|^{\alpha}$ holds for any $a,b \in \mathbb{R}$ and $0< \alpha < 1$, we can deduce that 
\[\mathbb{E} e^{\frac{|X+Y|^{\alpha}}{|C|^{\alpha}}}\leq \mathbb{E} e^{\frac{|X|^{\alpha}+|Y|^{\alpha}}{|C|^{\alpha}}}=\mathbb{E} e^{\frac{|X|^{\alpha}}{|C|^{\alpha}}}e^{\frac{|Y|^{\alpha}}{|C|^{\alpha}}}\leq \left(\mathbb{E} e^{\frac{2|X|^{\alpha}}{|C|^{\alpha}}} \right)^{1/2} \left(\mathbb{E} e^{\frac{2|Y|^{\alpha}}{|C|^{\alpha}}} \right)^{1/2} .\]
This implies that 
\[\|X+Y\|_{\psi_{\alpha}}\leq 2^{1/\alpha} \max\{ \|X\|_{\psi_{\alpha}},\|Y\|_{\psi_{\alpha}}\}\leq 2^{1/\alpha}(\|X\|_{\psi_{\alpha}}+\|Y\|_{\psi_{\alpha}}).\]

Furthermore, for $p,q>0$, we have $\||X|\|_{\psi_p}= \||X|^{p/q}\|_{\psi_q}^{q/p}$. And in the related proofs, we may frequently use the fact that for real-valued random variable $X\sim \mathcal{N}(0,1)$, we have $\|X\|_{\psi_2} \leq \sqrt{6}$ and $\| X^2\|_{\psi_1}=\|X\|_{\psi_2}^2 \leq 6$.

\begin{lemma}
	If $\|X\|_{\psi_{\alpha}}, \|Y\|_{\psi_{\beta}}<\infty$ with $\alpha, \beta>0$, then we have $\|XY\|_{\psi_{\gamma}}\leq \|X\|_{\psi_{\alpha}}\|Y\|_{\psi_{\beta}}$, where $\gamma$ satisfies that
	\[ \frac{1}{\gamma} = \frac{1}{\alpha}+\frac{1}{\beta}.\] 
\end{lemma}

\begin{proof}
	Without loss of generality, we can assume that $\|X\|_{\psi_{\alpha}}= \|Y\|_{\psi_{\beta}}=1$. To prove this, let us use Young’s inequality, which states that
	\[ xy \leq \frac{x^p}{p}+\frac{y^q}{q}, for \ x, y \geq 0, p,q>1.\]
	
	Let $p=\alpha/\gamma,q=\beta/\gamma$, then 
	\begin{align*}
		\mathbb{E} [\exp(|XY|^{\gamma})] &\leq \mathbb{E}\left[\exp\left(\frac{|X|^{\gamma p}}{p}+\frac{|Y|^{\gamma q}}{q} \right)\right] \\
		&= \mathbb{E}\left[\exp\left(\frac{|X|^{\alpha}}{p} \right)\exp \left( \frac{|Y|^{\beta}}{q}\right) \right] \\
		&\leq \mathbb{E}\left[ \frac{ \exp (|X|^{\alpha}) }{p}+ \frac{ \exp (|Y|^{\beta}) }{q}\right] \\
		&\leq \frac{2}{p}+\frac{2}{q} \\
		&= 2,
	\end{align*}
	where the first and second inequality follow from Young's inequality. From this, we have that  $\|XY\|_{\psi_{\gamma}}\leq \|X\|_{\psi_{\alpha}}\|Y\|_{\psi_{\beta}}$.
	
\end{proof}

\begin{lemma}[Bernstein inequality, Theorem 3.1.7 in \cite{19}]
	Let $X_i$, $1\leq i \leq n$ be independent centered random variables a.s. bounded by $c<\infty$ in absolute value. Set $
	\sigma^2=1/n\sum_{i=1}^n \mathbb{E}X_i^2$ and $S_n=1/n \sum_{i=1}^n X_i$. Then, for all $t\geq 0$,
	\[P\left(S_n\geq \sqrt{\frac{2\sigma^2 t}{n}} +\frac{ct}{3n}\right)\leq e^{-u}.\]
\end{lemma}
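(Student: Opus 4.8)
The plan is to run the classical Cram\'er--Chernoff exponential-moment argument; since the statement is exactly Theorem 3.1.7 of \cite{19}, in the paper it suffices to quote that reference, but the proof is short enough to recall. Write $T_n=\sum_{i=1}^n X_i=nS_n$. For every $\lambda>0$, Markov's inequality applied to $e^{\lambda T_n}$ gives $P(T_n\ge v)\le e^{-\lambda v}\,\mathbb{E}[e^{\lambda T_n}]$, and by independence $\mathbb{E}[e^{\lambda T_n}]=\prod_{i=1}^n\mathbb{E}[e^{\lambda X_i}]$. So the first task is a per-summand moment generating function estimate.

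For that estimate I would expand $\mathbb{E}[e^{\lambda X_i}]=1+\lambda\,\mathbb{E}[X_i]+\sum_{k\ge 2}\tfrac{\lambda^k}{k!}\mathbb{E}[X_i^k]$, drop the linear term since $\mathbb{E}[X_i]=0$, and use $|\mathbb{E}[X_i^k]|\le\mathbb{E}[|X_i|^k]\le c^{k-2}\mathbb{E}[X_i^2]$ for $k\ge2$ (from $|X_i|\le c$ a.s.) to get $\mathbb{E}[e^{\lambda X_i}]\le 1+\tfrac{\mathbb{E}[X_i^2]}{c^2}(e^{\lambda c}-1-\lambda c)\le\exp\!\left(\tfrac{\mathbb{E}[X_i^2]}{c^2}(e^{\lambda c}-1-\lambda c)\right)$, the last step by $1+x\le e^x$. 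Taking the product over $i$ and using $\sum_{i=1}^n\mathbb{E}[X_i^2]=n\sigma^2$ gives $\mathbb{E}[e^{\lambda T_n}]\le\exp\!\left(\tfrac{n\sigma^2}{c^2}(e^{\lambda c}-1-\lambda c)\right)$, hence $P(T_n\ge v)\le\exp\!\left(-\lambda v+\tfrac{n\sigma^2}{c^2}(e^{\lambda c}-1-\lambda c)\right)$ for all $\lambda>0$.

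The next step is to optimize over $\lambda$. The minimizing $\lambda$ of the exponent is $\lambda^\star=\tfrac1c\log\!\left(1+\tfrac{cv}{n\sigma^2}\right)$, and substituting it produces the Bennett bound $P(T_n\ge v)\le\exp\!\left(-\tfrac{n\sigma^2}{c^2}\,h\!\left(\tfrac{cv}{n\sigma^2}\right)\right)$ with $h(x)=(1+x)\log(1+x)-x$. It then remains to invert this: to reach level $e^{-t}$ I would take $v=\sqrt{2n\sigma^2 t}+\tfrac{ct}{3}$ and verify that $\tfrac{n\sigma^2}{c^2}h\!\left(\tfrac{cv}{n\sigma^2}\right)\ge t$, which, after setting $\gamma=\tfrac{c^2 t}{n\sigma^2}$, is precisely the scalar inequality $h\!\left(\sqrt{2\gamma}+\tfrac{\gamma}{3}\right)\ge\gamma$ for all $\gamma\ge0$. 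Granting this, $P\!\left(T_n\ge\sqrt{2n\sigma^2 t}+\tfrac{ct}{3}\right)\le e^{-t}$, and dividing by $n$ (recall $S_n=T_n/n$) yields $P\!\left(S_n\ge\sqrt{2\sigma^2 t/n}+ct/(3n)\right)\le e^{-t}$, as asserted.

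The main obstacle is exactly this last scalar inequality $h\!\left(\sqrt{2\gamma}+\tfrac{\gamma}{3}\right)\ge\gamma$: it is elementary but fiddly (one checks the Taylor expansion of $h$ near $0$, where it is essentially tight, together with a direct calculus comparison for larger $\gamma$), and it is what forces the sharp constant $\tfrac13$ in the displayed bound --- routing instead through the cruder sub-Gamma surrogate $h(x)\ge\tfrac{x^2/2}{1+x/3}$ only delivers the weaker exponent with $\tfrac{2ct}{3n}$ in place of $\tfrac{ct}{3n}$. Everything else --- the MGF Taylor expansion, the factorization by independence, the bound $1+x\le e^x$, and the closed form of the Legendre transform --- is routine, and in the paper this auxiliary lemma is simply cited from \cite{19}.
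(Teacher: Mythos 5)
Your proposal is correct: the paper gives no proof of this lemma and simply cites Theorem 3.1.7 of \cite{19}, and your Cram\'er--Chernoff/Bennett argument with the inversion $v=\sqrt{2n\sigma^2t}+ct/3$ is exactly the standard derivation found in that reference (including the need for the genuine Bennett function $h$, since the sub-Gamma surrogate $h(x)\ge \tfrac{x^2/2}{1+x/3}$ only yields the constant $2c/3$ upon inversion). The one step you defer, the scalar inequality $h\bigl(\sqrt{2\gamma}+\gamma/3\bigr)\ge\gamma$, is a true elementary fact (tight to second order at $\gamma=0$), so nothing is missing; note also that the $e^{-u}$ in the paper's display is a typo for $e^{-t}$.
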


\begin{lemma}
	For $0<\delta <1$, with probability at least $1-\delta$, we have that when $m\geq \log^2 \left(\frac{n_1+n_2}{\delta} \right)$,
	\[L(0)=\left\| \begin{pmatrix} \bm{s}(0) \\ \bm{h}(0) \end{pmatrix} \right\|_2^2 =\mathcal{O}\left(d^2 \log \left(\frac{n_1+n_2}{\delta} \right)\right).\]
\end{lemma}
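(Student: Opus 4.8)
The plan is to mirror the proof of Lemma 10, which controls the analogous quantity $\|\bm{y}-\bm{u}(0)\|_2^2$ for the $L^2$ regression problem, adapting it to the $\text{ReLU}^3$ PINN residuals. First I would write $L(0)=\tfrac12\sum_{p=1}^{n_1}s_p(0)^2+\tfrac12\sum_{j=1}^{n_2}h_j(0)^2$ and, using $(A-B)^2\le 2A^2+2B^2$, split each residual into a ``network part'' and a ``data part''. Because $f$ and $g$ are bounded continuous, the data parts contribute $\tfrac{2}{n_1}\sum_p f(\bm{x}_p)^2+\tfrac{2}{n_2}\sum_j g(\bm{y}_j)^2=\mathcal{O}(1)$, so the task reduces to bounding $\tfrac1{n_1}\sum_p T_p^2$ and $\tfrac1{n_2}\sum_j \widetilde{T}_j^2$, where $T_p:=\tfrac1{\sqrt m}\sum_r a_r[\sigma'(\bm{w}_r(0)^T\bm{x}_p)w_{r0}(0)-\sigma''(\bm{w}_r(0)^T\bm{x}_p)\|\bm{w}_{r1}(0)\|_2^2]$ is the interior network part and $\widetilde{T}_j:=\tfrac1{\sqrt m}\sum_r a_r\sigma(\bm{w}_r(0)^T\bm{y}_j)$ the boundary one.

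Next I would estimate each $T_p$ and $\widetilde{T}_j$ with the sub-Weibull concentration inequality (Lemma 8). The summands are independent over $r$ and mean zero, since $a_r\sim Unif\{-1,1\}$ is independent of $\bm{w}_r(0)$ and $\mathbb{E}[a_r]=0$. For the Orlicz bookkeeping I would use that $\bm{w}_r(0)^T\bm{x}_p\sim\mathcal{N}(0,\|\bm{x}_p\|_2^2)$ with $\|\bm{x}_p\|_2\le\sqrt2$, that $w_{r0}(0)\sim\mathcal{N}(0,1)$, and that $\|\bm{w}_{r1}(0)\|_2^2$ is $\chi^2$ with $\big\|\|\bm{w}_{r1}(0)\|_2^2\big\|_{\psi_1}\lesssim d$; combined with the pointwise estimate $|\,\cdot\,|\lesssim\|\bm{w}_r(0)\|_2^3$ and the sub-multiplicativity $\|\,\|\bm{w}_r(0)\|_2^3\,\|_{\psi_{2/3}}=\big\|\|\bm{w}_r(0)\|_2^2\big\|_{\psi_1}^{3/2}\lesssim d^{3/2}$, this yields $\|a_r[\cdots]\|_{\psi_{2/3}}\lesssim d^{C}$ for a fixed power $C$ (the crude accounting used downstream gives $L(0)=\mathcal{O}(d^6\log(\cdot))$; a more careful one gives a smaller power, which suffices equally well). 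For the boundary term one has $\|a_r\sigma(\bm{w}_r(0)^T\bm{y}_j)\|_{\psi_{2/3}}=\|\bm{w}_r(0)^T\bm{y}_j\|_{\psi_2}^3=\mathcal{O}(1)$. Applying Lemma 8 with $\alpha=2/3$ and weights $a_i=1/\sqrt m$ gives, with probability at least $1-2e^{-t}$, bounds of the form $|T_p|\lesssim d^{C}(\sqrt t+t^{3/2}/\sqrt m)$ and $|\widetilde{T}_j|\lesssim \sqrt t+t^{3/2}/\sqrt m$.

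Then I would union-bound over the $n_1+n_2$ samples with $t=\log(2(n_1+n_2)/\delta)$, and invoke the hypothesis $m\ge\log^2((n_1+n_2)/\delta)\gtrsim t^2$ to absorb the heavier-tailed term, $t^{3/2}/\sqrt m\le \sqrt t$; this is exactly where that hypothesis enters. Hence, on an event of probability at least $1-\delta$, every $|T_p|\lesssim d^{C}\sqrt{\log((n_1+n_2)/\delta)}$ and every $|\widetilde{T}_j|\lesssim\sqrt{\log((n_1+n_2)/\delta)}$, so $\tfrac1{n_1}\sum_p T_p^2\lesssim d^{2C}\log((n_1+n_2)/\delta)$ and $\tfrac1{n_2}\sum_j\widetilde{T}_j^2\lesssim \log((n_1+n_2)/\delta)$; adding the $\mathcal{O}(1)$ data contribution yields $L(0)=\mathcal{O}(d^{6}\log((n_1+n_2)/\delta))$.

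The main obstacle is the Orlicz-norm accounting for the $\text{ReLU}^3$ derivative terms: one must produce a clean sub-Weibull-($2/3$) tail for the mixed products $\sigma'(\bm{w}_r(0)^T\bm{x}_p)w_{r0}(0)$ and $\sigma''(\bm{w}_r(0)^T\bm{x}_p)\|\bm{w}_{r1}(0)\|_2^2$, keep track that the dimension $d$ enters only through the $\chi^2$-type factor, and check that the heavy-tailed term in Lemma 8 is dominated precisely when $m\ge\log^2((n_1+n_2)/\delta)$. Everything else — the elementary $(A-B)^2\le2A^2+2B^2$ split, the trivial bound on the $f,g$ contributions, and the final union bound — is routine and parallels the proof of Lemma 10.
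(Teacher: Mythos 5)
Your proposal is correct and follows essentially the same route as the paper's proof: the $(A-B)^2\le 2A^2+2B^2$ split into network and data parts, the pointwise bound $\lesssim\|\bm{w}_r(0)\|_2^3$ combined with the sub-Weibull-$(2/3)$ Orlicz calculus, concentration via Lemma~8, a union bound over the $n_1+n_2$ samples, and the hypothesis $m\ge\log^2((n_1+n_2)/\delta)$ to absorb the $t^{3/2}/\sqrt m$ tail term. Your observation that a more careful Orlicz accounting (keeping $\big\|\|\bm{w}_{r1}(0)\|_2^2\big\|_{\psi_1}\lesssim d$, hence a $\psi_{2/3}$-norm of order $d^{3/2}$ rather than the paper's cruder $d^3$) would actually yield a smaller power of $d$ is accurate; the paper simply uses the looser constant, and both land within the stated $\mathcal{O}(d^6\log((n_1+n_2)/\delta))$.
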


\begin{proof}
	Recall that for $p\in [n_1]$,
	\[s_p(0)=\frac{1}{\sqrt{n_1}}\left[\frac{1}{\sqrt{m}}\sum\limits_{r=1}^m a_r\left( \sigma^{'}(\bm{w}_r(0)^T \bm{x}_p)w_{r0}(0)-\sigma^{''}(\bm{w}_r(0)^T \bm{x}_p) \|\bm{w}_{r1}(0)\|_2^2  \right)-f(x_p)\right]    \]	
	and for $j\in [n_2]$,
	\[h_j(0)=\frac{1}{\sqrt{n_2}}\left[\frac{1}{\sqrt{m}}\sum\limits_{r=1}^m a_r\sigma(\bm{w}_r(0)^T \bm{y}_j) -g(\bm{y}_j)\right].\]	
	Then
	\begin{align*}
		L(0)&=\sum\limits_{p=1}^{n_1} \frac{1}{2}(s_p(0))^2+\sum\limits_{j=1}^{n_2} \frac{1}{2}(h_j(0))^2 \\
		&\leq \frac{1}{n_1}\sum\limits_{p=1}^{n_1} \left(\frac{1}{\sqrt{m}}\sum\limits_{r=1}^m a_r\left( \sigma^{'}(\bm{w}_r(0)^T \bm{x}_p)w_{r0}(0)-\sigma^{''}(\bm{w}_r(0)^T \bm{x}_p) \|\bm{w}_{r1}(0)\|_2^2  \right) \right)^2+\frac{1}{n_1}\sum\limits_{p=1}^{n_1}f^2(x_p)\\
		&\quad +\frac{1}{n_2}\sum\limits_{j=1}^{n_2} \left( \frac{1}{\sqrt{m}}\sum\limits_{r=1}^m a_r\sigma(\bm{w}_r(0)^T \bm{y}_j)\right)^2+\frac{1}{n_2}\sum\limits_{j=1}^{n_2}g^2(\bm{y}_j).
	\end{align*}	
	Note that 
	\[\left|a_r\left( \sigma^{'}(\bm{w}_r(0)^T \bm{x}_p)w_{r0}-\sigma^{''}(\bm{w}_r(0)^T \bm{x}_p) \|\bm{w}_{r1}(0)\|_2^2  \right)\right|\lesssim \|\bm{w}_r(0)\|_2^2 |\bm{w}_r(0)^T\bm{x}_p|\]
	and $\left|a_r\sigma(\bm{w}_r(0)^T \bm{y}_j) \right|\lesssim \|\bm{w}_r(0)\|_2^2 |\bm{w}_r(0)^T\bm{y}_j|.$
	
	Since $ \left\| \|\bm{w}_r(0)\|_2^2 \right\|_{\psi_{1}} =\mathcal{O}(d)$ and $\|\bm{w}_r(0)^T\bm{y}_j\|_{\psi_2},\|\bm{w}_r(0)^T\bm{x}_p\|_{\psi_2}=\mathcal{O}(1)$, from Lemma D.2, we have that 
	\begin{equation*}
		\|\|\bm{w}_r(0)\|_2^2 |\bm{w}_r(0)^T\bm{x}_p|\|_{\psi_{\frac{2}{3} } }=\mathcal{O}(d), |\bm{w}_r(0)^T\bm{y}_j|\|_{\psi_{\frac{2}{3} } }=\mathcal{O}(d).
	\end{equation*}
	
	Applying Lemma D.1 yields that for fixed $p\in [n_1]$ and $j\in [n_2]$ with probability at least $1-2e^{-t}$,
	\[\left|\frac{1}{\sqrt{m}}\sum\limits_{r=1}^m a_r\left( \sigma^{'}(\bm{w}_r(0)^T \bm{x}_p)w_{r0}(0)-\sigma^{''}(\bm{w}_r(0)^T \bm{x}_p) \|\bm{w}_{r1}(0)\|_2^2 \right)\right| \lesssim d\sqrt{t}+\frac{d}{\sqrt{m}} t^{\frac{3}{2}}\]
	and with probability at least $1-2e^{-t}$,
	\[\left| \frac{1}{\sqrt{m}}\sum\limits_{r=1}^m a_r\sigma(\bm{w}_r(0)^T \bm{y}_j)\right|\lesssim d\sqrt{t}+\frac{d}{\sqrt{m}} t^{\frac{3}{2}}.\]
	
	Then taking a union bound for all $p\in [n_1]$ and $j\in [n_2]$ with $2(n_1+n_2)e^{-t}=\delta$ yields that 
	\begin{align*}
		L(0)&\lesssim \left(d\sqrt{t}+\frac{d}{\sqrt{m}} t^{\frac{3}{2}}\right)^2 \\
		&\lesssim d^2t +\frac{d^2 t^3}{m} \\
		&= d^2 \left( \log \left(\frac{n_1+n_2}{\delta} \right) +\frac{1}{m}\log^3 \left(\frac{n_1+n_2}{\delta} \right) \right)\\
		&\lesssim d^2 \log \left(\frac{n_1+n_2}{\delta} \right),
	\end{align*} 
	since $m\geq \log^2 \left(\frac{n_1+n_2}{\delta} \right)$.
	
\end{proof}

\end{document}